\def\url@leostyle{%
  \@ifundefined{selectfont}{\def\UrlFont{\sf}}{\def\UrlFont{\small\ttfamily}}}
\newtheorem{theorem}{Theorem}
\newtheorem{definition}{Definition}
\newtheorem{proposition}{Proposition}
\newtheorem{example}{Example}
\newtheorem{open question}{Open Question}
\newcommand{\bel}[1]{[#1]}
\newcommand{\contract}{\div} 
\newcommand{\combi}{\preceq_{1 \oplus 2}}
\newcommand{\mods}[1]{[\![#1]\!]}
\newcommand{\Cn}[1]{\textrm{Cn}(#1)}
\newcommand{\STQ}{\oplus_{\mathrm{STQ}}}
\newenvironment{proof}{\par\vspace*{0.5ex}\noindent\rm {\bf Proof:} }
{
}
\title{Extending the Harper Identity to Iterated Belief Change}
\author{Richard Booth \\
Cardiff University \\
Cardiff, UK \\
boothr2@cardiff.ac.uk 
\And
Jake Chandler \\
La Trobe University \\
Melbourne, Australia \\
jacob.chandler@latrobe.edu.au}
\begin{document}


 \fontsize{12}{14}\selectfont 
\pagestyle{plain}
\onecolumn

\begin{centering}

{\huge Extending the Harper Identity to Iterated Belief Change}

\vspace{2em}

\begin{tabular}{ c c c }
  {\Large Richard Booth}  & ~~~~~~~~~~~~~~~~~~~~  & {\Large Jake Chandler} \\
  {\large Cardiff University} & ~ & {\large La Trobe University} \\
  {\large Cardiff, UK} & ~ & {\large Melbourne, Australia} \\
  {\large boothr2@cardiff.ac.uk} & ~ & {\large jacob.chandler@latrobe.edu.au} \\  
\end{tabular}

\vspace{3.5em}

\end{centering}

\begin{quote}

{\large {\bf Abstract} The field of iterated belief change has focused mainly on {\em revision}, with the other main operator of AGM belief change theory, i.e., {\em contraction} receiving relatively little attention. In this paper we extend the Harper Identity from single-step change to define iterated contraction in terms of iterated revision. Specifically, just as the Harper Identity provides a recipe for defining the {\em belief set} resulting from contracting $A$ in terms of {\em (i)} the initial belief set and {\em (ii)} the belief set resulting from revision by $\neg A$, we look at ways to define the {\em plausibility ordering} over worlds resulting from contracting $A$ in terms of {\em (iii)} the initial plausibility ordering, and {\em (iv)} the plausibility ordering resulting from revision by $\neg A$. After noting that the most straightforward such extension leads to a trivialisation of the space of permissible orderings, we provide a family of operators for combining plausibility orderings that avoid such a result. These operators are characterised in our domain of interest  by a pair of intuitively compelling properties, which turn out to enable the derivation of a number of iterated contraction postulates from postulates for iterated revision. We finish by observing that a salient member of this family allows for the derivation of counterparts for contraction of some well known iterated revision operators, as well as for defining new iterated contraction operators. }
\end{quote}

~

\section{Introduction}

Since the publication of Darwiche and Pearl's seminal paper on the topic in the mid 90's \cite{darwiche1997logic}, a substantial body of research has now accumulated on the problem of iterated belief revision--the problem of how to adjust one's corpus of beliefs in response to a temporal sequence of successive additions to its members \cite{booth2006admissible,booth2011revise,boutilier1996iterated,jin2007iterated,nayak2003dynamic,peppas2014panorama}.

In contrast, work on the parallel problem of iterated {\em contraction}--the problem of how to adjust one's corpus in response to a sequence of successive retractions--was only initiated far more recently and remains comparatively underdeveloped \cite{chopra2008iterated,hansson2012global,hild2008measurement,nayak2006taking,nayak2007iterated,ramachandran2012three,rott2009shifting}.

One obvious way to level out this discrepancy would be to introduce a principle that enables us to derive, from constraints on iterated revision, corresponding constraints on iterated contraction. But while there exists a well known and widely accepted postulate connecting {\em single-shot} revision and contraction, the `Harper Identity' \cite{harper1976rational}, there has been no discussion to date of how to extend this principle to the iterated case.\footnote{\normalsize It should be noted that \cite{nayak2006taking} and Ramachandran {\em et al}  \cite{ramachandran2012three} do propose a principle that they call the `New Harper Identity'. But while this may be suggestive of an attempted extension of the Harper Identity to the iterated case, the New Harper Identity simply appears to be a representation, in terms of plausibility orderings, of a particular set of postulates for iterated contraction.
\vspace{0.25em}
}  
One idea, which we pursue in this paper, is that whereas the Harper Identity says the {\em belief set} resulting from contracting sentence $A$ should be formed by combining  {\em (i)} the initial belief set and {\em (ii)} the belief set resulting from revision by $\neg A$, we look for ways to define the {\em plausibility ordering} over worlds resulting from contracting $A$ in terms of {\em (iii)} the initial plausibility ordering, and {\em (iv)} the plausibility ordering resulting from revision by $\neg A$.

In the present paper, we first of all show that the simplest extension of the Harper Identity to iterated belief change is too strong a principle, being inconsistent with basic principles of belief dynamics on pains of  triviality      (Section 3). This leads us to consider a set of collectively weaker principles, which we show to characterise, in our domain of interest, a family of binary combination operators for total preorders that we call {\em TeamQueue} combinators (Section 4). After recapitulating a number of existing postulates from both iterated revision and contraction, we show how these two lists of postulates can be linked via the use of any TeamQueue combinator (Section 5). Then we prove some more specific results of this type using a particular TeamQueue combinator that we call {\em Synchronous TeamQueue} (Section 6). Finally we conclude and mention some ideas for future work. Proofs of the various propositions and theorems have been relegated to the appendix.



\section{Preliminaries}

We represent the beliefs of an agent by a so-called belief state $\Psi$, which we treat as a primitive. $\Psi$ determines a belief {\em set} $\bel{\Psi}$, a deductively closed set of sentences, drawn from a finitely generated propositional, truth-functional language $L$. The set of classical logical consequences of a sentence $A \in L$ is denoted by $\Cn{A}$. The set of propositional worlds is denoted by $W$, and the set of models of a given sentence $A$ is denoted by $\mods{A}$.

The dynamics of belief states are modelled by two operations--contraction and revision, which respectively return the posterior belief states $\Psi * A$ and $\Psi \contract A$ resulting from an adjustment of the prior belief state $\Psi$ to accommodate, respectively, the inclusion and exclusion of $A$. 

We assume that these operations satisfy the so-called AGM postulates \cite{alchourron1985logic}, which enforce a principle of `minimal mutilation' of the initial belief set in meeting the relevant exclusion or inclusion constraint. Regarding revision, we have:
\begin{tabbing}
BLAHBLI: \=\kill

(AGM$\ast$1) \> $\textrm{Cn}([\Psi*A])\subseteq [\Psi*A]$ \\[0.1cm]

(AGM$\ast$2) \> $A\in [\Psi* A]$\\[0.1cm]

(AGM$\ast$3) \> $[\Psi* A]\subseteq\textrm{Cn}([\Psi]\cup\{ A\})$\\[0.1cm]

(AGM$\ast$4) \> If $\neg A\notin [\Psi]$, then $\textrm{Cn}([\Psi]\cup\{ A\})\subseteq[\Psi* A]$\\[0.1cm]

(AGM$\ast$5) \>  If $A$ is consistent, then so too is $[\Psi*A]$\\[0.1cm]

(AGM$\ast$6) \> If $\textrm{Cn}(A)=\textrm{Cn}(B)$, then $[\Psi*A]=[\Psi*B]$\\[0.1cm]

(AGM$\ast$7) \> $[\Psi*(A\wedge B)]\subseteq\textrm{Cn}([\Psi*A]\cup\{B\})$\\[0.1cm]

(AGM$\ast$8) \> If $\neg B\notin [\Psi*A]$, then $\textrm{Cn}([\Psi*A]\cup\{B\})\subseteq [\Psi*(A\wedge B)]$\\[-0.25em]
\end{tabbing} 
\vspace{-0.5em}
Regarding contraction:
\begin{tabbing}
BLAHBLI: \=\kill

(AGM$\contract$1) \> $\textrm{Cn}([\Psi\contract A])\subseteq [\Psi\contract A]$\\[0.1cm]

(AGM$\contract$2) \> $[\Psi\contract  A]\subseteq[\Psi]$\\[0.1cm]

(AGM$\contract$3) \> If $A\notin [\Psi]$, then $[\Psi\contract A]=[\Psi]$\\[0.1cm]

(AGM$\contract$4) \> If $A\notin \textrm{Cn}(\varnothing)$, then $A\notin [\Psi\contract A]$\\[0.1cm]

(AGM$\contract$5) \>  If $A\in[\Psi]$, then $[\Psi]\subseteq\textrm{Cn}([\Psi\contract A]\cup\{ A\})$\\[0.1cm]

(AGM$\contract$6) \> If $\textrm{Cn}(A)=\textrm{Cn}(B)$, then $[\Psi\contract A]=[\Psi\contract B]$\\[0.1cm]

(AGM$\contract$7) \> $[\Psi\contract A]\cap[\Psi\contract B]\subseteq [\Psi\contract A\wedge B]$\\[0.1cm]

(AGM$\contract$8) \> If $A\notin [\Psi\contract A\wedge B]$, then $[\Psi\contract A\wedge B]\subseteq[\Psi\contract A]$\\[-0.25em]
\end{tabbing} 
\vspace{-0.5em}
We also assume that they 
are linked in the one-step case by the Harper Identity (HI):
\begin{tabbing}
BLAHBLI: \=\kill
(HI)   \> $\bel{\Psi \contract A} =\bel{\Psi} \cap \bel{\Psi * \neg A}$ \\[-0.25em]
\end{tabbing} 
\vspace{-0.5em}
We follow a number of authors in making use of a `semantic' representation of the `syntactic' one-step revision and contraction dispositions associated with a particular belief state $\Psi$ in terms of a total preorder ({\em tpo}) $\preceq_{\Psi}$ over the set $W$ of possible worlds. Intuitively $\preceq_\Psi$ orders the worlds according to {\em plausibility} (with more plausible worlds lower down the ordering). Then the set $\min(\preceq_{\Psi}, \mods{A}):=\{x\in \mods{A}\mid \forall y\in \mods{A}, x\preceq_\Psi y\}$ of minimal $A$-worlds corresponds to the set of worlds in which all and only the sentences in $[\Psi*A]$ are true, with $\mods{\bel{\Psi}} = \min(\preceq_{\Psi}, W)$ for any $\Psi$ (see, for instance, the representation results in \cite{grove1988two,katsuno1991propositional}). Viewed in this way, the question of iterated belief change becomes a question about the dynamics of $\preceq_\Psi$ under contraction and revision, with HI translating into the constraint $\min(\preceq_{\Psi\contract A}, W) =$ \mbox{$\min(\preceq_{\Psi}, W)$} $\cup \min(\preceq_{\Psi\ast \neg A},W)$. We will denote the set of all tpos over $W$ by $T(W)$. The strict part of $\preceq_{\Psi}$ will be denoted by $\prec_{\Psi}$ and its symmetric part \mbox{by $\sim_{\Psi}$.} 

A tpo $\preceq_{\Psi}$ can also be represented by an ordered partition $\langle S_1, S_2, \ldots S_m\rangle$ of $W$, with $x \preceq_{\Psi} y$ iff $r(x, \preceq_{\Psi}) \leq$ \mbox{$r(y, \preceq_{\Psi})$,} where $r(x, \preceq_{\Psi})$ denotes the `rank' of $x$ with respect to $\preceq_{\Psi}$ and is defined by taking $S_{r(x, \preceq_{\Psi})}$ to be the cell in the partition that contains $x$. 

%

\section{A triviality result}

What should an agent believe after performing a contraction followed by a revision? We would like to extend the Harper Identity to cover this case. 

In syntactic terms, the most straightforward suggestion would be to simply extend HI to cover not just one's beliefs, but also one's commitments to retain or lose various beliefs upon subsequent revisions:
\begin{tabbing}
BLAHBLI: \=\kill
(EHI)   \> $\bel{(\Psi \contract A) \ast B}=\bel{\Psi \ast B} \cap \bel{(\Psi \ast \neg A) \ast B}$ \\[-0.25em]
\end{tabbing} 
\vspace{-0.5em}
If $B \equiv \top$ then we obtain HI as a special case. Note that under weak assumptions, EHI can equivalently be restated in terms of contraction only:
\begin{proposition}\label{EHICtoR}
EHI entails 
\begin{tabbing}
BLAHBLI: \=\kill
(EHIC)   \> $[(\Psi \contract A) \contract B] = [\Psi]\cap[\Psi * \neg B] \cap [\Psi*\neg A]\cap [(\Psi * \neg A) *\neg B]$ \\[-0.25em]
\end{tabbing} 
 \vspace{-0.5em}
and is equivalent to it in the presence of AGM$\ast$3 and the Levi Identity:
\begin{tabbing}
BLAHBLI: \=\kill
(LI)   \>  $[\Psi*A]=\Cn{[\Psi\contract \neg A] \cup \{A\}}$.\\[-0.25em]
\end{tabbing} 
\end{proposition}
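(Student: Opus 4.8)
The plan is to establish the two halves of the proposition separately: first that EHI yields EHIC outright, and then that EHIC, together with AGM$\ast$3 and LI, recovers EHI, which gives the advertised equivalence.

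For the forward direction I would exploit the fact already noted in the text that HI is just the $B \equiv \top$ instance of EHI. Starting from the left-hand side $\bel{(\Psi \contract A) \contract B}$, I would apply HI to the outer contraction, taking the prior state to be $\Psi \contract A$, to obtain $\bel{\Psi \contract A} \cap \bel{(\Psi \contract A) \ast \neg B}$. I would then rewrite the first conjunct by a second application of HI as $\bel{\Psi} \cap \bel{\Psi \ast \neg A}$, and the second conjunct by EHI (with $B$ instantiated to $\neg B$) as $\bel{\Psi \ast \neg B} \cap \bel{(\Psi \ast \neg A) \ast \neg B}$. Intersecting the pieces reproduces precisely the four-way intersection of EHIC. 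This direction is essentially bookkeeping and presents no real difficulty.

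For the converse I would push the revisions through LI. Applying LI to the state $\Psi \contract A$ and sentence $B$ gives $\bel{(\Psi \contract A) \ast B} = \Cn{\bel{(\Psi \contract A) \contract \neg B} \cup \{B\}}$, and feeding $\neg B$ into EHIC (appealing to AGM$\ast$6 to replace $\neg \neg B$ by $B$ in the revision terms) rewrites the inner contraction as $X := \bel{\Psi} \cap \bel{\Psi \ast B} \cap \bel{\Psi \ast \neg A} \cap \bel{(\Psi \ast \neg A) \ast B}$. The goal then reduces to the set identity $\Cn{X \cup \{B\}} = \bel{\Psi \ast B} \cap \bel{(\Psi \ast \neg A) \ast B}$. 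The left-to-right inclusion is immediate, since $X$ is contained in each of the two conjuncts on the right, $B$ belongs to both by AGM$\ast$2, and both are deductively closed.

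The reverse inclusion is the crux, and it is here that AGM$\ast$3 does the work. Given a sentence $C$ in $\bel{\Psi \ast B} \cap \bel{(\Psi \ast \neg A) \ast B}$, I would use AGM$\ast$3 to place $C$ in $\Cn{\bel{\Psi} \cup \{B\}}$ and in $\Cn{\bel{\Psi \ast \neg A} \cup \{B\}}$, so that by the deduction theorem $B \to C$ lies in both $\bel{\Psi}$ and $\bel{\Psi \ast \neg A}$; and since $C$ already lies in the closed sets $\bel{\Psi \ast B}$ and $\bel{(\Psi \ast \neg A) \ast B}$, so does $B \to C$. Hence $B \to C \in X$, and modus ponens with $B$ places $C$ in $\Cn{X \cup \{B\}}$. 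The main obstacle is exactly this inclusion: the right-hand intersection mixes the revised sets with the raw material of $X$, and the move that unlocks it is to pass each membership fact back through AGM$\ast$3 to the conditional $B \to C$, which is the form that survives intersection with all four components of $X$.
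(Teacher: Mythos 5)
Your proof is correct and follows essentially the same route as the paper's: the forward direction is the same two-step application of HI followed by EHI, and the converse uses LI to reduce EHI to the set identity $\Cn{X \cup \{B\}} = \bel{\Psi \ast B} \cap \bel{(\Psi \ast \neg A) \ast B}$, with the key reverse inclusion obtained exactly as in the paper by passing $C$ through AGM$\ast$3 and the deduction theorem to the conditional $B \to C$, which survives the four-way intersection. The only (cosmetic) difference is that you instantiate EHIC at $\neg B$ and absorb the resulting double negation in the revision terms via AGM$\ast$6, whereas the paper silently identifies $\neg\neg B$ with $B$ in the contraction term.
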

 \vspace{-0.5em}
However, as G\"ardenfors' classic triviality result and its subsequent refinements \cite{gardenfors1986belief,rott1989conditionals,etlin2009problem} have taught us, the unqualified extension of principles of belief dynamics to cover conditional beliefs is a risky business. And as it turns out, the above proposal is too strong: it can be shown that, under mild constraints on single shot revision and contraction, it places unacceptable restrictions on the space of permissible belief sets resulting from single revisions:
\begin{proposition}\label{EHICtriv}

In the presence of AGM$\ast$5,  AGM$\ast$6 and AGM$\contract$3, EHI (and more specifically, HI, alongside its left-to-right half  $\bel{(\Psi \contract A) \ast B}\subseteq\bel{\Psi \ast B} \cap \bel{(\Psi \ast \neg A) \ast B}$) entails that there does not exist a belief state $\Psi$ such that: (i) $[\Psi]=\mbox{Cn}(p\wedge q)$, (ii) $[\Psi*\neg p]=\mbox{Cn}(\neg p\wedge q)$ and (iii) $[\Psi *p\leftrightarrow\neg q]=\mbox{Cn}(p\leftrightarrow\neg q)$, where $p$ and $q$ are propositional atoms.\footnote{ \normalsize  The problem that we have just noted for EHI is closely related to the observation that an intersection of two sets of `rational doxastic conditionals' need not itself be rational, which is familiar from the literature on default reasoning \cite{lehmann1992does}.}

\end{proposition}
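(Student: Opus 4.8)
The plan is to assume such a $\Psi$ exists and to compute the single belief set $\bel{(\Psi \contract p) \ast (p \leftrightarrow \neg q)}$ in two ways that disagree on whether it contains $q$. It helps to record the model-theoretic content of the hypotheses over the four worlds for $\{p,q\}$: by (i) $\mods{\bel\Psi}=\{pq\}$, by (ii) $\mods{\bel{\Psi \ast \neg p}}=\{\bar p q\}$, and, since $\mods{p \leftrightarrow \neg q}=\{p\bar q,\bar p q\}$, by (iii) $\mods{\bel{\Psi \ast (p \leftrightarrow \neg q)}}=\{p\bar q,\bar p q\}$. The convenient contraction target is $p$, because (ii) supplies exactly the revision $\Psi \ast \neg p$ that HI calls for.

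First I would compute $\bel{\Psi \contract p}$ via HI. Since $\bel\Psi=\Cn{p\wedge q}$ and $\bel{\Psi \ast \neg p}=\Cn{\neg p \wedge q}$,
\[
\bel{\Psi \contract p}=\bel\Psi \cap \bel{\Psi \ast \neg p}=\Cn{p\wedge q}\cap\Cn{\neg p \wedge q}=\Cn{q},
\]
the final step being propositional logic (a sentence is entailed by both $p\wedge q$ and $\neg p \wedge q$ iff it is entailed by their disjunction $q$). Now the left-to-right half of EHI, instantiated at $A:=p$ and $B:=p \leftrightarrow \neg q$, gives, after discarding the second conjunct of the bound,
\[
\bel{(\Psi \contract p) \ast (p \leftrightarrow \neg q)}\subseteq \bel{\Psi \ast (p \leftrightarrow \neg q)}=\Cn{p \leftrightarrow \neg q},
\]
using (iii) for the equality. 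As $q\notin\Cn{p \leftrightarrow \neg q}$ (indeed $p\bar q$ falsifies $q$), this first route yields $q\notin\bel{(\Psi \contract p) \ast (p \leftrightarrow \neg q)}$.

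The second, and more delicate, route shows the opposite, and this is where AGM$\contract$3 does the work. I would apply HI a second time, now to the state $\Psi \contract p$ and contracting by $\neg(p \leftrightarrow \neg q)$, i.e.\ by $p \leftrightarrow q$:
\[
\bel{(\Psi \contract p)\contract(p \leftrightarrow q)}=\bel{\Psi \contract p}\cap\bel{(\Psi \contract p)\ast (p \leftrightarrow \neg q)},
\]
where AGM$\ast$6 licenses rewriting $\neg(p \leftrightarrow q)$ as $p \leftrightarrow \neg q$. But $p \leftrightarrow q\notin\bel{\Psi \contract p}=\Cn{q}$ (it fails at $\bar p q$), so AGM$\contract$3 gives $\bel{(\Psi \contract p)\contract(p \leftrightarrow q)}=\bel{\Psi \contract p}=\Cn{q}$. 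Substituting, $\Cn{q}=\Cn{q}\cap\bel{(\Psi \contract p)\ast (p \leftrightarrow \neg q)}$, so $\Cn{q}\subseteq\bel{(\Psi \contract p)\ast (p \leftrightarrow \neg q)}$ and in particular $q\in\bel{(\Psi \contract p)\ast (p \leftrightarrow \neg q)}$ --- contradicting the first route. (AGM$\ast$5 enters only to keep the revised sets consistent, so that these membership facts are non-degenerate.)

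I expect the main obstacle to be finding this instantiation rather than verifying it. The semantic reading makes the tension transparent and is how I would search for witnesses: HI forces $\min(\preceq_{\Psi \contract p},W)=\{pq,\bar p q\}$, so $p\bar q$ is non-minimal and $\bar p q\prec_{\Psi \contract p}p\bar q$; thus $\Psi \contract p$ ought to revise by $p \leftrightarrow \neg q$ to $\{\bar p q\}$ alone, keeping $q$. Yet (iii) says $\Psi$ is indifferent between $p\bar q$ and $\bar p q$, so the left-to-right half of EHI drags $p\bar q$ back among the favoured $(p \leftrightarrow \neg q)$-worlds and drops $q$. The insight needed is precisely that contracting $p$ promotes $q$ to a protected status (both surviving worlds $pq,\bar p q$ verify it), which collides with $\Psi$'s prior even-handedness between $p\bar q$ and $\bar p q$; once the pair $A=p$, $B=p \leftrightarrow \neg q$ is identified, the rest is routine.
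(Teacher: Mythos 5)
Your proof is correct and follows essentially the same route as the paper's: the paper first abstracts the chain HI $\Rightarrow$ AGM$\contract$3 $\Rightarrow$ HI-on-the-contracted-state $\Rightarrow$ left-to-right EHI into a general ``vacuity'' lemma ($[\Psi]\cap[\Psi*A]\subseteq[\Psi*B]$ whenever $B\in[\Psi*A]$ and $A$ is consistent) and then instantiates it at $A=\neg p$, $B=p\leftrightarrow\neg q$, whereas you inline exactly that chain directly on the concrete instance. The only cosmetic difference is that you localise the contradiction in $\bel{(\Psi\contract p)\ast(p\leftrightarrow\neg q)}$ rather than in $[\Psi*(p\leftrightarrow\neg q)]$, and you correctly observe that on the concrete instance AGM$\ast$5 is not actually needed since $\Cn{q}$ is computed explicitly.
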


\noindent The above strategy and its shortcomings can equivalently be recast in semantic terms. Let us call a function $\oplus$ that takes pairs of tpos as inputs and yields a tpo as an output a tpo combination operator, or a  `{\em combinator}'. For convenience, we denote  $\preceq_1\! \oplus\! \preceq_2$ by `$\combi$'. 

In extending the Harper Identity to the iterated case, we are essentially looking for an appropriate combinator  $\oplus$ such that:
\begin{tabbing}
BLAHBLI: \=\kill
(COMBI) \> $\preceq_{\Psi \contract A}=\preceq_{\Psi}\!\oplus\! \preceq_{\Psi \ast\neg A}$\\
\end{tabbing} 
\vspace{-0.5em}
Now, just as HI corresponds, given COMBI, to the following semantic principle:
\begin{tabbing}
BLAHBLI: \=\kill
($\oplus$HI) \> $\min(\combi, W) = \min(\preceq_1, W) \cup \min(\preceq_2, W)$ \\[-0.25em]
\end{tabbing} 
\vspace{-0.5em}
EHI amounts to 
\begin{tabbing}
BLAHBLI: \=\kill
($\oplus$EHI) \> For all $S\subseteq W$, $\min(\combi, S)=\min(\preceq_1, S)\cup\min(\preceq_2, S)$  \\[-0.25em]
\end{tabbing} 
\vspace{-0.5em}
What our result above effectively demonstrates is that no combinator $\oplus$ satisfies $\oplus$EHI unless we place undesirable restrictions on its domain: $\oplus$EHI is too much to ask for.

We will continue approaching our issue of interest from a predominantly  semantic perspective for the remainder of the paper. In the following section, we retreat from $\oplus$EHI to offer an altogether weaker set of minimal postulates for $\oplus$, before taking a look at a concrete family of `Team Queuing' combinators that satisfy them. We first establish a general characterisation of this family before showing that our set of minimal postulates suffices to characterise it in our restricted domain of interest.

\section{Combinators: the bottom line}

Since we are in the business of extending the Harper Identity, we will begin  by requiring satisfaction of $\oplus$HI. We call combinators that satisfy this property `{\em basic}' combinators. 

In addition, even though EHI is too strong, certain weakenings of it do seem to be compelling. Specifically, it seems appropriate to require that  our combination method leads to the following weak lower and upper  bound principles:
\begin{tabbing}
BLAHBLI: \=\kill
(LB)   \> $\bel{\Psi \ast B} \cap \bel{(\Psi \ast \neg A) \ast B} \subseteq
\bel{(\Psi \contract A) \ast B}$ \\[0.1cm] 
(UB) \> $\bel{(\Psi \contract A) \ast B} \subseteq \bel{\Psi \ast B} \cup \bel{(\Psi \ast \neg A) \ast B}$\\[-0.25em]
\end{tabbing} 
\vspace{-0.5em}
We note that the former corresponds to the half of EHI that was {\em not} implicated in our earlier triviality result. Given COMBI, these will be ensured by requiring, respectively, the following upper and lower bounds on $\min(\combi, S)$ for any $S \subseteq W$ (note an upper, resp.\ lower bound on world-sets yields a lower, resp.\ upper bound on belief sets):
\begin{tabbing}
BLAHBLI: \=\kill
($\oplus$UB) \> $\min(\combi, S) \subseteq \min(\preceq_1, S) \cup \min(\preceq_2, S)$ \\[0.1cm] 
($\oplus$LB)   \> Either $\min(\preceq_1, S) \subseteq \min(\combi, S)$ or  $\min(\preceq_2, S) \subseteq \min(\combi, S)$ \\[-0.25em]
\end{tabbing} 
\vspace{-0.5em}

\noindent
$\oplus$UB and $\oplus$LB can be repackaged using only binary comparisons:
\begin{proposition}\label{SWPUandULB}
$\oplus$UB and $\oplus$LB are respectively equivalent to the following:
\begin{tabbing}
BLAHBLI: \=\kill
($\oplus$SPU+)   \> If $x \prec_1 y$ and $z \prec_2 y$ then either $x \prec_{1 \oplus 2} y$ or  $z \prec_{1 \oplus 2} y$  \\[0.1em]
($\oplus$WPU+)   \> If $x \preceq_1 y$ and $z \preceq_2 y$ then either $x \combi y$ or  $z \combi y$ \\[-0.25em]
\end{tabbing} 
\end{proposition}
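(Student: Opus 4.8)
The plan is to establish the two stated equivalences independently, the first pairing $\oplus$UB with $\oplus$SPU+ and the second pairing $\oplus$LB with $\oplus$WPU+. The workhorse throughout is the elementary fact that, since $\preceq_1$, $\preceq_2$ and $\combi$ are \emph{total} preorders over the finite set $W$, an element $y \in S$ fails to lie in $\min(\preceq, S)$ precisely when some $v \in S$ satisfies $v \prec y$; dually, $y \in \min(\preceq, S)$ iff $y \preceq v$ for every $v \in S$. To pass from the set-level formulations ($\oplus$UB, $\oplus$LB), quantified over all $S \subseteq W$, to the pointwise binary formulations ($\oplus$SPU+, $\oplus$WPU+) and back, I would in each case instantiate $S$ with the three-element set $\{x,y,z\}$ supplied by the binary condition.

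For $\oplus$UB $\Leftrightarrow$ $\oplus$SPU+, the direction $\oplus$SPU+ $\Rightarrow$ $\oplus$UB is a direct contraposition: given $w \in \min(\combi, S)$, suppose $w \notin \min(\preceq_1,S)$ and $w \notin \min(\preceq_2,S)$; the fact above yields $x,z \in S$ with $x \prec_1 w$ and $z \prec_2 w$, whence $\oplus$SPU+ (taking $y = w$) supplies an element of $S$ strictly $\combi$-below $w$, contradicting $w \in \min(\combi,S)$. For the converse, given $x \prec_1 y$ and $z \prec_2 y$, set $S = \{x,y,z\}$: then $y$ is excluded from both $\min(\preceq_1,S)$ and $\min(\preceq_2,S)$, so $\oplus$UB forces $y \notin \min(\combi,S)$, i.e.\ some $v \in S$ has $v \prec_{1 \oplus 2} y$; irreflexivity of $\prec_{1\oplus 2}$ rules out $v = y$, leaving exactly the disjunction $x \prec_{1\oplus 2} y$ or $z \prec_{1\oplus 2} y$ demanded by $\oplus$SPU+.

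The equivalence $\oplus$LB $\Leftrightarrow$ $\oplus$WPU+ is the part I expect to require the most care, since $\oplus$LB's disjunction ranges over \emph{whole} minimal sets of $S$ whereas $\oplus$WPU+'s disjunction is local to a single triple; reconciling the two scopes is where the argument could go astray. I would run both directions by contraposition. Assuming $\neg\,\oplus$LB for some $S$, I pick witnesses $a \in \min(\preceq_1,S)\setminus\min(\combi,S)$ and $b \in \min(\preceq_2,S)\setminus\min(\combi,S)$, together with any $m \in \min(\combi,S)$; minimality of $m$ together with $a,b \notin \min(\combi,S)$ gives $m \prec_{1\oplus 2} a$ and $m \prec_{1\oplus 2} b$ (were $a \combi m$, transitivity would place $a$ in $\min(\combi,S)$), while minimality of $a,b$ in their own orders gives $a \preceq_1 m$ and $b \preceq_2 m$. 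Setting $(x,y,z) = (a,m,b)$ then violates $\oplus$WPU+, as $m \prec_{1\oplus2} a$ and $m \prec_{1\oplus 2} b$ directly give that neither $x \combi y$ nor $z \combi y$ holds. Conversely, from a failure of $\oplus$WPU+, i.e.\ $x \preceq_1 y$, $z \preceq_2 y$ yet $y \prec_{1\oplus 2} x$ and $y \prec_{1\oplus 2} z$, I again take $S = \{x,y,z\}$; here $\min(\combi, S) = \{y\}$, and the premises $x \preceq_1 y$, $z \preceq_2 y$ prevent $y$ from being the unique $\preceq_1$- or $\preceq_2$-minimum (otherwise $y \prec_1 x$ or $y \prec_2 z$, contradicting those premises), so each of $\min(\preceq_1,S)$ and $\min(\preceq_2,S)$ contains a point other than $y$, falsifying both inclusions in $\oplus$LB.

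The only genuine subtlety, and the step worth checking carefully, is the handling of the \emph{non-strict} hypotheses $x \preceq_1 y$ and $z \preceq_2 y$ in that last argument: one must confirm they block $y$ from being a \emph{strict} minimum in the relevant order even when $x \sim_{1\oplus 2} y$-type ties occur in the underlying orders, which is precisely what guarantees that the minimal sets are not absorbed into $\{y\} = \min(\combi,S)$. Everything else reduces to the min/strict-predecessor bookkeeping recorded at the outset.
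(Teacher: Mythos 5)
Your proof is correct and follows essentially the same route as the paper's: both directions of each equivalence are handled by instantiating $S=\{x,y,z\}$ for the set-to-pointwise passage and by contraposition via the ``$y\notin\min(\preceq,S)$ iff some $v\in S$ has $v\prec y$'' characterisation for the converse. The only cosmetic difference is in deriving $\neg\oplus$WPU+ from $\neg\oplus$LB, where you take a common witness $m\in\min(\combi,S)$ directly while the paper merges two separate witnesses using totality of $\combi$ --- the same idea either way.
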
 
\vspace{-0.5em}
$\oplus$SPU+ and $\oplus$WPU+ owe their names to their being respective strengthenings of the following principles of strict and weak preference unanimity, which are analogues of  the `weak Pareto' and `Pareto weak preference' principles found in the social choice literature:
\begin{tabbing}
BLAHBLI: \=\kill
($\oplus$SPU)   \> If $x \prec_1 y$ and $x \prec_2 y$ then $x \prec_{1 \oplus 2} y$  \\[0.1cm]
($\oplus$WPU)   \> If $x \preceq_1 y$ and $x \preceq_2 y$ then $x \combi y$  \\[-0.25em]
\end{tabbing} 
\vspace{-0.5em}


\noindent We now consider a concrete family of basic combinators that satisfy both  $\oplus$SPU+ and $\oplus$WPU+, and, indeed, can be shown to be characterised by precisely these principles in our domain of interest. We call these `TeamQueue' combinators. 

The basic idea behind this family--and motivation behind the name given to it--can be grasped by means of the following analogy:  A number of couples go shopping for groceries. The supermarket that they frequent is equipped with two tills. For each till, we find a sequence of various groups of people queueing to pay for their items. In order to minimise the time spent in the store, each couple operates by ``team queueing'': each member of the pair joins a group in a different queue and leaves their place to join their partner's group in case this group arrives at the till first. After synchronously processing their first group of customers, the tills may or may not then  operate at different and variable speeds. We consider the temporal sequence of sets of couples leaving the store. In our setting, the queues are the two tpos (with lower elements towards the head of the queue) and the couples are pairs of copies of each world.

More formally, we assume, for each ordered pair 
\mbox{$\langle\preceq_1, \preceq_2\rangle$} of tpos, a sequence $\langle a_{\preceq_1, \preceq_2}(i)\rangle_{i \in \mathbb{N}}$ such that:
\vspace{0.2cm} 
\\
\begin{tabular}{@{}ll}
$(a1)$ & $\emptyset \neq a_{\preceq_1, \preceq_2}(i) \subseteq \{ 1, 2\}$ for each $i$, 
\vspace{0.05cm}
\\
$(a2)$ & $a_{\preceq_1, \preceq_2}(1) = \{1,2\}$
\end{tabular}
\vspace{0.2cm}
\\
$a_{\preceq_1, \preceq_2}(i)$ specifies which queue is to be processed at each step. Then $(a1)$ ensures either one or both are processed, and $(a2)$ says both are processed at the initial stage (which will ensure $\oplus$HI holds for the resulting combinators).
Then we construct the ordered partition $\langle T_1, T_2, \ldots, T_m\rangle$ corresponding to $\preceq_{1\oplus 2}$ inductively as follows:
\[
T_{i} = \bigcup_{j \in a_{\preceq_1, \preceq_2}(i)} \min(\bigcap_{k<i}T_{k}^c, \preceq_j)
\]
(where `$T^c$' denotes the complement of set $T$)
and $m$ is minimal such that $\bigcup_{i\leq m} T_i = W$. With this in hand, we can now offer:
\begin{definition}
$\oplus$ is a {\em TeamQueue combinator} iff, for each ordered pair $\langle\preceq_1, \preceq_2\rangle$ of tpos there exists a sequence $\langle a_{\preceq_1, \preceq_2}(i)\rangle_{i \in \mathbb{N}}$ satisfying (a1) and (a2) such that  $\preceq_{1\oplus 2}$ is obtained as above.
\end{definition}
It is easily verified that TeamQueue combinators are indeed basic combinators. The following example provides an elementary illustration of the combinator at work: 
\begin{example}
Suppose that $W = \{w,x,y,z\}$, that $\preceq_1$ is the tpo represented by the ordered partition $\langle \{z\}, \{w\}, \{x,y\}\rangle$,  and that  $\preceq_2$ is represented by $\langle\{x,z\}, \{y\}, \{w\}\rangle$.~Let  $\oplus $ be a TeamQueue combinator such that $\langle a_{\preceq_1, \preceq_2}(i)\rangle_{i \in \mathbb{N}} = \langle\{1,2\}, \{2\}, \{1\},\ldots\rangle$. Then the ordered partition corresponding to $\combi$ is $\langle T_1, T_2, T_3\rangle = \langle \{x,z\}, \{y\}, \{w\}\rangle$, since 
\begin{eqnarray*}
T_1 & = & \bigcup_{j \in \{1,2\}} \min(W, \preceq_j) = \{x,z\} \\
T_2 & = & \min(T_{1}^c, \preceq_2) = \{y\} \\
T_3 & = & \min(T_{1}^c \cap T_{2}^c, \preceq_1) = \{w\}
\end{eqnarray*}
\end{example}
As noted above, TeamQueue combinators satisfy both  $\oplus$SPU+ and $\oplus$WPU+. In fact, one can show that this family can actually be characterised by these two conditions, in the presence of a third:
\begin{theorem}
$\oplus$ is a TeamQueue combinator iff it is a basic combinator that satisfies $\oplus$SPU+, $\oplus$WPU+ and the following `no overtaking' property;
\begin{tabbing}
BLAHBLI: \=\kill
($\oplus$NO)   \> For $i\neq j$, if $x\prec_i y$ and $z\preceq_j y$, then either $x\prec_{1\oplus 2} y$ or $z\preceq_{1\oplus 2} y$  \\[-0.25em]
\end{tabbing} 
\end{theorem}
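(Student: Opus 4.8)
The plan is to prove the biconditional by treating the two directions quite differently: the left-to-right direction (every TeamQueue combinator is basic and satisfies $\oplus$SPU+, $\oplus$WPU+ and $\oplus$NO) is a sequence of direct verifications off the inductive construction, whereas the right-to-left direction is the substantive one, where from a combinator known only to satisfy the abstract properties we must \emph{recover} a suitable activation sequence $\langle a_{\preceq_1,\preceq_2}(i)\rangle$.

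For the forward direction, fix $\preceq_1,\preceq_2$, let $\langle T_1,\ldots,T_m\rangle$ be the partition produced by the construction, write $R_i=\bigcap_{k<i}T_k^c$ for the worlds unplaced at the start of step $i$, and let $r(x)$ be the index with $x\in T_{r(x)}$. The two governing facts are that $x\in T_{r(x)}$ supplies some active $j\in a(r(x))$ with $x\in\min(R_{r(x)},\preceq_j)$, and that $r(x)\ge i$ forces $x\in R_i$. Basicness is immediate from (a2), since $a(1)=\{1,2\}$ gives $T_1=\min(W,\preceq_1)\cup\min(W,\preceq_2)$. For each of $\oplus$SPU+, $\oplus$WPU+ and $\oplus$NO I would argue by contradiction: failure of the conclusion puts the relevant worlds $x,z$ into $R_s$ at the step $s=r(y)$ at which $y$ is placed; reading off $y\in\min(R_s,\preceq_l)$ for the active index $l$ then either clashes directly with the strict hypothesis on $x$ or $z$ (the $\oplus$SPU+ case), or forces the offending world to be $\preceq_l$-co-minimal with $y$, hence placed at step $s$ as well, contradicting $r(\cdot)>s$ (the $\oplus$WPU+ and $\oplus$NO cases). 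Each case is a few lines.

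For the converse, suppose $\oplus$ is basic and satisfies the three properties, fix $\preceq_1,\preceq_2$, and let $\langle T_1,\ldots,T_m\rangle$ now be the ordered partition of the \emph{given} output $\combi$, so that $T_i=\min(R_i,\combi)$ with $R_i=\bigcup_{k\ge i}T_k=\bigcap_{k<i}T_k^c$. I would define the candidate sequence by $a_{\preceq_1,\preceq_2}(i)=\{\,j\in\{1,2\}\mid\min(R_i,\preceq_j)\subseteq T_i\,\}$ and verify by induction on $i$ that the TeamQueue construction driven by this sequence reproduces exactly the cells $T_i$. Condition (a2) is basicness applied at $R_1=W$; nonemptiness (a1) follows from $\oplus$LB, which by Proposition~\ref{SWPUandULB} (from $\oplus$WPU+, at $S=R_i$) gives $\min(R_i,\preceq_1)\subseteq T_i$ or $\min(R_i,\preceq_2)\subseteq T_i$. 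The inclusion $\bigcup_{j\in a(i)}\min(R_i,\preceq_j)\subseteq T_i$ is immediate from the definition of $a(i)$, so the crux is the reverse inclusion: that every $x\in T_i$ is $\preceq_j$-minimal in $R_i$ for some \emph{active} $j$.

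This coverage step is the main obstacle. From $\oplus$UB (Proposition~\ref{SWPUandULB}, from $\oplus$SPU+, at $S=R_i$) we get $x\in\min(R_i,\preceq_1)\cup\min(R_i,\preceq_2)$, so the only problematic case is when $x$ is minimal only for an inactive queue, say $x\in\min(R_i,\preceq_1)$ with $1\notin a(i)$ (whence $2\in a(i)$ by (a1)). Inactivity of queue $1$ yields a witness $w\in\min(R_i,\preceq_1)\setminus T_i$, for which $x\prec_{1\oplus 2}w$ while $w\sim_1 x$. Here $\oplus$NO closes the gap: assuming for contradiction $x\notin\min(R_i,\preceq_2)$ produces some $v\in R_i$ with $v\prec_2 x$, and feeding $v\prec_2 x$ together with $w\preceq_1 x$ into $\oplus$NO (with the two tpo-roles swapped) returns $v\prec_{1\oplus 2}x$ or $w\preceq_{1\oplus 2}x$; the former is barred by $x\in\min(R_i,\combi)$ and the latter contradicts $x\prec_{1\oplus 2}w$, so $x\in\min(R_i,\preceq_2)$ with $2$ active, as required (the symmetric case is identical). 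Pinpointing exactly which instance of $\oplus$NO is needed is the delicate part; the remaining bookkeeping, including that the induction indeed terminates at $m$ with nonempty cells, is routine.
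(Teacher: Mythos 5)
Your proof is correct and takes essentially the same route as the paper's: the forward direction verifies the three postulates directly off the inductive construction exactly as in the paper, and your converse defines the activation sequence by $j \in a_{\preceq_1,\preceq_2}(i)$ iff $\min(\bigcap_{k<i}T_k^c,\preceq_j)\subseteq T_i$, which is the paper's definition verbatim. The only difference is organisational: the paper isolates the trifurcation property $\oplus$TRI as a named intermediate (proving $\oplus$SPU+ $\wedge$ $\oplus$WPU+ $\wedge$ $\oplus$NO $\Leftrightarrow$ $\oplus$TRI and then invoking Proposition 4), whereas you inline the same arguments --- in particular your $\oplus$NO-based coverage step is, up to renaming of variables and specialising $S$ to the set of as-yet-unplaced worlds, the paper's derivation of $\min(\combi,S)\subseteq\min(\preceq_2,S)$ from $\min(\preceq_1,S)\nsubseteq\min(\combi,S)$.
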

\vspace{-0.5em}

%
%

\noindent
Taken together, the three postulates  $\oplus$SPU+, $\oplus$WPU+ and $\oplus$NO say that in $\preceq_{1\oplus 2}$, no world $x$ is allowed to improve its position w.r.t.\ {\em both} input orderings $\preceq_1$ and $\preceq_2$. Indeed each postulate blocks one of the three possible ways in which this `no double improvement' condition could be violated. We note that this condition can be cashed out in terms of the following remarkable property: 
\begin{proposition}\label{tri}
$\oplus$ is a TeamQueue combinator iff it is a basic combinator that satisfies the following `trifurcation' property, for all $S\subseteq W$:
\begin{tabbing}
BLAHBLI: \=\kill
($\oplus$TRI)   \> $\min(\combi, S)$ is equal to either $\min(\preceq_1, S)$, $\min(\preceq_2, S)$ or $\min(\preceq_1, S) $\\
\>$\cup \min(\preceq_2, S)$ \\[-0.25em]
\end{tabbing} 
\end{proposition}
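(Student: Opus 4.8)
The plan is to lean on the preceding characterisation theorem, which states that a combinator is TeamQueue precisely when it is basic and satisfies $\oplus$SPU+, $\oplus$WPU+ and $\oplus$NO. It therefore suffices to show that, for any combinator, $\oplus$TRI is equivalent to the conjunction of $\oplus$SPU+, $\oplus$WPU+ and $\oplus$NO; prefixing ``basic'' to both sides and invoking that theorem (recalling that TeamQueue combinators are basic) then delivers the result.

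For $\oplus$TRI $\Rightarrow$ the three postulates, I note first that each of the three candidate values of $\min(\combi, S)$ permitted by $\oplus$TRI is contained in $\min(\preceq_1, S) \cup \min(\preceq_2, S)$, giving $\oplus$UB, and contains either $\min(\preceq_1, S)$ or $\min(\preceq_2, S)$, giving $\oplus$LB; by Proposition~\ref{SWPUandULB} these are $\oplus$SPU+ and $\oplus$WPU+. For $\oplus$NO I would reason contrapositively. Suppose $\oplus$NO fails, witnessed by $i \neq j$ and worlds with $x \prec_i y$, $z \preceq_j y$, yet $y \preceq_{1\oplus2} x$ and $y \prec_{1\oplus2} z$. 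Put $S = \{x, y, z\}$: the two combined comparisons make $y \in \min(\combi, S)$, while $x \prec_i y$ gives $y \notin \min(\preceq_i, S)$. Since $\oplus$UB (a consequence of $\oplus$TRI) places $y$ in $\min(\preceq_1, S) \cup \min(\preceq_2, S)$, we obtain $y \in \min(\preceq_j, S)$, whence $z \preceq_j y$ forces $z \in \min(\preceq_j, S)$ too. A short case analysis over the three options allowed by $\oplus$TRI then yields a contradiction: the $\min(\preceq_i, S)$ option clashes with $y \in \min(\combi, S)$, while the $\min(\preceq_j, S)$ and union options both put $z \in \min(\combi, S)$, contradicting $y \prec_{1\oplus2} z$.

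For the converse, assume $\oplus$SPU+, $\oplus$WPU+ and $\oplus$NO, so that $\oplus$UB and $\oplus$LB hold. Fix $S$. By $\oplus$LB we may assume, without loss of generality, $\min(\preceq_1, S) \subseteq \min(\combi, S)$, and $\oplus$UB then sandwiches $\min(\preceq_1, S) \subseteq \min(\combi, S) \subseteq \min(\preceq_1, S) \cup \min(\preceq_2, S)$. The task is to exclude the intermediate possibilities, i.e.\ to show that once $\min(\combi, S)$ properly extends $\min(\preceq_1, S)$ it must swallow all of $\min(\preceq_2, S)$. Suppose otherwise and pick $z \in \min(\combi, S) \cap \min(\preceq_2, S)$ with $z \notin \min(\preceq_1, S)$, together with some $w \in \min(\preceq_2, S) \setminus \min(\combi, S)$. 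Let $u$ be a $\preceq_1$-minimal element of $S$; then $u \prec_1 z$, and $u \in \min(\combi, S)$ by our $\oplus$LB assumption, so $u$ and $z$ are both $\combi$-minimal and $u \prec_{1\oplus2} z$ fails. Moreover $w$ and $z$ are both $\preceq_2$-minimal, giving $w \preceq_2 z$, while $z \in \min(\combi, S)$ and $w \notin \min(\combi, S)$ give $z \prec_{1\oplus2} w$, so $w \preceq_{1\oplus2} z$ fails. Instantiating $\oplus$NO with $i = 1$, $j = 2$ and the triple $\langle u, z, w\rangle$ in the roles of $\langle x, y, z\rangle$ makes both its hypotheses ($u \prec_1 z$ and $w \preceq_2 z$) hold while both conclusion disjuncts ($u \prec_{1\oplus2} z$ and $w \preceq_{1\oplus2} z$) fail --- the required contradiction. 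Hence $\oplus$TRI holds.

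I expect the crux to be this final step: spotting the right instance of $\oplus$NO, and especially the auxiliary witness $u$, that turns an ``intermediate'' failure of trifurcation into an overtaking. The rest --- the $\oplus$UB/$\oplus$LB sandwich and the case check under $\oplus$TRI --- is routine bookkeeping.
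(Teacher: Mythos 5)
Your postulate-level equivalence between $\oplus$TRI and the conjunction of $\oplus$SPU+, $\oplus$WPU+ and $\oplus$NO is correct: both directions check out, and your use of $\oplus$NO with the auxiliary $\preceq_1$-minimal witness $u$ to exclude the ``intermediate'' sets is a sound variant of the case analysis the paper itself runs. The problem is the step you treat as free, namely invoking the theorem characterising TeamQueue combinators by $\oplus$SPU+, $\oplus$WPU+ and $\oplus$NO. In the paper the dependency runs the other way: the appendix proof of that theorem consists precisely of establishing $(\oplus\mathrm{SPU+}\wedge\oplus\mathrm{WPU+}\wedge\oplus\mathrm{NO})\Leftrightarrow\oplus\mathrm{TRI}$ and then citing Proposition \ref{tri}. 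So your proof and the paper's proof of the theorem are the same postulate-juggling argument pointed at each other; taken together they are circular, and neither ever engages the actual definition of a TeamQueue combinator.

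The missing content is exactly what the paper's own proof of Proposition \ref{tri} supplies: the bridge to the concrete construction. For the direction from $\oplus$TRI to TeamQueue one must exhibit, for each pair $\langle\preceq_1,\preceq_2\rangle$ with $\preceq_{1\oplus 2}$ represented by $\langle S_1,\ldots,S_n\rangle$, a witnessing processing sequence --- the paper sets $j\in a_{\preceq_1,\preceq_2}(i)$ iff $\min(\bigcap_{k<i}S_k^c,\preceq_j)\subseteq S_i$ --- check that $(a1)$ holds because of $\oplus$TRI and $(a2)$ because $\oplus$ is basic, and then prove by induction on ranks that the resulting TeamQueue combinator coincides with $\oplus$. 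For the converse one must verify the postulates (or $\oplus$TRI) directly from the inductive definition of the cells $T_i=\bigcup_{j\in a(i)}\min(\bigcap_{k<i}T_k^c,\preceq_j)$. Neither argument appears in your proposal, so as written it does not prove the proposition; it only re-derives the reduction that the paper already performs inside the characterisation theorem.
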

\vspace{-0.5em}
Given COMBI, $\oplus$TRI yields the claim that $\bel{(\Psi \contract A) \ast B}$ is equal to either $\bel{\Psi \ast B}$, $\bel{(\Psi \ast \neg A) \ast B}$ or $\bel{\Psi \ast B} \cap \bel{(\Psi \ast \neg A) \ast B}$.

To wrap up this section, it should be noted that the results so far have been perfectly domain-general, in the sense that they hold for combinators whose domain corresponded to the entire space of pairs of tpos defined over $W$. Our problem of interest is somewhat narrower in scope, however, since we are interested in the special case in which one of the tpos is obtained from the other by means of a revision. In particular, we assume the first two semantic postulates of \cite{darwiche1997logic}. 
\begin{tabbing}
BLAHBLI: \=\kill
(CR$\ast$1)  \> If $x,y \in \mods{A}$ then $x \preceq_{\Psi \ast A} y$ iff  $x \preceq y$\\[0.1cm]
(CR$\ast$2) \> If $x,y \in \mods{\neg A}$ then $x \preceq_{\Psi \ast A} y$ iff  $x \preceq y$\\[-0.25em]
\end{tabbing} 
\vspace{-0.5em}
%
In other words, $\preceq_1$ and $\preceq_2$ will always be $\mods{A}$-variants for some sentence $A$, in the following sense:
\begin{definition}
Given $\preceq_1, \preceq_2 \in T(W)$ and $S \subseteq W$, we say $\preceq_1$ and $\preceq_2$ are {\em $S$-variants} iff [$x\preceq_1 y$ iff $x \preceq_2 y$] holds for all $x, y \in (S \times S) \cup (S^c \times S^c)$. We let $V(W)$ denote the set of all $\langle \preceq_1, \preceq_2 \rangle \in T(W) \times T(W)$ such that $\preceq_1, \preceq_2$ are $S$-variants for some $S \subseteq W$.
\end{definition}

\begin{example}
Suppose that $W = \{w,x,y,z\}$, that $\preceq_1$ is the tpo represented by the ordered partition $\langle\{w\}, \{x\}, \{y\}, \{z\}\rangle$, and that $\preceq_2$ is represented by $\langle\{w\}, \{x,y\}, \{z\}\rangle$. Then $\preceq_1$ and $\preceq_2$ are $\{y,z\}$-variants, since (i) $w\prec_1 x$ and $w\prec_2 x$, as well as (ii) $y\prec_1 z$ and $y\prec_2 z$. They are not, however, $\{x,y\}$-variants, since $x\prec_1 y$ but $y\preceq_2 x$. 
\end{example}

\noindent This leads to the following domain restriction on $\oplus$: 
\vspace{0.2cm}
\\
\begin{tabular}{@{}ll}
($\oplus$DOM) & $\textit{Domain}(\oplus) \subseteq V(W)$
\end{tabular}
\vspace{0.2cm}
\\
As it turns out, this constraint allows for a noteworthy simplification of the characterisation of TeamQueue combinators:
\begin{proposition}\label{TQ}
Given $\oplus \textit{DOM}$, $\oplus$ is a TeamQueue combinator iff it is a basic combinator that satisfies $\oplus$SPU+ and $\oplus$WPU+.
\end{proposition}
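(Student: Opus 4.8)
The plan is to lean on the domain-general characterisation, Theorem 1, which says that a basic combinator is TeamQueue precisely when it satisfies $\oplus$SPU+, $\oplus$WPU+ and $\oplus$NO. Since each of these four conditions (including basicness, i.e.\ $\oplus$HI) constrains only the output $\combi$ of a single input pair $\langle\preceq_1,\preceq_2\rangle$, and the TeamQueue construction proceeds pair by pair, Theorem 1 may be read pointwise: for any fixed pair, the output arises from a TeamQueue sequence iff the three binary conditions hold for that pair. Hence restricting the domain to $V(W)$ causes no difficulty, and it suffices to show that on $S$-variant pairs the condition $\oplus$NO is already \emph{entailed} by $\oplus$SPU+ and $\oplus$WPU+. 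The right-to-left direction then follows from Theorem 1, while left-to-right needs nothing new, as TeamQueue combinators are basic and satisfy $\oplus$SPU+ and $\oplus$WPU+ irrespective of domain. I would also record at the outset that identifying the two source worlds in $\oplus$SPU+ and $\oplus$WPU+ recovers the plain unanimity principles $\oplus$SPU and $\oplus$WPU, which I use freely below.

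The substance is the claim: if $\preceq_1,\preceq_2$ are $S$-variants and $\oplus$ satisfies $\oplus$SPU+ and $\oplus$WPU+, then $\oplus$NO holds for this pair. By the symmetry of all hypotheses under swapping the two indices together with the two source worlds, I would treat only the case $i=1$, $j=2$. So assume $x\prec_1 y$ and $z\preceq_2 y$, and suppose for contradiction that $y\preceq_{1\oplus 2} x$ and $y\prec_{1\oplus 2} z$. Applying $\oplus$WPU+ to $x\preceq_1 y$ and $z\preceq_2 y$ gives $x\preceq_{1\oplus 2} y$ or $z\preceq_{1\oplus 2} y$; the second disjunct is excluded by $y\prec_{1\oplus 2} z$, so $x\preceq_{1\oplus 2} y$, and with $y\preceq_{1\oplus 2} x$ we obtain $x\sim_{1\oplus 2} y$. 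If $z\prec_2 y$, then $\oplus$SPU+ applied to $x\prec_1 y$ and $z\prec_2 y$ forces $x\prec_{1\oplus 2} y$ or $z\prec_{1\oplus 2} y$, each contradicting what we have just derived. This leaves the case $z\sim_2 y$.

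Here the $S$-variant structure does the real work. Since $x\prec_1 y$ but not $x\prec_{1\oplus 2} y$, the contrapositive of $\oplus$SPU rules out $x\prec_2 y$, so $y\preceq_2 x$; as $x\prec_1 y$, the variant property forbids $x$ and $y$ from sharing a side of $S$ (else $x\prec_1 y$ would transfer to $x\prec_2 y$), so they straddle $S$. Were $y$ and $z$ on the same side, $z\sim_2 y$ would transfer to $z\sim_1 y$, and $\oplus$WPU would yield $z\preceq_{1\oplus 2} y$, a contradiction; hence $y$ and $z$ also straddle $S$, which pins $x$ and $z$ to the \emph{same} side. Now $z\sim_2 y$ and $y\preceq_2 x$ give $z\preceq_2 x$, and since $x,z$ share a side the variant property upgrades this to $z\preceq_1 x$; with $x\prec_1 y$ this yields $z\prec_1 y$, so $z\preceq_1 y$. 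Finally $z\preceq_1 y$ together with $z\preceq_2 y$ gives $z\preceq_{1\oplus 2} y$ by $\oplus$WPU, contradicting $y\prec_{1\oplus 2} z$ and closing the argument.

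The main obstacle is exactly this last case $z\sim_2 y$, where the plain Pareto principles give no leverage on their own. The crux is the observation that $\oplus$SPU forces $y\preceq_2 x$, which via the variant property drives $x$ and $y$ onto opposite sides of $S$ and then, once the same-side configuration for $y,z$ is ruled out, places $x$ and $z$ on a common side; this confines the disagreement between $\preceq_1$ and $\preceq_2$ to comparisons the argument never invokes, letting $z\preceq_2 x$ be transferred to $\preceq_1$. Everything else is bookkeeping: the reduction to $\oplus$NO, the pointwise appeal to Theorem 1, and the symmetric treatment of the case $i=2$, $j=1$.
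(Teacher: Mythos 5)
Your proposal is correct and takes essentially the same route as the paper: both reduce the right-to-left direction to showing that, on $S$-variant pairs, $\oplus$SPU+ and $\oplus$WPU+ (equivalently $\oplus$SPU and $\oplus$WPU) entail $\oplus$NO, and then invoke the domain-general characterisation via Theorem 1. Your derivation of that entailment runs the reductio on the combined ordering and splits on $z \prec_2 y$ versus $z \sim_2 y$, whereas the paper assumes $y \prec_i z$ and splits on which side of $S$ contains $x$, but both arguments rest on the same mechanism of transferring comparisons across the two orderings on a common side of $S$, and yours checks out.
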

We also note, in passing, that 
\begin{proposition}\label{SWPU+andSWPU}
Given $\oplus \textit{DOM}$, $\oplus$ satisfies $\oplus$SPU+ and $\oplus$WPU+ iff it satisfies $\oplus$SPU and $\oplus$WPU, respectively.
\end{proposition}
Given Proposition \ref{tri}, the potentially surprising upshot of Proposition \ref{TQ} is that, in our domain of interest, satisfaction of $\oplus$LB and $\oplus$UB entails satisfaction of $\oplus$TRI. 

\section{Iterated Contraction via TeamQueue Combination}

A central result of AGM theory says that, under assumption of HI, if  $\ast$ satisfies the AGM revision postulates, then $\contract$ automatically satisfies the AGM contraction postulates.
In this section we look at some of the postulates for both iterated revision and contraction that have been proposed in the literature. We show that, if $\preceq_{\Psi \contract A}$ is defined from $\preceq$ and $\preceq_{\Psi \ast \neg A}$ using COMBI via a TeamQueue combinator, then satisfaction of some well known sets of postulates for iterated revision  leads to satisfaction of other well known sets of postulates for iterated contraction. 


The most widely cited postulates for iterated revision are the four DP postulates of \cite{darwiche1997logic}. These, like most of the postulates for iterated belief change, come in two flavours: a {\em semantic} one in terms of requirements on the tpo $\preceq_{\Psi \ast A}$ associated to the revised state $\Psi \ast A$, and a {\em syntactic} one in terms of requirements on the belief set $[(\Psi \ast A) \ast B]$ following a sequence of two revisions. Turning first to the semantic versions, we've already encountered the first two of these postulates--CR$\ast$1 and CR$\ast$2--in the previous section. The other two are
\begin{tabbing}
BLAHBLI: \=\kill
(CR$\ast$3) \> If $x \in \mods{A}$, $y \in \mods{\neg A}$ and $x \prec y$ then $x \prec_{\Psi \ast A}y$ \\[0.1cm]

(CR$\ast$4) \> If $x \in \mods{A}$, $y \in \mods{\neg A}$ and $x \preceq y$ then $x \preceq_{\Psi \ast A} y$ \\[-0.25em]
\end{tabbing} 
\vspace{-0.5em}
Each of these has an equivalent corresponding syntactic version as follows:
\begin{tabbing}
BLAHBLI: \=\kill
(C$\ast$1) \> If $A \in \mbox{Cn}(B)$ then $\bel{(\Psi \ast A) \ast B} = \bel{\Psi \ast B}$  \\[0.1cm]

(C$\ast$2) \> If $\neg A \in \mbox{Cn}(B)$ then $\bel{(\Psi \ast A) \ast B} = \bel{\Psi \ast B}$ \\[0.1cm]

(C$\ast$3) \> If $A \in \bel{\Psi \ast B}$ then $A \in \bel{(\Psi \ast A) \ast B}$  \\[0.1cm]

(C$\ast$4) \> If $\neg A \not\in \bel{\Psi \ast B}$ then $\neg A \not\in \bel{(\Psi \ast A) \ast B}$\\[-0.25em]
\end{tabbing} 
\vspace{-0.5em}


\noindent Chopra et al \shortcite{chopra2008iterated} proposed a list of `counterparts' to the DP postulates for the case of $\Psi \contract A$. The semantic versions of these were:
\begin{tabbing}
BLAHBLI: \=\kill
(CR$\contract$1)  \>  If $x,y \in \mods{\neg A}$ then $x \preceq_{\Psi \contract A} y$ iff  $x \preceq y$ \\[0.1cm]

(CR$\contract$2) \> If $x,y \in \mods{A}$ then $x \preceq_{\Psi \contract A} y$ iff  $x \preceq y$\\[0.1cm]

(CR$\contract$3)  \> If $x \in \mods{\neg A}$, $y \in \mods{A}$ and $x \prec y$ then $x \prec_{\Psi \contract A} y$ \\[0.1cm]

(CR$\contract$4) \> If $x \in \mods{\neg A}$, $y \in \mods{A}$ and $x \preceq y$ then $x \preceq_{\Psi \contract A} y$\\[-0.25em]
\end{tabbing} 
\vspace{-0.5em}
Chopra et al \shortcite{chopra2008iterated} showed (their Theorem 2) that, in the presence of the AGM postulates (reformulated as in our setting to apply to belief {\em states} rather than just belief {\em sets}) each of these postulates has an equivalent syntactic version as follows:
\begin{tabbing}
BLAHBLI: \=\kill
(C$\contract$1)  \>  If $\neg A \in \mbox{Cn}(B)$ then $\bel{(\Psi \contract A) \ast B} = \bel{\Psi \ast B}$ \\[0.1cm]

(C$\contract$2) \> If $A \in \mbox{Cn}(B)$ then $\bel{(\Psi \contract A) \ast B} = \bel{\Psi \ast B}$ \\[0.1cm]

(C$\contract$3)  \> If $\neg A \in \bel{\Psi \ast B}$ then $\neg A \in \bel{(\Psi \contract A) \ast B}$ \\[0.1cm]

(C$\contract$4) \> $A \not\in \bel{\Psi \ast B}$ then $A \not\in \bel{(\Psi \contract A) \ast B}$\\[-0.25em]
\end{tabbing} 
\vspace{-0.5em}
As it turns out, the definition of $\preceq_{\Psi \contract A}$ from $\preceq$ and $\preceq_{\Psi \ast \neg A}$ using COMBI via a TeamQueue combinator allows us to show the precise sense in which Chopra {\em et al}'s postulates are `Darwiche-Pearl-like', as they put it:
\begin{proposition}
Let $\oplus$ be a TeamQueue combinator, let $\ast$ be an AGM revision operator and let $\contract$ be such that $\preceq_{\Psi \contract A}$ is defined from $\ast$ via COMBI using $\oplus$. Then, for each $i = 1,2,3,4$, if $\ast$ satisfies CR$\ast i$ then $\contract$ satisfies CR$\contract i$.
\end{proposition}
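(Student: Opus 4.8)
The plan is to work entirely at the semantic level. Write $\preceq_1 = \preceq_\Psi$ (the initial tpo, also written $\preceq$) and $\preceq_2 = \preceq_{\Psi \ast \neg A}$, so that COMBI gives $\preceq_{\Psi \contract A} = \combi$. The engine of the argument is the observation that every TeamQueue combinator satisfies not merely $\oplus$SPU+ and $\oplus$WPU+ but also the plain unanimity principles $\oplus$SPU and $\oplus$WPU: these follow by instantiating $z := x$ in the ``+''-versions, since the latter are explicitly strengthenings of the former. The second ingredient is a uniform translation move: each CR$\contract i$ concerns worlds split along $\mods{\neg A}$ versus $\mods{A}$, and this is exactly the split one obtains by reading the assumed CR$\ast i$ with $A$ replaced by $\neg A$; applying CR$\ast i$ in that form relates $\preceq_2$ to $\preceq_1$ on precisely the worlds that CR$\contract i$ talks about. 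The whole proposition then reduces to feeding these relations into $\oplus$SPU and $\oplus$WPU.

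I would handle the two ``directional'' cases first, as they are cleanest. For CR$\contract$3, suppose $x \in \mods{\neg A}$, $y \in \mods{A}$ and $x \prec y$, i.e.\ $x \prec_1 y$. Reading CR$\ast$3 for $\neg A$ yields $x \prec_2 y$, and then $\oplus$SPU delivers $x \prec_{1 \oplus 2} y$, which is CR$\contract$3. CR$\contract$4 is identical with $\preceq$ in place of $\prec$, using CR$\ast$4 for $\neg A$ and then $\oplus$WPU.

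The two ``coincidence'' postulates CR$\contract$1 and CR$\contract$2 require a biconditional and so a little more care. For CR$\contract$1 take $x, y \in \mods{\neg A}$; CR$\ast$1 read for $\neg A$ says $\preceq_1$ and $\preceq_2$ agree on $\mods{\neg A}$, so $x \preceq y$ is equivalent to having both $x \preceq_1 y$ and $x \preceq_2 y$. The left-to-right half of CR$\contract$1 then follows from $\oplus$WPU. For right-to-left I would argue contrapositively: if $x \not\preceq y$ then, by totality of the tpo, $y \prec_1 x$ and hence $y \prec_2 x$, so $\oplus$SPU gives $y \prec_{1 \oplus 2} x$, whence $x \not\preceq_{1 \oplus 2} y$. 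CR$\contract$2 is the same argument carried out on $\mods{A}$ using CR$\ast$2 for $\neg A$.

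There is no deep obstacle here; the proof is essentially bookkeeping. The step I would double-check most carefully is the translation of each CR$\ast i$ from ``revision by $A$'' to ``revision by $\neg A$'', since this swaps the roles of the two model classes $\mods{A}$ and $\mods{\neg A}$ and must line up exactly with the hypotheses of the corresponding CR$\contract i$; a related subtlety is the appeal to totality of $\combi$ in the contrapositive halves of CR$\contract$1 and CR$\contract$2, which is what lets a strict conclusion from $\oplus$SPU discharge a non-strict goal. I would also remark that the domain restriction $\oplus$DOM plays no role in this particular result, since $\oplus$SPU and $\oplus$WPU hold for TeamQueue combinators unconditionally.
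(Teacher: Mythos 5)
Your proposal is correct and follows essentially the same route as the paper's own proof: the directional postulates CR$\contract$3/CR$\contract$4 via $\oplus$SPU/$\oplus$WPU applied to CR$\ast$3/CR$\ast$4 read for $\neg A$, and the biconditionals CR$\contract$1/CR$\contract$2 by splitting into a $\oplus$WPU half and a contrapositive $\oplus$SPU half using totality. The only (cosmetic) difference is that you explicitly note that $\oplus$SPU and $\oplus$WPU follow from the ``+''-versions by setting $z:=x$, a step the paper leaves implicit.
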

As a corollary, given the AGM postulates, we recover the same result for the syntactic versions as well.

Finally, Nayak {\em et al} \shortcite{nayak2007iterated} have endorsed the following principle of `Principled Factored Intersection', which they show to be satisfied by a number of proposals for iterated contraction:
\begin{tabbing}
BLAHBL \= BLA\=\kill
(PFI)   \> Given $B\in\bel{\Psi\contract A}$\\[0.1cm]

\>  (a) \> If $\neg B \rightarrow \neg A\in [(\Psi \contract A) \contract  B]$, then $[(\Psi \contract A) \contract  B] =[\Psi \contract A]\cap$\\
\>\>$[\Psi\contract \neg A\rightarrow B]$\\[0.1cm]
 \>  (b) \> If $\neg B \rightarrow \neg A,\neg B\rightarrow A\notin [(\Psi \contract A) \contract  B]$, then $[(\Psi \contract A) \contract  B] =$\\
\>\>$[\Psi \contract A]\cap[\Psi\contract \neg A\rightarrow B]\cap[\Psi\contract A\rightarrow B]$\\[0.1cm]
 \>  (c) \> If $\neg B\rightarrow A\in [(\Psi \contract A) \contract  B]$, then  $[(\Psi \contract A) \contract  B] =[\Psi \contract A]\cap$\\
\>\>$[\Psi\contract  A\rightarrow B]$\\[-0.25em]
\end{tabbing} 
\vspace{-0.5em}
%
%
%
The rationale for PFI remains rather unclear to date. Indeed, the only justifications provided appear to be (a) that PFI avoids a particular difficulty faced by another constraint that has been proposed in the literature--namely Rott's `Qualified Intersection' principle \cite{rott2001change}--and which can be reformulated in a manner that is superficially rather similar to PFI and (b) that PFI entails a pair of prima facie appealing principles. 
%
%
Neither of these considerations strike us as being particularly compelling. For one, Rott's Qualified Intersection principle remains itself unclearly motivated. Secondly, plenty of ill-advised principles can be shown to have certain plausible consequences. 

The TeamQueue approach, however, allows us to rest the principle on a far firmer foundation. Indeed:
%
%
%
%
%
%
\begin{proposition}
Let $\oplus$ be a TeamQueue combinator, let $\ast$ be an AGM revision operator and let $\contract$ be such that $\preceq_{\Psi \contract A}$ is defined from $\ast$ via COMBI using $\oplus$. If $\ast$ satisfies CR$\ast 1$ and CR$\ast 2$ then $\contract$ satisfies PFI.
\end{proposition}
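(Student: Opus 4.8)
The plan is to push everything down to the semantic level and rewrite each belief set occurring in PFI as a set of minimal worlds, exploiting the two features a TeamQueue combinator is guaranteed to have: it is \emph{basic} (so $\oplus$HI holds) and, by Proposition~\ref{tri}, it satisfies the trifurcation property $\oplus$TRI. First I would record the model-theoretic reading of each ingredient. Since $\oplus$ is basic, HI holds, so for any sentence $X$ the models of $\bel{\Psi \contract X}$ are $\min(\preceq_{\Psi}, W) \cup \min(\preceq_{\Psi}, \mods{\neg X})$; instantiating $\neg X$ at $\neg A \wedge \neg B$ and at $A \wedge \neg B$ then handles the two contraction-by-implication terms on the right-hand sides. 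Applying HI a second time at the state $\Psi \contract A$, the models of the left-hand side $\bel{(\Psi \contract A) \contract B}$ are $\min(\preceq_{\Psi \contract A}, W) \cup \min(\preceq_{\Psi \contract A}, \mods{\neg B})$, and by $\oplus$HI the first of these equals $\min(\preceq_{\Psi}, W) \cup \min(\preceq_{\Psi \ast \neg A}, W)$. The premise $B \in \bel{\Psi \contract A}$ gives $\min(\preceq_{\Psi \contract A}, W) \subseteq \mods{B}$, which I would use to show that the three case-conditions of PFI reduce to conditions purely on the location of the set $\min(\preceq_{\Psi \contract A}, \mods{\neg B})$.

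The crux is a single lemma: writing $P = \min(\preceq_{\Psi}, \mods{A \wedge \neg B})$ and $Q = \min(\preceq_{\Psi}, \mods{\neg A \wedge \neg B})$, I claim $\min(\preceq_{\Psi \contract A}, \mods{\neg B})$ is always one of $P$, $Q$ or $P \cup Q$. The argument runs as follows. By CR$\ast$1 and CR$\ast$2 applied to revision by $\neg A$, the orderings $\preceq_{\Psi}$ and $\preceq_{\Psi \ast \neg A}$ agree within $\mods{A}$ and within $\mods{\neg A}$; hence their minimal $A \wedge \neg B$-world-sets both equal $P$ and their minimal $\neg A \wedge \neg B$-world-sets both equal $Q$. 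Since the $\neg B$-worlds partition into $A \wedge \neg B$- and $\neg A \wedge \neg B$-worlds, a rank comparison shows that each of $\min(\preceq_{\Psi}, \mods{\neg B})$ and $\min(\preceq_{\Psi \ast \neg A}, \mods{\neg B})$ is one of $P$, $Q$, $P \cup Q$. Now $\oplus$TRI, applied with $S = \mods{\neg B}$ (legitimate because CR$\ast$1--2 make the input pair $\mods{A}$-variants, so it lies in $\textit{Domain}(\oplus)$), says $\min(\preceq_{\Psi \contract A}, \mods{\neg B})$ equals one of these two sets or their union; the lemma follows because the union of any two members of $\{P, Q, P \cup Q\}$ is again a member of $\{P, Q, P \cup Q\}$.

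With the lemma in hand the proof is a case split mirroring $\oplus$TRI. Noting $P \subseteq \mods{A}$ and $Q \subseteq \mods{\neg A}$, I would verify that $\min(\preceq_{\Psi \contract A}, \mods{\neg B}) = Q$ is equivalent to $\neg B \rightarrow \neg A \in \bel{(\Psi \contract A) \contract B}$ (case (a)), that $= P$ is equivalent to $\neg B \rightarrow A \in \bel{(\Psi \contract A) \contract B}$ (case (c)), and that $= P \cup Q$ is equivalent to the failure of both (case (b)); this is precisely where the premise $B \in \bel{\Psi \contract A}$ is consumed. In each case, substituting the value of $\min(\preceq_{\Psi \contract A}, \mods{\neg B})$ into the left-hand-side model set $\min(\preceq_{\Psi}, W) \cup \min(\preceq_{\Psi \ast \neg A}, W) \cup \min(\preceq_{\Psi \contract A}, \mods{\neg B})$ and comparing with the right-hand side computed via HI yields the identity at once; for instance in case (a) both sides have models $\min(\preceq_{\Psi}, W) \cup \min(\preceq_{\Psi \ast \neg A}, W) \cup Q$. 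The main obstacle is the lemma together with the syntactic-to-semantic translation of the case-conditions, since everything afterwards is bookkeeping; a secondary nuisance is the degenerate situation in which $A \wedge \neg B$ or $\neg A \wedge \neg B$ is unsatisfiable (so $P$ or $Q$ is empty, or $B$ is a tautology), which I would dispatch separately as boundary cases.
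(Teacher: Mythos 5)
Your argument is correct, but it takes a genuinely different route from the paper. The paper's proof is a two-line reduction: it first invokes its earlier Proposition (the transfer of CR$\ast i$ to CR$\contract i$ for TeamQueue combinators) to conclude that $\contract$ satisfies CR$\contract 1$ and CR$\contract 2$, and then cites Theorem 1 of Ramachandran {\em et al.} as a black box, according to which any contraction operator satisfying HI together with CR$\contract 1$ and CR$\contract 2$ satisfies PFI. You instead give a self-contained semantic verification: express every belief set in PFI as a union of minimal-world sets via $\oplus$HI, prove the key lemma that $\min(\preceq_{\Psi \contract A}, \mods{\neg B})$ is always one of $P$, $Q$ or $P \cup Q$ (with $P = \min(\preceq_{\Psi}, \mods{A \wedge \neg B})$, $Q = \min(\preceq_{\Psi}, \mods{\neg A \wedge \neg B})$), and match the three branches of the lemma to the three cases of PFI, consuming the premise $B \in \bel{\Psi \contract A}$ to translate the case conditions. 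The checks all go through, including the boundary cases you flag. What your route buys is transparency: one sees exactly where $B \in \bel{\Psi \contract A}$ and the TeamQueue structure are used, and the proof does not depend on an external result. What it costs is that your appeal to $\oplus$TRI is slightly stronger than necessary: since CR$\ast 1$ and CR$\ast 2$ already force $\preceq_{\Psi}$ and $\preceq_{\Psi \ast \neg A}$ (and hence, via the paper's transfer result, $\preceq_{\Psi \contract A}$) to agree within $\mods{A}$ and within $\mods{\neg A}$, the lemma follows from CR$\contract 1$ and CR$\contract 2$ alone without invoking trifurcation --- which is precisely the economy the paper achieves by citing Ramachandran {\em et al.}
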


\section{The Synchronous TeamQueue Combinator}



A special case of TeamQueue combinators takes $a_{\preceq_1, \preceq_2}(i) = \{1,2\}$ for all ordered pairs $\langle \preceq_1, \preceq_2 \rangle$ and all $i$. This represents a particularly {\em fair} way of combining tpos. In terms of our supermarket analogy, it corresponds to the situation in which the tills process groups of customers at the same speed.
\begin{definition}
The {\em Synchronous TeamQueue} (STQ) combinator is the TeamQueue combinator for which $a_{\preceq_1, \preceq_2}(i) = \{1,2\}$ for all ordered pairs $\langle \preceq_1, \preceq_2 \rangle$ and all $i$. We will denote the STQ combinator by $\STQ$.
\end{definition}
\begin{example}
Suppose $W = \{x,y,z,w\}$, that $\preceq_1$ is the tpo represented by the ordered partition $\langle\{z\}, \{w\}, \{x,y\}\rangle$ and $\preceq_2$ is represented by $\langle\{x,z\}, \{y\}, \{w\}\rangle$. Then the ordered partition corresponding to $\preceq_{1\STQ 2}$ is $\langle T_1, T_2\rangle = \langle\{x,z\}, \{w,y\}\rangle$.
%
\end{example}
Roughly, $\preceq_{1\STQ 2}$ tries to make each world as low in the ordering as possible, while trying to preserve the information contained in $\preceq_1$ and $\preceq_2$. (The idea is similar to that of the {\em rational closure} construction in default reasoning \cite{lehmann1992does}.)
We remark that $\STQ$ is commutative, i.e., $\combi=\preceq_{2\oplus 1}$. 
%
%
%
It can be characterised semantically, in the absence of domain restrictions, as follows:
\begin{theorem}
$\STQ$ is the only basic combinator that satisfies both $\oplus$SPU+ and the following `Parity' constraint:
\begin{tabbing}
BLAHBLI: \=\kill
($\oplus$PAR)   \>  If $x \prec_{1 \oplus 2} y$ then for each $i \in \{1,2\}$ there exists $z$ s.t. $x \sim_{1 \oplus 2} z$ and\\
\> $z \prec_i y$\\[-0.25em]
\end{tabbing} 
\vspace{-0.5em}
%
\end{theorem}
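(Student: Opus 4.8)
The plan is to establish both halves of the ``only'' claim: that $\STQ$ is a basic combinator satisfying $\oplus$SPU+ and $\oplus$PAR (soundness), and that any basic combinator with these two properties must coincide with $\STQ$ (uniqueness). Throughout I write $\langle T_1,\dots,T_m\rangle$ for the ordered partition induced by $\preceq_{1\STQ 2}$, so that $T_a=\min(\preceq_1,R_a)\cup\min(\preceq_2,R_a)$, where $R_a:=\bigcup_{k\ge a}T_k$ is the set of worlds not yet placed before step $a$ and $\min(\preceq_i,R_a)$ denotes the $\preceq_i$-minimal elements of $R_a$.

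Soundness is largely inherited. Since $\STQ$ is a TeamQueue combinator, it is basic and satisfies $\oplus$SPU+ by the characterisation of TeamQueue combinators already established. Only $\oplus$PAR needs a direct argument, which I would read off the cell structure. Suppose $x\prec_{1\STQ 2}y$, say $x\in T_a$ and $y\in T_b$ with $a<b$, so that $y\in R_a\setminus T_a$. Fix $i\in\{1,2\}$ and pick any $z\in\min(\preceq_i,R_a)$; this set is nonempty and contained in $T_a$, so $z\sim_{1\STQ 2}x$. As $z$ is $\preceq_i$-minimal in $R_a$ we have $z\preceq_i y$, and since $y\notin T_a\supseteq\min(\preceq_i,R_a)$ we cannot have $y\sim_i z$; hence $z\prec_i y$, and $z$ witnesses $\oplus$PAR.

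For uniqueness, let $\oplus$ be any basic combinator satisfying $\oplus$SPU+ and $\oplus$PAR, with partition $\langle U_1,U_2,\dots\rangle$ for $\combi$, and set $R_a^{\oplus}:=\bigcup_{k\ge a}U_k$. The crux is to reconstruct the STQ recursion one level at a time by proving $U_a=\min(\preceq_1,R_a^{\oplus})\cup\min(\preceq_2,R_a^{\oplus})$ for every $a$. For ``$\supseteq$'' I would use $\oplus$PAR: were some $z\in\min(\preceq_i,R_a^{\oplus})$ demoted to a cell $U_c$ with $c>a$, then for any $x\in U_a$ we would have $x\prec_{1\oplus 2}z$, so $\oplus$PAR would yield $w\in U_a\subseteq R_a^{\oplus}$ with $w\prec_i z$, contradicting the $\preceq_i$-minimality of $z$ in $R_a^{\oplus}$. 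For ``$\subseteq$'' I would invoke the equivalence of $\oplus$SPU+ with $\oplus$UB (Proposition \ref{SWPUandULB}) and instantiate $\oplus$UB at $S=R_a^{\oplus}$, noting that $\min(\combi,R_a^{\oplus})=U_a$. These two inclusions give exactly the STQ recursion, after which a routine induction on $a$---with base case $U_1=\min(\preceq_1,W)\cup\min(\preceq_2,W)=T_1$ supplied by $\oplus$HI---shows $R_a^{\oplus}=R_a$ and $U_a=T_a$ for all $a$, i.e.\ $\combi=\preceq_{1\STQ 2}$.

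I expect the main obstacle to be the uniqueness direction, and in particular the conceptual step of \emph{localising} the two global postulates to the tail sets $R_a^{\oplus}$ so that they jointly regenerate the inductive STQ construction. The $\oplus$PAR half---ruling out that a tail-minimal world ever slips into a strictly higher $\combi$-cell---is the delicate one; the $\oplus$UB half is routine once one observes that $\min(\combi,R_a^{\oplus})$ is precisely the cell $U_a$. Minor points to verify carefully are the nonemptiness of the minimal sets invoked and the handling of the top cell, where $\oplus$PAR is vacuously satisfied.
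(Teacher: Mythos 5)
Your proof is correct and follows essentially the same route as the paper's: an induction on the ranks of the ordered partitions, using $\oplus$SPU+ (via $\oplus$UB) for one inclusion and $\oplus$PAR for the other; recasting this as ``the $\combi$-partition satisfies the STQ recursion on its own tail sets'' is only a cosmetic repackaging. You additionally verify that $\STQ$ itself satisfies $\oplus$PAR, a soundness check the paper leaves implicit.
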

Note that $\oplus$WPU+ is not listed among the characteristic principles: it is entailed by the conjunction of $\oplus$SPU+ and $\oplus$PAR. 

Whilst $\oplus$PAR may not be immediately easy to grasp, it can be given a nice formulation in our setting in terms of the notion of {\em strong belief} \cite{battigalli2002strong,stalnaker1996knowledge}. A sentence $A \in \bel{\Psi}$ is strongly believed in $\Psi$ in case the only way it can be dislodged by the next revision input $B$ is if $B$ is logically {\em inconsistent} with $A$. 
That is, 
$A$ is strongly believed in $\Psi$ iff {\em (i)} $A \in \bel{\Psi}$, and {\em (ii)} $A \in \bel{\Psi \ast B}$ for all sentences $B$ such that $A \wedge B$ is consistent.
Semantically, a consistent sentence $A$ is strongly believed in $\Psi$ iff every $A$-world is strictly more plausible than every $\neg A$-world, i.e., $x \prec_\Psi y$ for every $x \in \mods{A}$, $y \in \mods{\neg A}$. With this in hand, one can show:
\begin{proposition}
$\oplus$PAR is equivalent to: 
\begin{tabbing}
BLAHBL: \=\kill

($\oplus$SB)   \>  If $x \prec_{1\oplus 2} y$ for every $x \in S^c$, $y \in S$, then  $\min( \preceq_{1},S)\cup\min(\preceq_{2},S)\subseteq$\\
\>$ \min( \preceq_{1\oplus 2},S)$\\[-0.25em]
\end{tabbing} 
\end{proposition}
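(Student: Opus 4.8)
The plan is to prove the two implications separately, working entirely at the level of the total preorder $\preceq_{1\oplus 2}$ and its cells (the $\sim_{1\oplus 2}$-equivalence classes). Neither direction needs anything beyond the fact that $\oplus$ returns a tpo, so the result will hold for arbitrary combinators. A useful guiding observation is that, writing $f_i(\rho)$ for the least $\preceq_i$-rank attained inside the $\rho$-th cell of $\preceq_{1\oplus 2}$, both $\oplus$PAR and $\oplus$SB say exactly that each $f_i$ is strictly increasing in the combined rank; but I would present the argument directly rather than routing it through this reformulation.

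For $\oplus$PAR $\Rightarrow$ $\oplus$SB, suppose $S$ is such that $x \prec_{1\oplus 2} y$ for all $x \in S^c$, $y \in S$, and fix $w \in \min(\preceq_1, S)$ (the case of $\min(\preceq_2, S)$ being symmetric). If $w \notin \min(\preceq_{1\oplus 2}, S)$, then some $y \in S$ satisfies $y \prec_{1\oplus 2} w$; applying $\oplus$PAR to this with $i=1$ yields a $z$ with $z \sim_{1\oplus 2} y$ and $z \prec_1 w$. A two-way case split on $z$ finishes it: if $z \in S$ then $w \preceq_1 z$ by minimality of $w$, contradicting $z \prec_1 w$; if $z \in S^c$ then the separation hypothesis gives $z \prec_{1\oplus 2} y$, contradicting $z \sim_{1\oplus 2} y$. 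Hence $w \in \min(\preceq_{1\oplus 2}, S)$, establishing the required inclusion.

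The reverse implication $\oplus$SB $\Rightarrow$ $\oplus$PAR is where the real work lies, and the key move is the correct choice of $S$. Given $a \prec_{1\oplus 2} b$ and a fixed $i$ (say $i=1$), I would take $S = \{w : a \preceq_{1\oplus 2} w\}$, the up-set of the combined ordering that includes $a$'s own cell. Totality makes $S^c = \{w : w \prec_{1\oplus 2} a\}$, and transitivity then verifies the antecedent of $\oplus$SB for this $S$; moreover $\min(\preceq_{1\oplus 2}, S)$ is precisely the cell $C$ of $a$. Thus $\oplus$SB delivers $\min(\preceq_1, S) \subseteq C$. Picking any $z^* \in \min(\preceq_1, S)$ gives $z^* \sim_{1\oplus 2} a$, which is half of what $\oplus$PAR demands; and since $b \in S$, minimality gives $z^* \preceq_1 b$. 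The remaining strictness is the delicate point: were $z^* \sim_1 b$, then $b$ too would lie in $\min(\preceq_1, S)$ and hence, by $\oplus$SB, in $C$, forcing $b \sim_{1\oplus 2} a$ and contradicting $a \prec_{1\oplus 2} b$; so $z^* \prec_1 b$, as needed.

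I expect the main obstacle to be precisely the design of $S$ in this second direction. The natural first guess, the \emph{strict} up-set $\{w : a \prec_{1\oplus 2} w\}$, yields information about the cell immediately above $a$ rather than about $a$'s own class, and so fails to supply a witness $z$ with $z \sim_{1\oplus 2} a$. Including $a$'s cell in $S$ is what makes $\min(\preceq_{1\oplus 2}, S)$ equal to $C$ and lets $\oplus$SB speak about the class of $a$; the accompanying strictness argument, which rules out $b$ itself being $\preceq_1$-minimal in $S$, is the other step that requires care.
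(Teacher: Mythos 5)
Your proof is correct and takes essentially the same route as the paper's: the same separation argument for $\oplus$PAR $\Rightarrow$ $\oplus$SB (done one $i$ at a time rather than for the union at once), and the same key choice of $S=\{w \mid x \preceq_{1\oplus 2} w\}$ for the converse. If anything, your direct extraction of the witness as a $\preceq_1$-minimal element of $S$, together with the strictness argument ruling out $y$ itself being $\preceq_1$-minimal, is tighter than the paper's contrapositive version, which at the corresponding point asserts that ``we may assume $x\sim_{1\oplus 2} z$'' without fully justifying it.
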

\vspace{-0.5em}
Given COMBI, $\oplus$SB yields: If $\neg B$ is strongly believed in $\Psi \contract A$ then $\bel{(\Psi \contract A) \ast B}\subseteq \bel{\Psi \ast B} \cap \bel{(\Psi \ast \neg A) \ast B}$. Thus, although we cannot have EHI for all $A,B$, the STQ combinator {\em does} guarantee it to hold for a certain restricted class of pairs of sentences, namely those $A,B$ such that $\neg B$ is strongly believed after removing $A$.


To finish this section, we turn to further behaviour for iterated contraction that can be captured thanks to the further principles satisfied by $\STQ$.

Three popular approaches to supplementing the AGM postulates for revision and the DP postulates can be found in the literature: the `natural' \cite{boutilier1996iterated}, `restrained' \cite{booth2006admissible}, and `lexicographic' \cite{nayak1994iterated} approaches. All of these have the semantic consequence that the prior tpo $\preceq_{\Psi}$ determines the posterior tpo $\preceq_{\Psi\ast A}$. 
All three promote the lowest $A$-worlds in $\preceq_\Psi$ to become the lowest overall in $\preceq_{\Psi\ast A}$, but differ on what to do with the rest of the ordering. Natural revision leaves everything else unchanged, restrained revision preserves the strict ordering $\prec_\Psi$ while additionally making every $A$-world $x$ strictly lower than every $\neg A$-world $y$ for which $x \preceq_\Psi y$, and lexicographic revision just makes every $A$-world lower than every $\neg A$-world, while preserving the ordering within each of $\mods{A}$ and $\mods{\neg A}$. 

This raises an obvious question, namely: Which principles of iterated contraction does one recover from the natural, restrained and lexicographic revision operators, respectively, if one defines $\contract$ from $\ast$ using $\STQ$? As it turns out, both the natural and the restrained revision operator yield the very same iterated contraction operator, which has been discussed in the literature under the name of `natural contraction' \cite{nayak2007iterated}, and which sets $\min(\preceq_{\Psi}, \mods{\neg A}) \cup$ \mbox{$\min(\preceq_\Psi, W)$} to be the lowest rank in $\preceq_{\Psi \contract A}$ while leaving $\preceq_{\Psi \contract A}$ otherwise unchanged from $\preceq_\Psi$.
\begin{proposition}
Let $\ast$ be any revision operator--such as the natural or restrained revision operator--satisfying the following property:
\begin{tabbing}
BLAHBLI \= BLAB\=\kill
  \>  If $x,y\notin\min(\preceq_{\Psi},\mods{A})$ and $x\prec_{\Psi} y$, then$x\prec_{\Psi \ast A} y$\\[-0.25em]
\end{tabbing} 
\vspace{-0.5em}
Let $\contract$ be the contraction operator defined from $\ast$ via COMBI  using $\STQ$. Then $\contract$ is the natural contraction operator.
\end{proposition}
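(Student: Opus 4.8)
The plan is to compute the ordered partition $\langle T_1, T_2, \ldots\rangle$ that $\STQ$ assigns to $\preceq_{\Psi \contract A} = \preceq_\Psi \STQ \preceq_{\Psi \ast \neg A}$ and check, cell by cell, that it coincides with the one prescribed by natural contraction. Write $\preceq_1 := \preceq_\Psi$ and $\preceq_2 := \preceq_{\Psi \ast \neg A}$, and abbreviate $M_W := \min(\preceq_\Psi, W)$, $M_{\neg A} := \min(\preceq_\Psi, \mods{\neg A})$ and $R := (M_W \cup M_{\neg A})^c$. Since $\STQ$ takes $a_{\preceq_1, \preceq_2}(i) = \{1,2\}$ for every $i$, the construction collapses to $T_i = \min(\preceq_1, R_i) \cup \min(\preceq_2, R_i)$, where $R_i := (\bigcup_{k < i} T_k)^c$ is the set of as-yet-unprocessed worlds. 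The target partition of natural contraction is $\langle M_W \cup M_{\neg A}, \ldots \rangle$, with bottom cell $M_W \cup M_{\neg A}$ and, above it, the successive $\preceq_\Psi$-minimal layers of $R$.

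First I would settle the bottom cell. Here $R_1 = W$, so $T_1 = \min(\preceq_1, W) \cup \min(\preceq_2, W)$. The first term is $M_W$ by definition, and the second is $\min(\preceq_{\Psi \ast \neg A}, W) = \mods{\bel{\Psi \ast \neg A}} = M_{\neg A}$ by the standard AGM representation for single-step revision recalled in the Preliminaries. Hence $T_1 = M_W \cup M_{\neg A}$, which already matches the lowest rank of natural contraction, and consequently $R_2 = R$.

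The heart of the argument is the following lemma: for every $R' \subseteq M_{\neg A}^c$ one has $\min(\preceq_2, R') \subseteq \min(\preceq_1, R')$. To see this, take $z \in \min(\preceq_2, R')$ and suppose toward a contradiction that $z \notin \min(\preceq_1, R')$, so that $w \prec_\Psi z$ for some $w \in R'$. Since $w, z \in R' \subseteq M_{\neg A}^c$, neither lies in $\min(\preceq_\Psi, \mods{\neg A})$; applying the hypothesised property with the revision sentence taken to be $\neg A$ (and $x := w$, $y := z$) then yields $w \prec_{\Psi \ast \neg A} z$, contradicting the $\preceq_2$-minimality of $z$ in $R'$. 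Now, for every $i \geq 2$ we have $R_i \subseteq R_2 = R \subseteq M_{\neg A}^c$, so the lemma gives $\min(\preceq_2, R_i) \subseteq \min(\preceq_1, R_i)$ and therefore $T_i = \min(\preceq_1, R_i)$. Thus from the second cell onward the construction simply peels off the $\preceq_\Psi$-minimal layers of $R$ one at a time, which is exactly the ordered partition of $\preceq_\Psi$ restricted to $R$. Combined with the computation of $T_1$, this shows that $\preceq_{\Psi \contract A}$ places $M_W \cup M_{\neg A}$ at the bottom and leaves the order on $R$ as in $\preceq_\Psi$, i.e. it is the natural contraction operator.

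I expect the lemma to be the only genuinely delicate step; the rest is bookkeeping on ordered partitions. Conceptually, what the property buys us is that revision by $\neg A$ never drags a world of $R$ strictly below its $\preceq_\Psi$-rank, so whenever $\preceq_2$ would pull a world into a minimal layer, $\preceq_1$ has already placed it there. This is also what explains the coincidence of natural and restrained revision asserted in the statement: the two differ only in how they break ties between equally plausible $A$- and $\neg A$-worlds, but since both members of such a tie already belong to $\min(\preceq_1, R_i)$, the union $\min(\preceq_1, R_i) \cup \min(\preceq_2, R_i)$ absorbs the discrepancy and returns the same cell either way.
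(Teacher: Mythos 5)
Your proof is correct, but it proceeds quite differently from the paper's. The paper verifies the biconditional $x\preceq_{\Psi\contract A}y \Leftrightarrow x\preceq_{\Psi\contract_N A}y$ pointwise, by a case split on whether $x$ and $y$ lie in the bottom cell $\min(\preceq_{\Psi},\mods{\neg A})\cup\min(\preceq_{\Psi},W)$, and it works entirely from the \emph{axiomatic} side of $\STQ$ --- invoking $\oplus$PAR, $\oplus$SPU and $\oplus$WPU together with CR$\ast$1, CR$\ast$2 and CR$\ast$4 --- rather than from the queue construction itself. You instead compute the ordered partition of $\preceq_\Psi\STQ\preceq_{\Psi\ast\neg A}$ directly from the definition, and reduce everything to the single lemma that $\min(\preceq_{\Psi\ast\neg A},R')\subseteq\min(\preceq_\Psi,R')$ whenever $R'$ is disjoint from $\min(\preceq_\Psi,\mods{\neg A})$, which is an immediate consequence of the hypothesised preservation property applied with input $\neg A$. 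That lemma does all the work: it forces $T_i=\min(\preceq_\Psi,R_i)$ for every $i\geq 2$, so above the bottom cell the combination just replays the strata of $\preceq_\Psi$. Your route is shorter, dispenses with any appeal to the DP postulates (you need only the faithfulness identification $\min(\preceq_{\Psi\ast\neg A},W)=\min(\preceq_\Psi,\mods{\neg A})$ from the Preliminaries, which the paper's proof also uses tacitly in its Case 1), and makes the mechanism transparent; what the paper's axiomatic route buys is independence from the explicit construction, so that the same argument would apply to any combinator known only to satisfy $\oplus$SPU+ and $\oplus$PAR. Two minor points: the degenerate case where $\neg A$ is inconsistent (so $\mods{\neg A}=\emptyset$) is glossed over in both proofs; and your closing remark about natural versus restrained \emph{revision} is an aside that slightly oversimplifies how the two differ, but it plays no role in the argument since both demonstrably satisfy the hypothesised property.
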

We do not have a characterisation of the operator that is recovered from lexicographic revision in this manner, which we call the {\em STQ-lex contraction} operator. That is, STQ-lex contraction sets $\preceq_{\Psi \contract A} = \preceq_\Psi \STQ \preceq_{\Psi \ast_L \neg A}$, where $\ast_L$ is lexicographic revision. We can report, however, that it is distinct from both lexicographic and priority contraction, the other two iterated contraction operators discussed in the literature alongside natural contraction \cite{nayak2007iterated}. Roughly, lexicographic contraction works by setting the $i^{\mathrm{th}}$ rank $S_i$ of $\preceq_{\Psi \contract A}$ to be the union of the $i^{\mathrm{th}}$-lowest $A$-worlds with the $i^{\mathrm{th}}$-lowest $\neg A$-worlds.  

\begin{example}
Suppose $W = \{x,y,z,w\}$ and $\preceq_\Psi$ is the tpo represented by $\langle \{x\}, \{y\}, \{z\}, \{w\} \rangle$. Let $\mods{A} = \{x,w\}$, so that $\preceq_{\Psi \ast_L \neg A} = \langle  \{y\}, \{z\}, \{x\}, \{w\} \rangle$. Then lexicographic contraction yields $\preceq_{\Psi \contract A} = \langle \{x,y\}, \{z,w\} \rangle$ while STQ-lex contraction yields $\preceq_{\Psi \contract A} = \langle \{x,y\}, \{z\},\{w\} \rangle$.
\end{example}
\noindent
Both lexicographic and priority contraction can, however, still be recovered via the TeamQueue approach. Lexicographic contraction can be recovered from lexicographic revision by combining, not $\preceq_{\Psi}$ and $\preceq_{\Psi\ast_L \neg A}$, but rather $\preceq_{\Psi\ast_L A}$ and $\preceq_{\Psi\ast_L \neg A}$ using $\STQ$. Priority contraction can be recovered from lexicographic revision by combining $\preceq_{\Psi}$ and $\preceq_{\Psi\ast\neg A}$ using a TeamQueue combinator. However, the combinator involved is not $\STQ$ but rather the TeamQueue combinator that is most `biased' towards $\preceq_2$: the combinator for which, for all ordered pairs $\langle\preceq_1,\preceq_2\rangle$, $a_{\preceq_1,\preceq_2}(1) = \{1,2\}$, then $a_{\preceq_1,\preceq_2}(j) = \{2\}$ for all $j>1$.

%
%
%
%
%

\section{Conclusions}

We have shown that the issue of extending the Harper identity to iterated belief change (a) is not a straightforward affair but (b) can be fruitfully approached by combining a pair of total preorders by means of TeamQueue combinator. We have also noted that one particular such combinator, the Synchronic TeamQueue combinator $\STQ$ can be put to work to derive various counterparts for contraction of the three best known iterated revision operators.

Whilst the normative appeal of the characteristic syntactic properties $\oplus$LB and $\oplus$UB of the TeamQueue family of combinators is clear enough, we do not, at this stage, have a clear enough grasp of the normative appeal of the further syntactic requirement $\oplus$SB that characterises $\STQ$. We plan to investigate this issue further in future work.

A second issue that we would like to explore is the question of whether or not it is possible to show that the Darwiche-Pearl postulates are {\em equivalent} to the ones proposed by Chopra {\em et al}, given a suitable further bridge principle taking us from iterated {\em contraction} to iterated {\em revision}. Such a task would first involve providing a compelling generalisation of the Levi Identity mentioned in Proposition \ref{EHICtoR} above. 

%
%
%

\setcounter{proposition}{0}
\setcounter{theorem}{0}

\section*{Appendix}

\vspace{1em}

\begin{proposition}\label{EHICtoR}
EHI entails 
\begin{tabbing}
BLAHBLI: \=\kill
(EHIC)   \> $[(\Psi \contract A) \contract B] = [\Psi]\cap[\Psi * \neg B] \cap [\Psi*\neg A]\cap [(\Psi * \neg A) *\neg B]$ \\[-0.25em]
\end{tabbing} 
 \vspace{-0.5em}
and is equivalent to it in the presence of AGM$\ast$3 and the Levi Identity
\begin{tabbing}
BLAHBLI: \=\kill
(LI)   \>  $[\Psi*A]=\Cn{[\Psi\contract \neg A] \cup \{A\}}$\\[-0.25em]
\end{tabbing} 
\end{proposition}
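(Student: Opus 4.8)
The plan is to prove the two claims in turn: first that EHI entails EHIC unconditionally, and then that EHIC yields EHI back once AGM$\ast$3 and LI are in force. For the forward direction I would lean on the fact, already noted in the excerpt, that HI is the $B \equiv \top$ instance of EHI and so is freely available. First I would apply HI to the state $\Psi \contract A$ contracted by $B$, giving
\[
\bel{(\Psi \contract A) \contract B} = \bel{\Psi \contract A} \cap \bel{(\Psi \contract A) \ast \neg B}.
\]
Then I would expand each conjunct: the first by HI again, as $\bel{\Psi \contract A} = \bel{\Psi} \cap \bel{\Psi \ast \neg A}$; the second by the instance of EHI with revision input $\neg B$, as $\bel{(\Psi \contract A) \ast \neg B} = \bel{\Psi \ast \neg B} \cap \bel{(\Psi \ast \neg A) \ast \neg B}$. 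Intersecting and reordering the four resulting factors is exactly EHIC. This half is pure bookkeeping and should present no real difficulty.

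For the converse I would start from LI, which rewrites the left-hand side of EHI as a contraction followed by an expansion:
\[
\bel{(\Psi \contract A) \ast B} = \Cn{\bel{(\Psi \contract A) \contract \neg B} \cup \{B\}}.
\]
Applying EHIC with $\neg B$ in place of $B$, and using AGM$\ast$6 to replace $\neg\neg B$ by $B$, gives $\bel{(\Psi \contract A) \contract \neg B} = X$, where $X := \bel{\Psi} \cap \bel{\Psi \ast B} \cap \bel{\Psi \ast \neg A} \cap \bel{(\Psi \ast \neg A) \ast B}$. It then remains to establish $\Cn{X \cup \{B\}} = Y$, where $Y := \bel{\Psi \ast B} \cap \bel{(\Psi \ast \neg A) \ast B}$ is the right-hand side of EHI.

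The inclusion $\Cn{X \cup \{B\}} \subseteq Y$ is immediate: $X \subseteq Y$ since $X$ has both factors of $Y$ among its conjuncts, and $B$ lies in each factor of $Y$ by AGM$\ast$2, so $Y$ is a deductively closed superset of $X \cup \{B\}$. The main obstacle is the reverse inclusion $Y \subseteq \Cn{X \cup \{B\}}$, and this is precisely where AGM$\ast$3 is needed. Here I would take an arbitrary $C \in Y$ and show $B \rightarrow C \in X$: from $C \in \bel{\Psi \ast B}$ and AGM$\ast$3 I obtain $B \rightarrow C \in \bel{\Psi}$, and symmetrically $C \in \bel{(\Psi \ast \neg A) \ast B}$ gives $B \rightarrow C \in \bel{\Psi \ast \neg A}$; moreover $B \rightarrow C$ is a logical consequence of $C$, so by closure it also lies in the two remaining factors $\bel{\Psi \ast B}$ and $\bel{(\Psi \ast \neg A) \ast B}$. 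Hence $B \rightarrow C \in X$, and modus ponens with $B$ recovers $C \in \Cn{X \cup \{B\}}$. Combining the two inclusions gives $\Cn{X \cup \{B\}} = Y$, which is EHI, completing the equivalence.
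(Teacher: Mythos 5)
Your proposal is correct and follows essentially the same route as the paper's own proof: the forward direction decomposes $[(\Psi \contract A)\contract B]$ via HI and then expands $[(\Psi\contract A)\ast\neg B]$ by EHI, and the converse uses LI plus EHIC and then establishes $\Cn{X\cup\{B\}}=Y$ by the same AGM$\ast$3/deduction-theorem argument (the paper phrases it with $\neg B$ as the revision input, but that is only a relabelling). Your explicit justification of the ``immediate'' inclusion via AGM$\ast$1 and AGM$\ast$2 is a harmless elaboration of what the paper leaves implicit.
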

\begin{proof}
From EHI to EHIC: By HI, which EHI entails, $[(\Psi \contract A) \contract B] = [\Psi\contract A] \cap [(\Psi \contract A) * \neg B] = [\Psi] \cap [\Psi*\neg A] \cap [(\Psi \contract A) * \neg B]$. By EHI, we have $[(\Psi \contract A) * \neg B] =[\Psi * \neg B] \cap [(\Psi * \neg A) * \neg B]$ and hence  $[(\Psi \contract A) \contract B] =  [\Psi]\cap[\Psi * \neg B] \cap [\Psi*\neg A]\cap[(\Psi * \neg A) * \neg B]$ as required.  

From EHIC to EHI: By LI, we have $[(\Psi \contract A) * \neg B] =\textrm{Cn}([(\Psi\contract A)\contract B] \cup\{\neg B\})$. By EHIC, we have $[(\Psi\contract A)\contract B]= [\Psi]\cap[\Psi * \neg B] \cap [\Psi*\neg A] \cap[(\Psi * \neg A) * \neg B]$.  So to recover EHI, we need to show that $\textrm{Cn}([\Psi]\cap[\Psi * \neg B] \cap [\Psi*\neg A] \cap[(\Psi * \neg A) * \neg B]\cup\{\neg B\})=[\Psi * \neg B] \cap[(\Psi * \neg A) * \neg B]$.
The left-to-right direction, i.e.  $\textrm{Cn}([\Psi]\cap[\Psi * \neg B] \cap [\Psi*\neg A] \cap[(\Psi * \neg A) * 
\neg B]\cup\{\neg B\})\subseteq[\Psi * \neg B] \cap[(\Psi * \neg A) * \neg B]$, is immediate. Regarding the right-to-left, assume, for some arbitrary $C$, that $C\in [\Psi * \neg B] \cap[(\Psi * \neg A) * \neg B]$. Firstly, it follows by AGM$\ast$3 and the deduction theorem that $\neg B\rightarrow C\in [\Psi]$ and $\neg B\rightarrow C\in [\Psi*\neg A]$. Secondly,  it follows by deductive closure of belief sets that $\neg B\rightarrow C\in [\Psi * \neg B] \cap[(\Psi * \neg A) * \neg B]$. Therefore  $\neg B\rightarrow C\in [\Psi]\cap[\Psi * \neg B] \cap [\Psi*\neg A] \cap[(\Psi * \neg A) * \neg B]$ and hence $C\in \textrm{Cn}([\Psi]\cap[\Psi * \neg B] \cap [\Psi*\neg A] \cap[(\Psi * \neg A) * 
\neg B]\cup\{\neg B\})$, as required. 
\end{proof}

\vspace{1em}

\begin{proposition}\label{EHICtriv}
In the presence of AGM$\ast$5,  AGM$\ast$6 and AGM$\contract$3, EHI (and more specifically, HI, alongside its left-to-right half  $\bel{(\Psi \contract A) \ast B}\subseteq\bel{\Psi \ast B} \cap \bel{(\Psi \ast \neg A) \ast B}$) entails that there does not exist a belief state $\Psi$ such that: (i) $[\Psi]=\mbox{Cn}(p\wedge q)$, (ii) $[\Psi*\neg p]=\mbox{Cn}(\neg p\wedge q)$ and (iii) $[\Psi *p\leftrightarrow\neg q]=\mbox{Cn}(p\leftrightarrow\neg q)$, where $p$ and $q$ are propositional atoms.
\end{proposition}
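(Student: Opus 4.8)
The plan is to assume, for contradiction, that such a $\Psi$ exists, and to squeeze the belief set $\bel{(\Psi \contract p) \ast (p\leftrightarrow\neg q)}$ between two incompatible bounds. I will instantiate the left-to-right half of EHI with $A = p$ and $B = p\leftrightarrow\neg q$, exploiting throughout that $\neg(p\leftrightarrow q)$ is logically equivalent to $p\leftrightarrow\neg q$.

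The \emph{upper} bound is immediate. The left-to-right half gives $\bel{(\Psi \contract p) \ast (p\leftrightarrow\neg q)} \subseteq \bel{\Psi \ast (p\leftrightarrow\neg q)} \cap \bel{(\Psi \ast \neg p) \ast (p\leftrightarrow\neg q)}$, and dropping the second conjunct and invoking hypothesis (iii) yields $\bel{(\Psi \contract p) \ast (p\leftrightarrow\neg q)} \subseteq \Cn{p\leftrightarrow\neg q}$.

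The \emph{lower} bound is the crux. First, HI with $A = p$, together with (i) and (ii), computes $\bel{\Psi \contract p} = \bel{\Psi} \cap \bel{\Psi \ast \neg p} = \Cn{p\wedge q} \cap \Cn{\neg p\wedge q} = \Cn{q}$. The task is then to show $\Cn{q} \subseteq \bel{(\Psi \contract p) \ast (p\leftrightarrow\neg q)}$, i.e.\ that the whole posterior belief set survives the revision, and the difficulty is to do this \emph{without} a preservation postulate such as AGM$\ast$4. The device is to re-apply HI, now to the state $\Psi \contract p$ and the sentence $p\leftrightarrow q$: since $\neg(p\leftrightarrow q)$ is equivalent to $p\leftrightarrow\neg q$, AGM$\ast$6 lets me rewrite the revision input, and HI delivers $\bel{(\Psi \contract p)\contract(p\leftrightarrow q)} = \Cn{q} \cap \bel{(\Psi \contract p) \ast (p\leftrightarrow\neg q)}$. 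Because $p\leftrightarrow q \notin \Cn{q}$, AGM$\contract$3 makes this contraction vacuous, so the left-hand side equals $\Cn{q}$; hence $\Cn{q} \subseteq \bel{(\Psi \contract p) \ast (p\leftrightarrow\neg q)}$, as required.

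Chaining the two bounds gives $\Cn{q} \subseteq \Cn{p\leftrightarrow\neg q}$, i.e.\ $q \in \Cn{p\leftrightarrow\neg q}$; but $p\leftrightarrow\neg q$ holds at a world satisfying $p\wedge\neg q$, where $q$ is false, so this is absurd. (AGM$\ast$5 enters only to keep the belief sets in play consistent, so that the squeeze is non-degenerate.) The single non-routine insight is the second appeal to HI: recognising that HI together with the vacuity clause AGM$\contract$3 can substitute for an explicit preservation principle is precisely what lets the lower bound go through under the weak postulate set advertised, and this is the step I expect to be the main obstacle to find.
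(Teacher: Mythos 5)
Your proof is correct and is essentially the paper's own argument in concrete form: the paper packages the same moves into a general ``vacuity'' lemma (if $A$ is consistent and $B\in[\Psi\ast A]$ then $[\Psi]\cap[\Psi\ast A]\subseteq[\Psi\ast B]$), derived exactly as you do via HI applied at the level of $\Psi\contract p$, the vacuity clause AGM$\contract$3, and the left-to-right half of EHI, and then instantiates it with $A=\neg p$, $B=p\leftrightarrow\neg q$. The only difference is that by working the instance directly you replace the paper's abstract appeal to AGM$\ast$5 with the concrete observation that $p\leftrightarrow q\notin\Cn{q}$, so your argument in fact needs one postulate fewer.
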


\begin{proof}
We first show that HI and the left-to-right half of EHI jointly entail that $[(\Psi \contract A)\contract B]\subseteq[\Psi * \neg B]$. Indeed, by HI, $[(\Psi \contract A) \contract B] = [\Psi\contract A] \cap [(\Psi \contract A) * \neg B] \subseteq  [(\Psi \contract A) * \neg B]$. By the left-to-right half of EHI, we then have  $[(\Psi \contract A) \contract B] \subseteq  [\Psi * \neg B] \cap [(\Psi * \neg A) * \neg B] \subseteq [\Psi * \neg B]$ as required.  

We now establish that, in the presence of AGM$\ast$5, AGM$\ast$6 and AGM$\contract$3, HI and the left-to-right half of EHI jointly entail the following ``vacuity'' principle:
\begin{tabbing}
BLAHBLI: \=\kill
(VAC)   \> If $A$ is consistent and $B\in[\Psi  * A]$, then $[\Psi]\cap[\Psi * A]\subseteq [\Psi *  B]$
 \\[-0.25em]
\end{tabbing} 
 \vspace{-0.5em}
Indeed, assume that $A$ is consistent and that $B\in[\Psi * A]$. Since $ A$ is consistent, so too is $[\Psi * A]$, by AGM$\ast$5, and hence $ \neg B\notin[\Psi * A]$. Since, by HI,  we have $[\Psi \contract \neg A]=[\Psi]\cap[\Psi *A]$ (with help from AGM$\ast$6), it follows that $\neg B\notin[\Psi \contract \neg A]$. Given AGM$\contract$3, we then have $[(\Psi \contract \neg A)\contract \neg B]=[\Psi \contract \neg A]$, and, by HI, $[(\Psi \contract \neg A)\contract \neg B]=[\Psi]\cap[\Psi * A]$. By the inclusion $[(\Psi \contract \neg A)\contract \neg B]\subseteq[\Psi * B]$, which we have shown above to be derivable from HI and the left-to-right half of EHI (plus AGM$\ast$6), it then follows that $[\Psi]\cap[\Psi * A]\subseteq [\Psi * B]$, as required. 

With this in place, assume VAC and, for reductio, that there exists a belief set satisfying (i) to (iii). It follows from (ii) that $p\leftrightarrow\neg q \in [\Psi * \neg p]$. Given the latter, it then follows from VAC that $[\Psi]\cap[\Psi *\neg p]\subseteq [\Psi * p\leftrightarrow\neg q]$. But by (i) and (ii), $[\Psi]\cap[\Psi *\neg p]=\mbox{Cn}(p\wedge q)\cap\mbox{Cn}(\neg p\wedge q)=\mbox{Cn}(q)$. Hence, by $[\Psi]\cap[\Psi *\neg p]\subseteq [\Psi * p\leftrightarrow\neg q]$, we have $q\in  [\Psi * p\leftrightarrow\neg q]$. But (iii) tells us that $[\Psi *p\leftrightarrow\neg q]=\mbox{Cn}(p\leftrightarrow\neg q)$. Contradiction. 
\end{proof}

 \vspace{1em}

\begin{proposition}\label{SWPUandULB}
 $\oplus$UB and $\oplus$LB are respectively equivalent to 
\begin{tabbing}
BLAHBLII: \=\kill
($\oplus$SPU+)   \> If $x \prec_1 y$ and $z \prec_2 y$ then $x \prec_{1 \oplus 2} y$ or $z \prec_{1 \oplus 2} y$  \\[-0.25em]
\end{tabbing} 
\vspace{-0.5em}
and
\begin{tabbing}
BLAHBLI:I \=\kill
($\oplus$WPU+)   \> If $x \preceq_1 y$ and $z \preceq_2 y$ then either $x \combi y$ or $z \combi y$ \\[-0.25em]
\end{tabbing} 
\end{proposition}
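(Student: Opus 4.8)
The plan is to treat the proposition as two independent equivalences, $\oplus$UB $\Leftrightarrow$ $\oplus$SPU+ and $\oplus$LB $\Leftrightarrow$ $\oplus$WPU+, and in each case to prove the ``min-condition $\Rightarrow$ binary postulate'' direction by contraposition over an arbitrary $S \subseteq W$, and the converse by instantiating the min-condition at the finite witness set $S = \{x,y,z\}$ assembled from the three worlds named in the binary postulate.

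First I would dispatch $\oplus$UB $\Leftrightarrow$ $\oplus$SPU+, which is the cleaner half. For $\oplus$SPU+ $\Rightarrow$ $\oplus$UB, I fix an arbitrary $S$ and $y \in \min(\combi, S)$; if $y$ lay outside both $\min(\preceq_1, S)$ and $\min(\preceq_2, S)$ there would be witnesses $x, z \in S$ with $x \prec_1 y$ and $z \prec_2 y$, whence $\oplus$SPU+ delivers $x \prec_{1 \oplus 2} y$ or $z \prec_{1 \oplus 2} y$, contradicting $y \in \min(\combi, S)$. For the converse I set $S = \{x,y,z\}$ given $x \prec_1 y$ and $z \prec_2 y$; then $y \notin \min(\preceq_1, S) \cup \min(\preceq_2, S)$, so $\oplus$UB forces $y \notin \min(\combi, S)$, and the element of $S$ witnessing this can only be $x$ or $z$, yielding $\oplus$SPU+.

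Next I would handle the easy direction $\oplus$LB $\Rightarrow$ $\oplus$WPU+. Given $x \preceq_1 y$ and $z \preceq_2 y$, I again take $S = \{x,y,z\}$ and apply $\oplus$LB; without loss of generality suppose $\min(\preceq_1, S) \subseteq \min(\combi, S)$. The point to check is that $\min(\preceq_1, S)$ must intersect $\{x,z\}$: if instead $y \in \min(\preceq_1, S)$, then $x \preceq_1 y$ forces $x \sim_1 y$, so $x$ is $\preceq_1$-minimal in $S$ too. Hence some $a \in \min(\preceq_1, S) \cap \{x,z\}$ exists, and $a \in \min(\combi, S)$ gives $a \combi y$, i.e.\ $x \combi y$ or $z \combi y$; the other $\oplus$LB disjunct is symmetric, using $z \preceq_2 y$.

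The hard part will be the remaining direction, $\oplus$WPU+ $\Rightarrow$ $\oplus$LB, since $\oplus$LB demands a \emph{global} choice---that one whole input ordering's minima survive in $\combi$---out of the purely \emph{local} triple condition $\oplus$WPU+. My strategy here is to pivot on a $\combi$-minimal element. Suppose $\oplus$LB fails for some $S$, so there are $a \in \min(\preceq_1, S) \setminus \min(\combi, S)$ and $b \in \min(\preceq_2, S) \setminus \min(\combi, S)$; pick any $c \in \min(\combi, S)$. Minimality of $c$ together with $a, b \notin \min(\combi, S)$ yields $c \prec_{1\oplus 2} a$ and $c \prec_{1\oplus 2} b$ (if, say, $a \combi c$ held, then $a \combi c \combi w$ for all $w \in S$ would put $a$ in $\min(\combi, S)$). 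But $a \preceq_1 c$ and $b \preceq_2 c$ because $a, b$ are respectively $\preceq_1$- and $\preceq_2$-minimal in $S$ and $c \in S$, so $\oplus$WPU+ with pivot $y := c$ forces $a \combi c$ or $b \combi c$, flatly contradicting the two strict inequalities. This contradiction establishes $\oplus$LB, and the choice of $c$ as a $\combi$-minimal witness is the one genuinely non-routine move in the argument.
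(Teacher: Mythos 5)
Your proposal is correct and follows essentially the same route as the paper's proof: both directions of each equivalence are handled by instantiating the min-based condition at the three-element set $\{x,y,z\}$ or by extracting witnesses from a failure of minimality, and your choice of a $\combi$-minimal pivot $c$ in the $\oplus$WPU+ $\Rightarrow$ $\oplus$LB direction plays exactly the role of the paper's common element $x$ below both $y$ and $z$. The only cosmetic difference is that you argue $\oplus$LB $\Rightarrow$ $\oplus$WPU+ directly where the paper passes to the contrapositive; both are sound.
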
 
\vspace{-0.5em}
\begin{proof}
%
%
%
From $\oplus$UB to $\oplus$SPU+: Suppose that $x\prec_{1} y$ and $z \prec_{2} y$. From the former, we know that $\min(\preceq_{1}, \{x,y,z\})\subseteq \{x,z\}$ and from the latter we know that  $\min(\preceq_{2}, \{x,y,z\})\subseteq \{x,z\}$. Thus, by $\oplus$UB, $\min(\preceq_{1\oplus 2}, \{x,y,z\})\subseteq \{x,z\}$. From this, it must the case that $y\notin\min(\preceq_{1\oplus 2}, \{x,y,z\})$, so either $x\prec_{1\oplus 2} y$ or $z\prec_{1\oplus 2} y$, as required.


From $\oplus$SPU+ to $\oplus$UB: Assume for contradiction that there exists an $x$, such that $x\in \min(\preceq_{1\oplus 2}, S)$ but $x\notin \min(\preceq_{1}, S)\cup\min(\preceq_{2}, S)$. From the latter, there exist $y,z\in S $, such that $y\prec_{1} x$ and $z\prec_{2} x$. By $\oplus$SPU+, it then follows that either $y\prec_{1\oplus 2} x$ or $z\prec_{1\oplus 2} x$, contradicting  $x\in \min(\preceq_{1\oplus 2}, S)$. Thus, $\min(\preceq_{1\oplus 2}, S)\subseteq\min(\preceq_{1}, S)\cup\min(\preceq_{2}, S)$, as required.

%
%

From $\oplus$LB to $\oplus$WPU+: We derive the contrapositive of $\oplus$WPU+, namely:  
\begin{itemize}

\item[] If $y\prec_{1\oplus 2} x$ and $y\prec_{1\oplus 2} z$, then $y\prec_1 x$ or $y\prec_{2} z$

\end{itemize}
Assume then that $y\prec_{1\oplus 2} x$ and $y\prec_{1\oplus 2} z$.  It follows from this that $\min(\preceq_{1\oplus 2}, \{x,y,z\})\subseteq \{y\}$. By $\oplus$LB, we then recover either (i) $\min(\preceq_{1}, \{x,y,z\})\subseteq \{y\}$ or (ii) $\min(\preceq_{2}, \{x,y,z\})\subseteq \{y\}$. Assume (i). It follows that $y\prec_1 x$. Assume (ii). It follows that $y\prec_{2} z$. Hence, either $y\prec_1 x$ or $y\prec_{2} z$, as required.

From $\oplus$WPU+ to $\oplus$LB: Assume for reductio that $\oplus$LB fails, so that there exist an $x$ and a $y$ such that  $y\in \min(\preceq_{1}, S)$ and $z\in\min(\preceq_{2}, S)$, but $y,z\notin\min(\preceq_{1\oplus 2}, S)$. From the latter, there exist an $x_1$ and $x_2$ such that $x_1,x_2\in S $, $x_1\prec_{1\oplus 2} y$ and $x_2\prec_{1\oplus 2} z$. Since $\preceq_{1\oplus 2}$ is a total preorder, we may assume that there exists an $x$ such that $x\in S $, $x\prec_{1\oplus 2} y$ and $x\prec_{1\oplus 2} z$. By $\oplus$WPU+, we then have either $x\prec_1 y$ or $x\prec_{2} z$, contradicting our assumption that $y\in \min(\preceq_{1}, S)$ and $z\in\min(\preceq_{2}, S)$.

\end{proof}

\vspace{1em}

\begin{theorem}\label{TQNO}
$\oplus$ is a TeamQueue combinator iff it is a basic combinator that satisfies $\oplus$SPU+, $\oplus$WPU+ and the following `no overtaking' property;
\begin{tabbing}
BLAHBLI: \=\kill
($\oplus$NO)   \> If either (i) $x \preceq_1 y$ and $z \prec_2 y$ or (ii) $x \preceq_2 y$ and $z \prec_1 y$, then either \\
\> $x \combi y$ or $z \prec_{1\oplus 2} y$ \\[-0.25em]
\end{tabbing} 
\end{theorem}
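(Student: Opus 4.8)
The plan is to prove the two directions separately, using Proposition \ref{SWPUandULB} throughout to move freely between $\oplus$SPU+ and $\oplus$UB, and between $\oplus$WPU+ and $\oplus$LB. For an ordered partition $\langle T_1,\dots,T_m\rangle$ of a tpo I write $R_i = \bigcap_{k<i}T_k^c$ for the residual world-set at stage $i$ and use the elementary fact $\min(\combi, R_i) = T_i$. For the forward direction (TeamQueue $\Rightarrow$ the three properties), I start from a witnessing sequence $\langle a(i)\rangle$. Basicness ($\oplus$HI) is immediate from $(a2)$, since $T_1 = \min(\preceq_1,W)\cup\min(\preceq_2,W)$. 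For $\oplus$UB I take $w \in \min(\combi, S)$, let $r$ be the first stage with $T_r \cap S \neq \emptyset$ (so $S \subseteq R_r$ and $w \in T_r$), note $w \in \min(R_r, \preceq_j)$ for some processed $j \in a(r)$, and conclude $w \in \min(\preceq_j, S)$ because $S \subseteq R_r$. For $\oplus$WPU+ and $\oplus$NO I argue by contradiction from the assumption that $y$ strictly precedes the relevant world(s): writing $y \in T_r$ with $y \in \min(R_r, \preceq_{j_0})$, $j_0 \in a(r)$, the hypothesis combined with $\preceq_{j_0}$-minimality of $y$ in $R_r$ either pulls the offending world back into $T_r$ (contradicting its later rank) or contradicts strict minimality. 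These are short residual/rank computations.

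The converse is the substantive half. Given that $\combi$ is basic and satisfies the three properties, I reconstruct a witnessing sequence directly from its \emph{own} partition by setting $a(i) = \{\, j \in \{1,2\} : \min(R_i,\preceq_j)\subseteq T_i \,\}$ (extended arbitrarily past stage $m$). Then $a(1) = \{1,2\}$ follows from $\oplus$HI, and $a(i)\neq\emptyset$ follows from $\oplus$LB applied to $S = R_i$, which yields $\min(R_i,\preceq_1)\subseteq T_i$ or $\min(R_i,\preceq_2)\subseteq T_i$. The inclusion $\bigcup_{j\in a(i)}\min(R_i,\preceq_j)\subseteq T_i$ holds by the very definition of $a(i)$, so by induction on $i$ (the residuals of the reconstructed partition agreeing with the $R_i$ at each stage) the whole claim reduces to the reverse inclusion $T_i \subseteq \bigcup_{j\in a(i)}\min(R_i,\preceq_j)$.

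This last inclusion is the crux, and it is where $\oplus$NO does the real work. Given $w \in T_i$, $\oplus$UB places $w$ in $\min(R_i,\preceq_{j_0})$ for some $j_0$; if $j_0 \in a(i)$ we are done. The delicate case is $j_0 \notin a(i)$, i.e.\ $\min(R_i,\preceq_{j_0})$ is split across cells, so some $w'' \in \min(R_i,\preceq_{j_0})$ lies in a later cell and $w \prec_{1\oplus 2} w''$. Here $\oplus$LB forces the \emph{other} queue $j_1$ to satisfy $j_1 \in a(i)$, and I then show $w \in \min(R_i,\preceq_{j_1})$: assuming otherwise yields some $v \in R_i$ with $v \prec_{j_1} w$, and applying the relevant case of $\oplus$NO with $x = w''$, $y = w$, $z = v$ (legitimate since $w'' \preceq_{j_0} w$) forces either $w'' \combi w$ — impossible, as $w \prec_{1\oplus 2} w''$ — or $v \prec_{1\oplus 2} w$, which contradicts $w \in \min(\combi, R_i)$. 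Hence $w \in \min(R_i,\preceq_{j_1})$ with $j_1 \in a(i)$, completing the inclusion.

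I expect the main obstacle to be exactly this crux step, and in particular the realisation that the tempting lemma ``each $\min(R_i,\preceq_j)$ lies wholly within a single cell'' is \emph{false} for TeamQueue outputs: a queue's front can be eroded across several stages whenever only the other queue is processed. This is why $a(i)$ must admit a queue only when its \emph{entire} current front is absorbed into $T_i$, and why a simple $\oplus$UB/$\oplus$LB argument does not suffice. The precise role of $\oplus$NO is to guarantee that any world whose own front is split is nonetheless captured by the fully-absorbed front of the other queue, which is the content that the two Pareto-style postulates alone cannot deliver.
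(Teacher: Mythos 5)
Your proof is correct, and the core of your converse direction coincides with the paper's: you define the witnessing sequence by admitting queue $j$ at stage $i$ exactly when $\min(R_i,\preceq_j)$ is wholly contained in the $i$-th cell, get $(a2)$ from $\oplus$HI and $(a1)$ from $\oplus$LB, and run the same induction on cells. Where you genuinely diverge is in the overall decomposition. The paper routes the entire theorem through an intermediate ``trifurcation'' property $\oplus$TRI (that $\min(\combi,S)$ always equals $\min(\preceq_1,S)$, $\min(\preceq_2,S)$, or their union): it first proves that $\oplus$SPU+, $\oplus$WPU+ and $\oplus$NO are jointly equivalent to $\oplus$TRI by pure three-element-set manipulations, and then lets $\oplus$TRI do all the work in the sequence reconstruction (both the non-emptiness of $a(i)$ and the reverse inclusion $T_i\subseteq\bigcup_{j\in a(i)}\min(R_i,\preceq_j)$ are read off from it). You bypass $\oplus$TRI entirely and prove that reverse inclusion directly: in the delicate case where $w$'s own front $\min(R_i,\preceq_{j_0})$ is split across cells, you invoke $\oplus$NO with $x=w''$, $y=w$, $z=v$ to force $w$ into the fully-absorbed front of the other queue. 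That application is valid ($w''\preceq_{j_0}w$ and $v\prec_{j_1}w$ instantiate the antecedent, and both disjuncts of the consequent are refuted), so your argument closes. The trade-off: the paper's detour yields Proposition 4 (the $\oplus$TRI characterisation) as a reusable by-product, which it needs elsewhere, whereas your route is more self-contained and makes visible exactly which postulate ($\oplus$NO) rescues the reconstruction when a queue's front is eroded across stages --- a point the $\oplus$TRI formulation obscures. Your forward direction is only sketched for $\oplus$WPU+ and $\oplus$NO, but the residual-rank case analysis you indicate is the same one the paper carries out in full.
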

\vspace{-0.5em}
\begin{proof}
We prove that $\oplus$ satisfies $\oplus$SPU+, $\oplus$WPU+ and $\oplus$NO iff it satisfies
\begin{tabbing}
BLAHBLI: \=\kill
($\oplus$TRI)   \> $\min(\preceq_{1\oplus 2}, S)$ is equal to either $\min(\preceq_{1}, S)$, $\min(\preceq_{2}, S)$ or $\min(\preceq_{1}, S)$\\
\> $\cup\min(\preceq_{2}, S)$ \\[-0.25em]
\end{tabbing} 
\vspace{-0.5em}
The desired result then follows from Proposition \ref{tri} below.

We first show that $\oplus$SPU+, $\oplus$WPU+ and $\oplus$NO entail $\oplus$TRI. 

We know that $\min(\preceq_{1\oplus 2}, S)\subseteq \min(\preceq_{1}, S)\cup \min(\preceq_{2}, S)$ from $\oplus$SPU+. Indeed, assume that $y\in \min(\preceq_{1\oplus 2}, S)$ but, for reductio, that $y\notin \min(\preceq_{1}, S)\cup \min(\preceq_{2}, S)$. Then $\exists x,z\in S$ such that $x\prec_1 y$ and $z\prec_2 y$. Then, by $\oplus$SPU+, either $x\prec_{1\oplus 2} y$ or $z\prec_{1\oplus 2} y$. Either way, we get $y\notin\min(\preceq_{1\oplus 2}, S)$. Contradiction. Hence, $y\in \min(\preceq_{1}, S)\cup \min(\preceq_{2}, S)$, as required.

Now if the converse holds, i.e.~$\min(\preceq_{1}, S)\cup \min(\preceq_{2}, S)\subseteq\min(\preceq_{1\oplus 2}, S)$, then we have $\min(\preceq_{1\oplus 2}, S)=\min(\preceq_{1}, S)\cup \min(\preceq_{2}, S)$ and we are done. So assume $\min(\preceq_{1}, S)\cup \min(\preceq_{2}, S)\nsubseteq\min(\preceq_{1\oplus 2}, S)$. Then either $\min(\preceq_{1}, S)\nsubseteq\min(\preceq_{1\oplus 2}, S)$ or $\min(\preceq_{2}, S)\nsubseteq\min(\preceq_{1\oplus 2}, S)$. Let's assume $\min(\preceq_{1}, S)\nsubseteq\min(\preceq_{1\oplus 2}, S)$. We will show that this implies $\min(\preceq_{1\oplus 2}, S)=\min(\preceq_{2}, S)$, which will suffice. (If instead we assume $\min(\preceq_{2}, S)\nsubseteq\min(\preceq_{1\oplus 2}, S)$, then the same reasoning will show $\min(\preceq_{1\oplus 2}, S)=\min(\preceq_{1}, S)$, which also suffices.) Since $\min(\preceq_{1}, S)\nsubseteq\min(\preceq_{1\oplus 2}, S)$, let $x\in \min(\preceq_{1}, S)$ but $x\notin \min(\preceq_{1\oplus 2}, S)$.

We first derive $\min(\preceq_{1\oplus 2}, S)\subseteq\min(\preceq_{2}, S)$. Let $y\in\min(\preceq_{1\oplus 2}, S)$ and assume for reductio that $y\notin \min(\preceq_{2}, S)$. Then $\exists z\in S$ such that $z\prec_{2} y$. From $y\in\min(\preceq_{1\oplus 2}, S)$, we know that $y\preceq_{1\oplus 2} z$. From $x\in \min(\preceq_{1}, S)$, we also know that $x\preceq_{1} y$. From $z\prec_{2} y$, $y\preceq_{1\oplus 2} z$ and $x\preceq_{1} y$, we can deduce by $\oplus$NO that $x\preceq_{1\oplus 2} y$, in contradiction with $x\not\in\min(\preceq_{1\oplus 2}, S)$. Hence, $y\in \min(\preceq_{2}, S)$, as required.

We now derive $\min(\preceq_{2}, S)\subseteq \min(\preceq_{1\oplus 2}, S)$. Let $y\in \min(\preceq_{2}, S)$ and assume, for reductio, that $y\notin\min(\preceq_{1\oplus 2}, S)$. From $x,y\notin\min(\preceq_{1\oplus 2}, S)$, $\exists z\in S$, such that $z\prec_{1\oplus 2}x$ and $z\prec_{1\oplus 2}y$. Then, from $\oplus$WPU+, we have either $z\prec_{1} x$ or $z\prec_{2} y$. If $z\prec_{1} x$, then we contradict $x\in \min(\preceq_{1}, S)$. If $z\prec_{2} y$, then we contradict $y\in \min(\preceq_{2}, S)$. Either way, we get a contradiction, so $y\in\min(\preceq_{1\oplus 2}, S)$, as required.

Finally, we show that $\oplus$TRI entails $\oplus$SPU+, $\oplus$WPU+ and $\oplus$NO. 

Regarding $\oplus$SPU+: From $\oplus$TRI, we know that, $\forall S$, $\min(\preceq_{1\oplus 2}, S)\subseteq \min(\preceq_{1}, S)\cup \min(\preceq_{2}, S)$. Now suppose that $x\prec_{1} y$ and $z\prec_{2} y$. Then $y\notin\min(\preceq_{1},\{x,y,z\})\cup \min(\preceq_{2},\{x,y,z\})$. Hence $y\notin\min(\preceq_{1\oplus 2},\{x,y,z\})$, so $x\prec_{1\oplus 2} y$ or $z\prec_{1\oplus 2} y$, as required.

Regarding $\oplus$WPU+: From $\oplus$TRI, we know that, $\forall S$, either $\min(\preceq_{1}, S)\subseteq \min(\preceq_{1\oplus 2}, S)$ or $\min(\preceq_{2}, S)\subseteq \min(\preceq_{1\oplus 2}, S)$. This is the property $\oplus$LB and we already proved in Proposition \ref{SWPUandULB} that it entails $\oplus$WPU+.

Regarding $\oplus$NO: From $\oplus$TRI, we know that, $\forall S$, $i\neq j$, either $ \min(\preceq_{1\oplus 2}, S)\subseteq \min(\preceq_{i}, S)  $ or $\min(\preceq_{j}, S)\subseteq \min(\preceq_{1\oplus 2}, S)$. Now assume $x\prec_{i} y$, $y\preceq_{1\oplus 2} x$, $z\preceq_{j} y$ and, for reductio, $y \prec_{1\oplus 2} z$. From $y\preceq_{1\oplus 2} x$ and $y \prec_{1\oplus 2} z$, we get $y\in\min(\preceq_{1\oplus 2},\{x,y,z\})$ but from $x\prec_{i} y$, we get $y\notin\min(\preceq_{i},\{x,y,z\})$. Hence $\min(\preceq_{1\oplus 2},\{x,y,z\})\nsubseteq\min(\preceq_{i},\{x,y,z\})$. From this and the property cited at the beginning of this paragraph, we get $\min(\preceq_{j},\{x,y,z\})\subseteq\min(\preceq_{1\oplus 2},\{x,y,z\})$. We also know from $\oplus$TRI that $\min(\preceq_{1\oplus 2},\{x,y,z\})\subseteq \min(\preceq_{1},\{x,y,z\})\cup \min(\preceq_{2},\{x,y,z\})$. Hence, since $y\in\min(\preceq_{1\oplus 2},\{x,y,z\})$ and $y\notin\min(\preceq_{i},\{x,y,z\})$, we get $y\in\min(\preceq_{j},\{x,y,z\})$. Hence, since $z\preceq_{j} y$, $z\in\min(\preceq_{j},\{x,y,z\})$ and so, from $\min(\preceq_{j},\{x,y,z\})\subseteq\min(\preceq_{1\oplus 2},\{x,y,z\})$, $z\in\min(\preceq_{1\oplus 2},\{x,y,z\})$, contradicting $y\prec_{1\oplus 2} z$. Hence $z \preceq_{1\oplus 2} y$, as required.

\end{proof}

\vspace{1em}

\begin{proposition}\label{tri}
$\oplus$ is a TeamQueue combinator iff it is a basic combinator that satisfies the following `trifurcation' property:
\begin{tabbing}
BLAHBLI: \=\kill
($\oplus$TRI)   \> $\min(\preceq_{1\oplus 2}, S)$ is equal to either $\min(\preceq_{1}, S)$, $\min(\preceq_{2}, S)$ or $\min(\preceq_{1}, S)$\\
\> $\cup\min(\preceq_{2}, S)$ \\[-0.25em]
\end{tabbing} 
\end{proposition}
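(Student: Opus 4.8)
The plan is to prove the equivalence in two directions, pivoting on a single structural observation. Write $\langle T_1, \ldots, T_m\rangle$ for the ordered partition of $\combi$ and set $U_i := \bigcap_{k<i} T_k^c$ (the union of the cells from $T_i$ onwards, using the order-first $\min$ notation of $\oplus$TRI and $\oplus$HI). The two facts I would record at the outset are that $\min(\combi, U_i) = T_i$, and, more generally, that $\min(\combi, S) = S \cap T_i$ where $T_i$ is the \emph{first} cell meeting $S$ (immediate from the fact that lower cells are strictly more plausible). The technical engine will be the following elementary dichotomy, which I would isolate as a lemma: if $\emptyset \neq S \subseteq U$, then for each tpo $\preceq_j$ the set $S \cap \min(\preceq_j, U)$ is either empty or equal to $\min(\preceq_j, S)$. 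Indeed, if it contains some $s$, then $s$ is $\preceq_j$-minimal in the superset $U$, hence $\preceq_j$-minimal in $S$ with the \emph{same} minimal rank, forcing $S \cap \min(\preceq_j, U) = \min(\preceq_j, S)$.

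For the left-to-right direction (TeamQueue $\Rightarrow$ basic $+$ $\oplus$TRI), basicness is immediate from (a2), as already noted in the text. For $\oplus$TRI, fix $S$ and let $T_i$ be the first cell meeting $S$, so that $S \subseteq U_i$ and $\min(\combi, S) = S \cap T_i = \bigcup_{j \in a(i)} \big(S \cap \min(\preceq_j, U_i)\big)$, which is nonempty. By the lemma each term is either $\min(\preceq_j, S)$ or empty, and at least one is nonempty. A case split on $a(i) \in \{\{1\}, \{2\}, \{1,2\}\}$ then lands $\min(\combi, S)$ in the required triple: the singleton choices give $\min(\preceq_1, S)$ or $\min(\preceq_2, S)$ directly, while $a(i) = \{1,2\}$ yields the union when both terms survive and a single $\min$ when one term collapses to $\emptyset$.

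For the right-to-left direction I would \emph{construct} the witnessing sequence from the partition $\langle T_1, \ldots, T_m \rangle$ of $\combi$. Using $\min(\combi, U_i) = T_i$, applying $\oplus$TRI with $S = U_i$ shows $T_i$ equals $\min(\preceq_1, U_i)$, $\min(\preceq_2, U_i)$, or their union; I set $a(i)$ to $\{1\}$, $\{2\}$, or $\{1,2\}$ accordingly (any valid option will do). Then (a1) holds since each $a(i)$ is a nonempty subset of $\{1,2\}$, and for (a2) we use that at $i=1$ we have $U_1 = W$, so by $\oplus$HI (basicness) $T_1 = \min(\preceq_1, W) \cup \min(\preceq_2, W)$, allowing $a(1) = \{1,2\}$. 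A routine induction then shows the TeamQueue construction driven by this sequence reproduces the $T_i$: the base case is $\oplus$HI, and at step $i$ the inductive hypothesis gives $\bigcap_{k<i} T_k^c = U_i$, which together with the choice of $a(i)$ delivers the cell. Since both constructions exhaust $W$ with the same nonempty cells, they terminate at the same $m$, so $\oplus$ is a TeamQueue combinator.

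The one genuinely delicate point, and the step I would flag as the main obstacle, is the dichotomy lemma used in Direction 1: it is tempting but \emph{false} to assume $S \cap \min(\preceq_j, U_i) = \min(\preceq_j, S)$ in general, since elements of $U_i \setminus S$ lying strictly below the $S$-minimum can make the intersection empty even when $\min(\preceq_j, S)$ is not. Ensuring the case analysis on $a(i)$ absorbs these collapses cleanly is where the care lies; everything else is bookkeeping with ordered partitions.
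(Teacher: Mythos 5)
Your proof is correct, and its right-to-left half is essentially the paper's: both read the witnessing sequence off the partition of $\combi$ by applying $\oplus$TRI to $S=\bigcap_{k<i}T_k^c$ to decide which of $\{1\},\{2\},\{1,2\}$ to put in $a_{\preceq_1,\preceq_2}(i)$, use $\oplus$HI to secure $(a2)$, and then check by induction that the TeamQueue construction driven by that sequence reproduces the cells. Where you genuinely diverge is the left-to-right half. The paper never argues TeamQueue $\Rightarrow$ $\oplus$TRI directly: it shows that any TeamQueue combinator satisfies the three binary-comparison postulates $\oplus$SPU+, $\oplus$WPU+ and $\oplus$NO, and then leans on the equivalence of that triple with $\oplus$TRI, which is established inside the proof of Theorem 1 (so the two proofs are deliberately interleaved). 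You instead argue directly on the partition: locate the first cell $T_i$ meeting $S$, write $\min(\combi,S)=S\cap T_i$ as $\bigcup_{j\in a(i)}\bigl(S\cap\min(\preceq_j,\bigcap_{k<i}T_k^c)\bigr)$, and apply your dichotomy lemma, namely that for $S\subseteq U$ the set $S\cap\min(\preceq_j,U)$ is either empty or equal to $\min(\preceq_j,S)$. That lemma is correct (a point of $S$ that is $\preceq_j$-minimal in $U$ is minimal in $S$ at the same rank, and conversely every element of $\min(\preceq_j,S)$ then shares that rank), and you rightly flag the one trap: the intersection can be empty, which is exactly what produces the two degenerate branches of the trifurcation. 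Your route is shorter and self-contained if this proposition is the only target, and it makes transparent why precisely three outcomes can occur; the paper's detour through SPU+/WPU+/NO pays for itself because that triple is needed anyway for Theorem 1 and for the domain-restricted simplification of Proposition \ref{TQ}, so the equivalence with $\oplus$TRI is proved once and reused. Two trivialities both treatments leave implicit and that you could mention for completeness: the case $S=\emptyset$ is vacuous, and the sequence $a_{\preceq_1,\preceq_2}(i)$ must be extended arbitrarily (say to $\{1,2\}$) for indices $i$ beyond the length of the partition, since it is required to be defined on all of $\mathbb{N}$.
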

\vspace{-0.5em}
\begin{proof}
Right-to-left direction: Let $\oplus$ be any combinator that satisfies those properties. We must specify a sequence $a_{\preceq_1,\preceq_2}$ for each ordered pair $\langle\preceq_1,\preceq_2\rangle$ such that (i) $\oplus_a$ satisfies properties (a1) and (a2) and (ii) $\oplus_a=\oplus$. 

Assume that $\langle S_1, S_2,\ldots, S_n\rangle$ represents $\preceq_{1\oplus 2}$. Then we specify  $a_{\preceq_1,\preceq_2}$ by setting, for all $i$,
\[
j \in a_{\preceq_1,\preceq_2}(i)\
\textrm{iff }
\min(\bigcap_{k < i}S_k^c, \preceq_j)
\subseteq 
S_i
(= \min(\bigcap_{k < i}S_k^c, \preceq_{1\oplus 2})
\]
Regarding (i), $\oplus_a$ satisfies (a1) since $\oplus$ satisfies $\oplus$TRI and (a2) since $\oplus$ satisfies $\oplus$HI

Regarding (ii), let $\langle T_1, T_2,\ldots, T_m\rangle$ represent $\preceq_{1\oplus_a 2}$. We prove by induction that $T_i=S_i$. Regarding $i=1$: The result follows from $\oplus$HI. Regarding the inductive step: Assume $T_j=S_j$, $\forall j<i$. We want to show $T_i=S_i$. By construction, $T_i=\bigcup_{j\in a(i)}\min(\preceq_j ,\bigcap_{k<i}S_k^c)$. So we need to show $\min(\bigcap_{k < i}S_k^c, \preceq_{1\oplus 2})=\bigcup_{j\in a(i)}\min(\preceq_j ,\bigcap_{k<i}S_k^c)$. This follows from $\oplus$TRI.

Left-to-right direction: We show that $\oplus_a$ satisfies each of $\oplus$SPU+, $\oplus$WPU+ and $\oplus$NO.
\begin{itemize}

\item[-] Regarding $\oplus$SPU+: We prove the contrapositive. Suppose $y\preceq_{1\oplus 2}x$ and $y\preceq_{1\oplus 2} z$. Assume $y\in S_i=\bigcup_{j\in a(i)}\min(\preceq_j ,\bigcap_{k<i}S_k^c)\subseteq\min(\preceq_1 ,\bigcap_{k<i}S_k^c)\cup \min(\preceq_2 ,\bigcap_{k<i}S_k^c)$. Assume $y\in\min(\preceq_1 ,\bigcap_{k<i}S_k^c)$. Since $y\preceq_{1\oplus 2} x$, we know that $x\in \bigcap_{k<i}S_k^c$, hence $y\preceq_{1} x$, as required. Similarly, if $y\in\min(\preceq_2 ,\bigcap_{k<i}S_k^c)$, then $y\preceq_{2} z$.

\item[-] Regarding $\oplus$WPU+: We prove the contrapositive. Suppose $y\prec_{1\oplus 2}x$ and $y\prec_{1\oplus 2}z$. Assume $y\in S_i$. Since $y\prec_{1\oplus 2}x$ and $y\prec_{1\oplus 2}z$, we know that $x,z\in \bigcap_{k<i}S_k^c\cap S_i^c$. Now, we know that $S_i$ equals one of $\min(\preceq_1,\bigcap_{k<i}S_k^c)$,  $\min(\preceq_2,\bigcap_{k<i}S_k^c)$ or $\min(\preceq_1,\bigcap_{k<i}S_k^c)\cup\min(\preceq_2,\bigcap_{k<i}S_k^c)$. We consider each case in turn:
\begin{itemize}

\item[(1)] $S_i=\min(\preceq_1,\bigcap_{k<i}S_k^c)$: From $y\in S_i$ and $x\in \bigcap_{k<i}S_k^c\cap S_i^c$, we have $y\prec_1 x$, as required.

\item[(2)] $S_i=\min(\preceq_2,\bigcap_{k<i}S_k^c)$: From $y\in S_i$ and $z\in \bigcap_{k<i}S_k^c\cap S_i^c$, we have $y\prec_2 z$, as required.

\item[(3)] $S_i=\min(\preceq_1,\bigcap_{k<i}S_k^c)\cup\min(\preceq_2,\bigcap_{k<i}S_k^c)$: Either $y\in\min(\preceq_1,\bigcap_{k<i}S_k^c)$, in which case $y\prec_1 x$, or $y\in\min(\preceq_2,\bigcap_{k<i}S_k^c)$, in which case $y\prec_2 z$.

\end{itemize}

\item[-] Regarding $\oplus$NO: We show: If $x\prec_i y$, $y\preceq_{1\oplus 2} x$ and $z\preceq_j y$, then $z\preceq_{1\oplus 2} y$, $i\neq j$, $i,j\in\{1,2\}$. Suppose that $x\prec_i y$, $y\preceq_{1\oplus 2} x$ and $z\preceq_j y$. We must show that $z\preceq_{1\oplus 2} y$. Assume $y\in S_t$. Then, from $y\preceq_{1\oplus 2} x$ and $z\preceq_j y$, we have $x,z\in \bigcap_{k<t}S_t^c$ and furthermore $z\in S_t^c$. We know that $S_t$ equals one of $\min(\preceq_1,\bigcap_{k<t}S_k^c)$,  $\min(\preceq_2,\bigcap_{k<t}S_k^c)$ or $\min(\preceq_1,\bigcap_{k<t}S_k^c)\cup\min(\preceq_2,\bigcap_{k<t}S_k^c)$. From $x\prec_i y$, we know that $y\notin\min(\preceq_i, \bigcap_{k<t}S_k^c)$, hence we must have $y\in\min(\preceq_j, \bigcap_{k<t}S_k^c)$. Furthermore, we are left with either $S_t=\min(\preceq_j, \bigcap_{k<t}S_k^c)$ or $S_t=\min(\preceq_1,\bigcap_{k<t}S_k^c)\cup\min(\preceq_2,\bigcap_{k<t}S_k^c)$. In either case, we must have $y\prec_j z$, as required. 

\end{itemize}

\end{proof}

\vspace{1em}

\begin{proposition}\label{TQ}
Given $\oplus \textit{DOM}$, $\oplus$ is a TeamQueue combinator iff it satisfies $\oplus$SPU+ and $\oplus$WPU+.
\end{proposition}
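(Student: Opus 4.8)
The plan is to reduce everything to the domain-free characterisation already established. By Theorem~\ref{TQNO}, $\oplus$ is a TeamQueue combinator iff it is a basic combinator satisfying $\oplus$SPU+, $\oplus$WPU+ and $\oplus$NO. Since TeamQueue combinators are basic and satisfy $\oplus$SPU+ and $\oplus$WPU+, the left-to-right direction is immediate, and the entire burden falls on the converse: showing that, \emph{in the presence of $\oplus$DOM}, a basic combinator satisfying $\oplus$SPU+ and $\oplus$WPU+ must already satisfy $\oplus$NO. In short, the variant structure guaranteed by $\oplus$DOM is exactly what makes $\oplus$NO redundant.

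Before the main argument I would record two elementary facts: instantiating the second witness to coincide with the first collapses $\oplus$SPU+ into $\oplus$SPU and $\oplus$WPU+ into $\oplus$WPU, and these two Pareto principles---being symmetric in $\preceq_1$ and $\preceq_2$---are all the derivation of $\oplus$NO will need. I would then argue by contradiction. Supposing $\oplus$NO fails, the symmetry of its two disjuncts in the input orderings (together with the symmetry of $\oplus$SPU and $\oplus$WPU) lets me treat a single case: $x \preceq_1 y$ and $z \prec_2 y$, with the negated conclusion $y \prec_{1\oplus 2} x$ and $y \preceq_{1\oplus 2} z$. Fix a set $S$ for which $\preceq_1$ and $\preceq_2$ are $S$-variants, so the orderings agree on every pair drawn from $(S \times S) \cup (S^c \times S^c)$.

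The heart of the proof is a case split on the placement of $x, y, z$ relative to $S$. If $x$ and $y$ lie on the same side of $S$, then $x \preceq_1 y$ upgrades to $x \preceq_2 y$, whence $\oplus$WPU gives $x \combi y$, contradicting $y \prec_{1\oplus 2} x$. If $z$ and $y$ lie on the same side, then $z \prec_2 y$ upgrades to $z \prec_1 y$, whence $\oplus$SPU gives $z \prec_{1\oplus 2} y$, contradicting $y \preceq_{1\oplus 2} z$. The residual configuration has $y$ alone on one side and $x, z$ together on the other; there $\preceq_1$ and $\preceq_2$ agree on the comparison between $x$ and $z$, so I split once more. If $x \preceq_1 z$, then $x \preceq_2 z \prec_2 y$ yields $x \prec_2 y$, and $\oplus$WPU with $x \preceq_1 y$ again forces $x \combi y$; if instead $z \prec_1 x$, then $z \prec_1 x \preceq_1 y$ yields $z \prec_1 y$, and $\oplus$SPU with $z \prec_2 y$ forces $z \prec_{1\oplus 2} y$. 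Every branch contradicts one of the negated conclusions, so $\oplus$NO holds and Theorem~\ref{TQNO} delivers the claim.

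I expect the pivotal step to be pinning down the right trichotomy of positions: as soon as either $(x,y)$ or $(z,y)$ falls \emph{within} a single side of $S$, the matching Pareto principle fires at once, so all genuinely new reasoning is confined to the configuration in which $y$ is separated from both $x$ and $z$. There the decisive extra datum is that $x$ and $z$, sharing a side, are ordered identically by $\preceq_1$ and $\preceq_2$---precisely the leverage that the domain-free setting of Theorem~\ref{TQNO} lacks, and precisely what lets $\oplus$NO be dropped here. One caveat I would keep explicit: $\oplus$SPU+ and $\oplus$WPU+ do not on their own entail basicness even under $\oplus$DOM (a combinator may collapse $\min(\combi, W)$ to a single input minimum), so the basic hypothesis must be retained throughout rather than derived.
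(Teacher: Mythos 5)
Your proof is correct and follows essentially the same route as the paper's: both reduce the claim to showing that, under $\oplus$DOM, $\oplus$SPU+ and $\oplus$WPU+ (via their instances $\oplus$SPU and $\oplus$WPU) already entail $\oplus$NO, and then invoke Theorem~1; your case split on which of $x,y,z$ share a side of the variant set $S$ is just a reorganisation of the paper's split on the location of $x$ once $y$ and $z$ are forced onto opposite sides. Your closing caveat that basicness must be kept as a separate hypothesis is also well taken and consistent with the main-text statement of the proposition.
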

\begin{proof}
We show that, given $\oplus \textit{DOM}$, if $\oplus$ satisfies $\oplus$SPU+ and $\oplus$WPU+, then it satisfies $\oplus$NO and hence, by Propositions \ref{TQNO} and \ref{SWPU+andSWPU}, is a TeamQueue combinator. 

Suppose $x\prec_i y$, $y\preceq_{1\oplus 2} x$ and $z\preceq_j y$, with $i\neq j$. We must show $z\preceq_{1\oplus 2} y$. If we can show $z\preceq_i y$, then we can conclude $z\preceq_{1\oplus 2} y$ from $\oplus$WPU. So suppose for reductio that $y\prec_i z$. From $\oplus$DOM, $\exists S$, such that, $\forall u,v\in S$, $u\preceq_1 v$ iff $u\preceq_2 v$ and $\forall u,v\in S^c$, $u\preceq_1 v$ iff $u\preceq_2 v$. From $z\preceq_j y$ and $y\prec_i z$, it must be the case that $y\in S$ and $z\in S^c$. If $x\in S$, then from $x\prec_i y$, we get $x\prec_j y$ and so $x\prec_{1\oplus 2} y$ from $\oplus$SPU, contradicting $y\preceq_{1 \oplus 2} x$. If $x\in S^c$, then, since $x\prec_i y\prec_i z$ and $z\in S^c$, $x\prec_j z$. So from this and $z\preceq_j y$, we get $x\prec_j y$ and so again $x\prec_{1\oplus 2} y$ from $\oplus$SPU, contradicting $y\preceq_{1 \oplus 2} x$. Hence, it must be that $z\preceq_i y$, as required. 

\end{proof}

\vspace{1em}

\begin{proposition}\label{SWPU+andSWPU}
Given $\oplus \textit{DOM}$, $\oplus$ satisfies $\oplus$SPU+ and $\oplus$WPU+ iff it satisfies $\oplus$SPU and $\oplus$WPU, respectively.
\end{proposition}
\begin{proof}
We prove this by demonstrating the equivalence, given $\oplus$DOM, of $\oplus$SPU and $\oplus$WPU with $\oplus$UB and $\oplus$LB, respectively, which we have shown (see Proposition \ref{SWPUandULB}) to be equivalent to $\oplus$SPU+ and $\oplus$WPU+, respectively. 

Regarding $\oplus$SPU and $\oplus$UB, our proof is direct. Regarding $\oplus$WPU and $\oplus$LB, we first show that  $\oplus$WPU is equivalent to the following weakening $\oplus$WLB of $\oplus$LB:
\begin{tabbing}
BLAHBLI: \=\kill
($\oplus$WLB)   \> $\min(\preceq_1, S)\cap \min(\preceq_2, S) \subseteq \min(\combi, S)$ \\[-0.25em]
\end{tabbing} 
\vspace{-0.5em}
before showing that $\oplus$WLB is equivalent to $\oplus$LB under the domain restriction $\oplus$DOM.

From $\oplus$UB to $\oplus$SPU: The result follows from the fact that $x\preceq y$ iff $\min(\preceq,\{x,y\})\subseteq \{x\}$.

%
%
%

From $\oplus$SPU to $\oplus$UB: It suffices to show that $\min(\preceq_{1\oplus 2},  S   )\subseteq\min(\preceq_{1},  S  )\cup\min(\preceq_{2},  S   )$. Assume $\oplus$DOM, $\oplus$SPU and that there exists an $x$, such that $x\in \min(\preceq_{1\oplus 2},  S   )$ but, for contradiction, that $x\notin \min(\preceq_{1},  S  )\cup\min(\preceq_{2},  S   )$. From the latter, there exist $y_1,y_2\in S $, such that (i) $y_1 \prec_{1}  x$ and (ii) $y_2\prec_{2}   x$. From the former, (iii) $x\preceq_{1\oplus 2}   y_1$ and (iv) $x\preceq_{1\oplus 2}   y_2$. From (i) and (iii) on the one hand and (ii) and (iv) on the other, by $\oplus$SPU, we recover (v) $x\preceq_{2}   y_1$ and (vi) $x\preceq_{1}  y_2$, respectively. The conjunctions of (i) and (vi), i.e.~$y_1 \prec_{1}  x \preceq_{1}  y_2$, and of (ii) and (v), i.e.~$y_2\prec_{2}   x\preceq_{2}   y_1$, however, jointly contradict $\oplus$DOM, since the latter entails that there exist no $x, y_1, y_2$ such that $y_1\prec_{1}  x\preceq_{1}  y_2$ but $y_2\prec_{2}   x\preceq_{2}   y_1$. Hence $x\in \min(\preceq_{1},  S  )\cup\min(\preceq_{2},  S   )$, as required.

From $\oplus$WPU to $\oplus$WLB: Let $x\in \min(\preceq_1, S)\cap \min(\preceq_2, S)$ and assume for reductio that $x\notin \min(\preceq_{1\oplus 2} ,  S  )$. Then there exists $y\in S $ such that $y\prec_{1\oplus 2}   x$. By $\oplus$WPU, either $y\prec_{1}  x$ or $y\prec_{2}   x$. Assume  $y\prec_{1}  x$ (the other case is analogous). 
Then $x\notin \min(\preceq_1, S)$ and hence $x\notin \min(\preceq_1, S)\cap \min(\preceq_2, S)$. Contradiction. Hence, $x\in \min(\preceq_{1\oplus 2} ,  S  )$, as required.

From $\oplus$WLB to $\oplus$WPU: Suppose $x\preceq_{1}  y$ and $x\preceq_{2}   y$. Then $x\in \min(\preceq_{1}, \{x,y\} )\cap \min(\preceq_{2} , \{x,y\} )$. Assume for reductio that $y\prec_{1\oplus 2}   x$. Then $x\notin\min(\preceq_{1\oplus 2}, \{x,y\})$, so, from $\oplus$WLB, $x\notin \min(\preceq_{1}, \{x,y\})\cap \min(\preceq_{2}, \{x,y\})$. Contradiction. Hence $x\preceq_{1\oplus 2}   y$, as required.

From $\oplus$LB to $\oplus$WLB: Obvious.

From $\oplus$WLB to $\oplus$LB: Assume that $\oplus$LB doesn't hold. Then there exists an $S$ such that $\min(\preceq_{1},  S  )\nsubseteq \min(\preceq_{1\oplus 2},  S   )$ and $\min(\preceq_{2},  S   )\nsubseteq \min(\preceq_{1\oplus 2},  S   )$. So there exist $x,y\in  S $ such that $x\in\min(\preceq_{1},  S  )$, $y\in\min(\preceq_{2},  S   )$ and $x, y \notin\min(\preceq_{1\oplus 2},  S   )$. Hence there exists $z\in S$ such that $z\prec_{1\oplus 2}   x$ and $z\prec_{1\oplus 2}   y$. By $\oplus$WLB, we know from $z\prec_{1\oplus 2}   x$ that either  $z\prec_{1}  x$ or  $z\prec_{2}   x$. From this and the fact that $x\in\min(\preceq_{1},  S  )$, we recover the result that $z\prec_{2}   x$. Similarly, we also recover $z\prec_{1}  y$. So we obtain the following pattern: $x\preceq_{1}  z\prec_{1}  y$ and $y\preceq_{2}   z\prec_{2}   x$. But this is not possible given $\oplus$DOM. Hence $\oplus$LB holds, as required.
\end{proof}

\vspace{1em}
\begin{proposition}\label{DPtoC}
Let $\oplus$ be a TeamQueue combinator, let $\ast$ be an AGM revision operator and let $\contract$ be such that $\preceq_{\Psi \contract A}$ is defined from $\ast$ using $\oplus$. Then, for each $i = 1,2,3,4$, if $\ast$ satisfies (CR$\ast i$) then $\contract$ satisfies (CR$\contract i$).
\end{proposition}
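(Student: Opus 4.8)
The plan is to restate each contraction postulate as a condition on the combined tpo and then discharge it using the Pareto-style unanimity principles that every TeamQueue combinator satisfies. Throughout I would write $\preceq_1$ for the prior tpo $\preceq_\Psi$, $\preceq_2$ for $\preceq_{\Psi \ast \neg A}$, and $\preceq_{1\oplus 2}$ for $\preceq_{\Psi \contract A}$, so that COMBI reads $\preceq_{1\oplus 2} = \preceq_1 \oplus \preceq_2$. Two preliminary observations do all the work. First, since $\oplus$ is a TeamQueue combinator it satisfies $\oplus$SPU+ and $\oplus$WPU+, and hence (instantiating $z := x$ in each) the plain unanimity principles $\oplus$SPU and $\oplus$WPU. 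Second, because the revision underlying the contraction is a revision by $\neg A$, instantiating CR$\ast i$ at the sentence $\neg A$ makes $\mods{\neg A}$ the set of models of the revision input and $\mods{A}$ its complement; with this reading CR$\ast i$ lines up region-for-region with CR$\contract i$.

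Given this alignment, the cross postulates fall out immediately. For CR$\contract$3, suppose $x \in \mods{\neg A}$, $y \in \mods{A}$ and $x \prec_1 y$; then CR$\ast$3 (at $\neg A$) gives $x \prec_2 y$, and $\oplus$SPU delivers $x \prec_{1\oplus 2} y$. Replacing strict comparisons by non-strict ones and $\oplus$SPU by $\oplus$WPU turns CR$\ast$4 into CR$\contract$4 by the same one-line argument.

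The preservation postulates CR$\contract$1 and CR$\contract$2 require both directions of a biconditional. Consider CR$\contract$2, with $x, y \in \mods{A}$; here CR$\ast$2 (at $\neg A$) tells me that $\preceq_1$ and $\preceq_2$ agree on $\mods{A}$. From $x \preceq_1 y$ I obtain $x \preceq_2 y$, and then $x \preceq_{1\oplus 2} y$ by $\oplus$WPU. Conversely, if $x \preceq_{1\oplus 2} y$ but $y \prec_1 x$, then also $y \prec_2 x$, so $\oplus$SPU yields $y \prec_{1\oplus 2} x$, contradicting $x \preceq_{1\oplus 2} y$; hence $x \preceq_1 y$. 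The identical argument run on $\mods{\neg A}$, invoking CR$\ast$1 in place of CR$\ast$2, would establish CR$\contract$1.

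No step presents a real difficulty: the content of the proposition is essentially that the two unanimity properties transport each revision constraint to its contraction counterpart. The only points needing care are the polarity bookkeeping that matches the input and complement regions of CR$\ast i$ to the correct regions of CR$\contract i$, and, for $i \in \{1,2\}$, running $\oplus$SPU in the contrapositive direction to secure the ``only if'' half of the biconditional. Notably, no appeal to $\oplus$DOM or to the fuller TeamQueue characterisation is required, since $\oplus$SPU and $\oplus$WPU already follow from $\oplus$SPU+ and $\oplus$WPU+ unconditionally.
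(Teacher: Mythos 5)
Your proof is correct and follows essentially the same route as the paper: for each $i$ you transfer CR$\ast i$ (instantiated at $\neg A$) to the second argument of the combinator and then apply $\oplus$WPU for the non-strict direction and $\oplus$SPU (contrapositively, for the biconditionals in $i=1,2$) for the strict one, exactly as in the paper's appendix. Your explicit remark that $\oplus$SPU and $\oplus$WPU follow from $\oplus$SPU+ and $\oplus$WPU+ by setting $z:=x$, with no appeal to $\oplus$DOM, is a small but accurate clarification that the paper leaves implicit.
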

\begin{proof} 
From CR$\ast 1$ to CR$\contract 1$: Let $x,y\in\mods{\neg A}$. We must show that $x\preceq_{\Psi\contract A} y$ iff $x\preceq_{\Psi} y$. Note that from CR$\ast 1$, we have  (1) $x\preceq_{\Psi*\neg A} y$ iff $x\preceq_{\Psi} y$. Regarding the left-to-right direction of the equivalence: Assume (2) $y\prec x$. From (1) and (2), we recover (3) $y\prec_{\Psi*\neg A} x$. From (2) and (3), by $\oplus$SPU, it follows that $y\prec_{\Psi\contract A} x$, as required. Regarding the right-to-left-direction: Assume (4) $x\preceq_{\Psi} y$. From (1) and (4), we recover (5) $x\preceq_{\Psi*\neg A} y$. From (4) and (5), by $\oplus$WPU, it follows that $x\preceq_{\Psi\contract A} y$, as required. 

From CR$\ast 2$ to CR$\contract 2$: Similar proof as the one given for the derivation of CR$\contract 1$ from CR$\ast 1$.

From CR$\ast 3$ to CR$\contract 3$: Let $x\in\mods{ \neg A}$, $y\in\mods{A}$ and (1) $x\prec_{\Psi} y$. We must show that $x\prec_{\Psi\contract A} y$. From CR$\ast 3$, we recover (2) $x\prec_{\Psi * \neg A} y$. From (1) and (2), by $\oplus$SPU, we then obtain $x\prec_{\Psi\contract A} y$, as required.

From CR$\ast 4$ to CR$\contract 4$: Let $x\in\mods{\neg A}$, $y\in\mods{A}$ and (1) $x\preceq_{\Psi} y$. We must show that $x\preceq_{\Psi\contract A} y$. From CR$\ast 4$, we recover (2) $x\preceq_{\Psi * \neg A} y$. From (1) and (2), by $\oplus$WPU, we then obtain $x\preceq_{\Psi\contract A} y$, as required.
\end{proof}

\vspace{1em}

\begin{proposition}
Let $\ast$ be any revision operator satisfying C$\ast$1 and C$\ast$2 and $\contract$ be the contraction operator defined from * using any tpo aggregation function satisfying $\oplus$WPU, $\oplus$SPU and $\oplus$HI. Then $\contract$ satisfies PFI.
\end{proposition}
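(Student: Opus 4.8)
The plan is to recast PFI semantically, reducing each of its three clauses to a set-theoretic identity between sets of minimal worlds. Since $\oplus$ satisfies $\oplus$HI, the defined contraction inherits the Harper Identity, so I would first record that $\mods{\bel{(\Psi \contract A)\contract B}} = G \cup N$, where $G := \min(\preceq_{\Psi \contract A}, W) = \mods{\bel{\Psi \contract A}}$ and $N := \min(\preceq_{\Psi \contract A}, \mods{\neg B})$; this follows by applying HI to $(\Psi\contract A)\contract B$ together with the correspondence $\mods{\bel{\Phi \ast C}} = \min(\preceq_\Phi, \mods{C})$. By $\oplus$HI we have $G = \min(\preceq_\Psi, W) \cup \min(\preceq_{\Psi \ast \neg A}, W) \supseteq \min(\preceq_\Psi, W)$, and the hypothesis $B \in \bel{\Psi \contract A}$ gives $G \subseteq \mods{B}$, so $G \cap \mods{\neg B} = \emptyset$ and the $\neg B$-part of this model set is exactly $N$. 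Partitioning $W$ by the truth values of $A$ and $B$ and rewriting $\neg B \to \neg A \equiv B \vee \neg A$ and $\neg B \to A \equiv B \vee A$, the PFI antecedents become conditions on $N$: $\neg B \to \neg A \in \bel{(\Psi\contract A)\contract B}$ iff $N \cap \mods{A \wedge \neg B} = \emptyset$, and $\neg B \to A \in \bel{(\Psi\contract A)\contract B}$ iff $N \cap \mods{\neg A \wedge \neg B} = \emptyset$. Dually, unfolding HI with $\neg A \to B \equiv A \vee B$ and $A \to B \equiv \neg A \vee B$, the right-hand belief sets of PFI have model sets $G \cup \min(\preceq_\Psi, \mods{\neg A \wedge \neg B})$ and $G \cup \min(\preceq_\Psi, \mods{A \wedge \neg B})$ (using $\min(\preceq_\Psi, W) \subseteq G$ to absorb the global minima, and the duality $\mods{X \cap Y} = \mods{X} \cup \mods{Y}$).

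The proof then rests on two lemmas. The first is a general fact about restriction of minima: for any tpo $\preceq'$ and any $V \subseteq U \subseteq W$, the set $\min(\preceq', U) \cap V$ is either empty or equal to $\min(\preceq', V)$. The second is an agreement-preservation lemma for $\oplus$: if two input tpos restrict to the same order on a set $U$, then $\min(\preceq_1 \oplus \preceq_2, T) = \min(\preceq_1, T)$ for every $T \subseteq U$. I would prove the latter using only $\oplus$SPU and $\oplus$WPU: the $\subseteq$ inclusion via $\oplus$SPU (a strict $\preceq_1$-improver of $y$ inside $T$ is, by agreement, a strict $\preceq_2$-improver, hence a strict $\preceq_{1 \oplus 2}$-improver), the $\supseteq$ inclusion via $\oplus$WPU. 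Since, given the standing AGM assumptions, C$\ast$1 and C$\ast$2 are equivalent to CR$\ast$1 and CR$\ast$2, the pair $\preceq_\Psi$ and $\preceq_{\Psi \ast \neg A}$ agree on $\mods{A}$ and on $\mods{\neg A}$. As $\mods{A \wedge \neg B} \subseteq \mods{A}$ and $\mods{\neg A \wedge \neg B} \subseteq \mods{\neg A}$, the agreement lemma yields $\min(\preceq_{\Psi \contract A}, \mods{A \wedge \neg B}) = \min(\preceq_\Psi, \mods{A \wedge \neg B})$ and the analogous identity on $\mods{\neg A \wedge \neg B}$.

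With these in hand the three cases assemble directly. Assuming $B$ is not a tautology (the tautology case being degenerate, with $N = \emptyset$ and $\bel{(\Psi \contract A)\contract B} = \bel{\Psi \contract A}$), $N$ is nonempty. In case (a), $N \cap \mods{A \wedge \neg B} = \emptyset$ forces $N \subseteq \mods{\neg A \wedge \neg B}$, so by the first lemma $N = \min(\preceq_{\Psi \contract A}, \mods{\neg A \wedge \neg B})$, which by the second lemma equals $\min(\preceq_\Psi, \mods{\neg A \wedge \neg B})$; hence $G \cup N$ is precisely the model set of $\bel{\Psi \contract A} \cap \bel{\Psi \contract(\neg A \to B)}$, giving PFI(a). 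Case (c) is symmetric. In case (b) both intersections $N \cap \mods{A \wedge \neg B}$ and $N \cap \mods{\neg A \wedge \neg B}$ are nonempty, so by the first lemma each equals the corresponding full-cell minimum, and by the second lemma these are $\min(\preceq_\Psi, \mods{A \wedge \neg B})$ and $\min(\preceq_\Psi, \mods{\neg A \wedge \neg B})$; since $N \subseteq \mods{\neg B}$ is their union, $G \cup N$ matches the model set of the triple intersection in PFI(b).

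The step I expect to be most delicate is case (b), and in particular the use of the first lemma that lets $N$ split cleanly into the two full-cell minima: one must verify that the minimal $\neg B$-worlds lying in $\mods{A \wedge \neg B}$ really are the absolutely minimal worlds of that entire cell, and likewise for $\mods{\neg A \wedge \neg B}$, that is, that both cells contribute worlds of the same rank in $\preceq_{\Psi \contract A}$. The second point demanding care is the purely bookkeeping translation of PFI's syntactic case conditions into the set conditions $N \cap \mods{A \wedge \neg B} = \emptyset$ and $N \cap \mods{\neg A \wedge \neg B} = \emptyset$, which crucially uses $B \in \bel{\Psi \contract A}$ to discard the $G$-contribution to the $\neg B$-part, together with the material-conditional rewritings noted above.
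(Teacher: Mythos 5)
Your proof is correct, but it takes a genuinely different and more self-contained route than the paper's. The paper dispatches this proposition in two lines: it invokes its earlier result that CR$\ast$1 and CR$\ast$2 transfer to CR$\contract$1 and CR$\contract$2 under any combinator satisfying $\oplus$SPU and $\oplus$WPU, and then cites, as a black box, Theorem~1 of Ramachandran \emph{et al.}, according to which any contraction satisfying HI together with CR$\contract$1 and CR$\contract$2 satisfies PFI. You instead reprove that external theorem from scratch in semantic form: your agreement-preservation lemma is exactly CR$\contract$1 and CR$\contract$2 in disguise (agreement of $\preceq_{\Psi \contract A}$ with $\preceq_{\Psi}$ on $\mods{A}$-pairs and on $\mods{\neg A}$-pairs, obtained from CR$\ast$1, CR$\ast$2, $\oplus$SPU and $\oplus$WPU), and the remainder --- the decomposition $\mods{\bel{(\Psi \contract A)\contract B}} = G \cup N$ via $\oplus$HI, the translation of PFI's case conditions into emptiness conditions on $N \cap \mods{A \wedge \neg B}$ and $N \cap \mods{\neg A \wedge \neg B}$, and the restriction-of-minima fact that splits $N$ into full-cell minima --- is a direct verification of PFI. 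The two points you flag as delicate are both handled by your first lemma (a nonempty intersection of $\min(\preceq', U)$ with $V \subseteq U$ is all of $\min(\preceq', V)$), and the degenerate cases (inconsistent $\neg B$, or an empty cell $\mods{A \wedge \neg B}$ or $\mods{\neg A \wedge \neg B}$) go through as you indicate. What the paper's route buys is brevity; what yours buys is independence from the external citation and an explicit accounting of which combinator properties carry each step.
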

\begin{proof}
Assume that $\ast$ satisfies CR$\ast 1$ and CR$\ast 2$ and let  $\contract$ be the contraction operator defined from $\ast$ using some tpo aggregation function satisfying $\oplus$WPU, $\oplus$SPU and $\oplus$HI. We saw above, in Proposition \ref{DPtoC} that $\contract$  will also satisfy CR$\contract 1$ and CR$\contract 2$. The desired result then immediately follows from the theorem established by Ramachandran {\em et al} (2011, Theorem 1), according to which every contraction function $\contract$ obtained from a revision function $\ast$, such that $\contract$  and $\ast$ satisfy HI, satisfies PFI if it also satisfies CR$\contract 1$ and CR$\contract 2$.
\end{proof}

\vspace{1em}

\begin{theorem}
$\STQ$ is the only basic combinator that satisfies both $\oplus$SPU+ and the following `Parity' constraint:
\begin{tabbing}
BLAHBLI: \=\kill
($\oplus$PAR)   \>  If $x \prec_{1 \oplus 2} y$ then for each $i \in \{1,2\}$ there exists $z$ s.t. $x \sim_{1 \oplus 2} z$ and \\
\>$z \prec_i y$\\[-0.25em]
\end{tabbing} 
\end{theorem}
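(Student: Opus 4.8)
The plan is to split the ``only'' claim into its two natural halves: first, that $\STQ$ really is a basic combinator satisfying both $\oplus$SPU+ and $\oplus$PAR; and second, that any basic combinator satisfying these two principles must agree with $\STQ$ on every pair $\langle\preceq_1,\preceq_2\rangle$. Since $\STQ$ is a TeamQueue combinator, it is automatically basic and satisfies $\oplus$SPU+ (both being facts already recorded for the family), so the only non-routine part of the first half is checking $\oplus$PAR, and the bulk of the work lies in the uniqueness direction. Throughout I write $\langle T_1,\ldots,T_m\rangle$ for the partition of $\preceq_{1\STQ 2}$, so that $T_p=\min(\preceq_1,R_p)\cup\min(\preceq_2,R_p)$ with $R_p=\bigcap_{k<p}T_k^c$.

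To verify $\oplus$PAR for $\STQ$, I would argue directly from the queue construction. Suppose $x\prec_{1\STQ 2}y$ with $x\in T_p$ and $y\in T_q$, $p<q$, and fix $i\in\{1,2\}$. I take the witness $z$ to be any element of $\min(\preceq_i,R_p)$. Then $z\in T_p$, so $x\sim_{1\STQ 2}z$; and since $q>p$ we have $y\in T_q\subseteq R_p$, whence $z\preceq_i y$ by minimality of $z$ in $R_p$. The one point needing an argument is strictness: if we had $z\sim_i y$, then $y$ would itself be $\preceq_i$-minimal in $R_p$, placing $y\in T_p$ and contradicting $q>p$. Hence $z\prec_i y$, exactly as $\oplus$PAR demands.

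For uniqueness, let $\oplus$ be any basic combinator satisfying $\oplus$SPU+ and $\oplus$PAR, write $\langle U_1,U_2,\ldots\rangle$ for the partition of $\combi$, and prove $U_i=T_i$ by induction on $i$. The base case $U_1=T_1$ is just $\oplus$HI. For the inductive step, set $R=\bigcap_{k<i}U_k^c$, which by the hypothesis $U_k=T_k$ ($k<i$) equals $\bigcap_{k<i}T_k^c$; then $U_i=\min(\combi,R)$ while $T_i=\min(\preceq_1,R)\cup\min(\preceq_2,R)$, so it suffices to prove $\min(\combi,R)=\min(\preceq_1,R)\cup\min(\preceq_2,R)$. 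The inclusion $\subseteq$ is simply $\oplus$UB (equivalent to $\oplus$SPU+ by Proposition \ref{SWPUandULB}) applied at $S=R$. For $\supseteq$, suppose $z\in\min(\preceq_1,R)$ but $z\notin\min(\combi,R)$; then some $w\in R$ satisfies $w\prec_{1\oplus 2}z$, and $\oplus$PAR with $i=1$ produces $z'$ with $w\sim_{1\oplus 2}z'$ and $z'\prec_1 z$. Now $R=\bigcup_{k\ge i}U_k$ is a union of complete $\combi$-cells, so $z'\sim_{1\oplus 2}w\in R$ forces $z'\in R$, and then $z'\prec_1 z$ contradicts $z\in\min(\preceq_1,R)$; the case $z\in\min(\preceq_2,R)$ is symmetric (using $i=2$). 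I note in passing that the argument never invokes $\oplus$WPU+, in keeping with the remark that it is derivable from the other two.

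The conceptual heart of the proof — and the step I expect to be the main obstacle — is precisely the $\supseteq$ inclusion above, together with the realisation that $\oplus$PAR yields $\min(\preceq_1,S)\cup\min(\preceq_2,S)\subseteq\min(\combi,S)$ \emph{only} when $S$ is an up-set of $\combi$ (a union of top cells), and not for arbitrary $S$. This restriction is not a technical blemish but the whole point: demanding the inclusion for all $S$ would amount to $\oplus$EHI, which Proposition \ref{EHICtriv} shows to be untenable. The induction succeeds exactly because each residual set $R$ arising in the construction is $\sim_{1\oplus 2}$-closed, so the witness $z'$ handed back by $\oplus$PAR is guaranteed to lie in $R$; carefully tracking this closure property, and nothing stronger, is where the real care is required.
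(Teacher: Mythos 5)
Your proof is correct and follows essentially the same route as the paper's: an induction on the ranks of the two partitions, using $\oplus$SPU+ (via $\oplus$UB) for the inclusion of the $\combi$-cell in the $\STQ$-cell and $\oplus$PAR for the converse, with the key observation that the residual set at each stage is a union of complete $\combi$-cells so that the witness supplied by $\oplus$PAR stays inside it. The only difference is that you also verify explicitly that $\STQ$ itself satisfies $\oplus$PAR, a step the paper's proof leaves implicit.
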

\vspace{-0.5em}
\begin{proof}
We need to show that if $\oplus$ satisfies $\oplus$SPU+ and $\oplus$PAR, for any $\preceq_1, \preceq_2$, we have $\preceq_{1\oplus 2}=\preceq_{1\STQ 2}$. Assume that $\preceq_{1\oplus 2}=\{S_1, S_2,\ldots, S_m\}$ and $\preceq_{1\STQ 2}=\{T_1, T_2,\ldots, T_n\}$, where $S_i$, $T_i$ are the ranks of the relevant tpos, with lower ranks being the most preferred.

We will prove, by induction on $i$, that $S_i=T_i$, $\forall i$. Assume $S_j=T_j$, $\forall j < i$. We must show $S_i=T_i$.

Regarding $S_i\subseteq T_i$: Let $x\in S_i$, so that $x\preceq_{1\oplus 2}y$, $\forall y\in \bigcap_{j<i} S^{\mathsf{c}}_j$. Assume for reductio that $x\notin T_i$. Since $x\in S_i$, we know that $x\in\bigcap_{j<i} S^{\mathsf{c}}_j=\bigcap_{j<i} T^{\mathsf{c}}_j$. Hence, since $x\notin T_i$ and, by construction of $\preceq_{1\STQ 2}$, there exists $y_1\in \bigcap_{j<i} T^{\mathsf{c}}_j$ such that $y_1\prec_{1} x$ and there exists $y_2\in \bigcap_{j<i} T^{\mathsf{c}}_j$ such that $y_2\prec_{2} x$. Then, by $\oplus$SPU+, either $y_1\prec_{1\STQ 2} x$ or $y_2\prec_{1\STQ 2} x$, in both cases contradicting  $x\preceq_{1\oplus 2}y$, $\forall y\in \bigcap_{j<i} S^{\mathsf{c}}_j$.Hence $x\in T_i$, as required.

Regarding $T_i\subseteq S_i$: Let $x\in T_i$. Then, by construction of $\preceq_{1\STQ 2}$, we have $x\in\min(\preceq_1, \bigcap_{j<i} T^{\mathsf{c}}_j)\cup \min(\preceq_2, \bigcap_{j<i} T^{\mathsf{c}}_j)$. Assume for reductio that $x\notin S_i$. We know that $x\in \bigcap_{j<i} T^{\mathsf{c}}_j$, so by the inductive hypothesis, $x\in \bigcap_{j<i} S^{\mathsf{c}}_j$. From this and $x\notin S_i$ we know that there exists a $y\in S_i$, such that $y\prec_{1\oplus 2}x$. Then from $\oplus$PAR, there exist a $z_1\in S_i$ such that $z_1\prec_1 x$ and a $z_2\in S_i$ such that $z_2\prec_2 x$. But this contradicts $x\in\min(\preceq_1, \bigcap_{j<i} T^{\mathsf{c}}_j)\cup \min(\preceq_2, \bigcap_{j<i} T^{\mathsf{c}}_j)$. Hence $x\in S_i$, as required.

\end{proof}

\vspace{1em}

\begin{proposition}
$\oplus$PAR is equivalent to: 
\begin{tabbing}
BLAHBLI: \=\kill
($\oplus$SB)   \>  If $x \prec_{1\oplus 2} y$ for every $x \in S^c$, $y \in S$, then $\min( \preceq_{1},S)\cup\min(\preceq_{2},S)\subseteq $\\
\>$\min( \preceq_{1\oplus 2},S)$\\[-0.25em]
\end{tabbing} 
\end{proposition}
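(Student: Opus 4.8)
The plan is to prove the two implications separately. Both directions hinge on the observation that the antecedent of $\oplus$SB forces $S$ to be a \emph{strict} final segment of $\combi$: every $S^c$-world lies strictly below every $S$-world. I would also note at the outset that, as $W$ is finite, all the minima involved are nonempty.

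For the direction from $\oplus$PAR to $\oplus$SB, let $S$ satisfy the antecedent, i.e.\ $a \prec_{1\oplus 2} b$ for all $a \in S^c$, $b \in S$, and take an arbitrary $w \in \min(\preceq_1, S)$ (the $\preceq_2$ case being symmetric). Supposing for contradiction that $w \notin \min(\combi, S)$, there is $u \in S$ with $u \prec_{1\oplus 2} w$. Applying $\oplus$PAR to this strict comparison with $i = 1$ yields a witness $z$ with $z \sim_{1\oplus 2} u$ and $z \prec_1 w$. The key move is that $z$ cannot lie in $S^c$: any $S^c$-world is strictly below the $S$-world $u$, which is incompatible with $z \sim_{1\oplus 2} u$. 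Hence $z \in S$, and $z \prec_1 w$ contradicts the $\preceq_1$-minimality of $w$ in $S$. This gives $\min(\preceq_1,S) \subseteq \min(\combi, S)$, and symmetrically for $\preceq_2$.

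For the converse, given $x \prec_{1\oplus 2} y$ I would choose $S = \{w \mid r(w,\combi) \geq r(x,\combi)\}$, the union of all $\combi$-ranks from $x$'s rank upward, and write $R$ for $x$'s own rank. Then $S$ satisfies the antecedent of $\oplus$SB and $\min(\combi, S) = R$; moreover $y \in S \setminus R$, since $r(y,\combi) > r(x,\combi)$. Fixing $i = 1$, pick any $z \in \min(\preceq_1, S)$. By $\oplus$SB, $z \in \min(\combi, S) = R$, so $z \sim_{1\oplus 2} x$, which supplies the first conjunct of $\oplus$PAR; and $z \preceq_1 y$ since $y \in S$.

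The main obstacle is upgrading $z \preceq_1 y$ to the required strict $z \prec_1 y$. I would handle this by contradiction: were $z \sim_1 y$, then $y$ too would be $\preceq_1$-minimal in $S$, so $\oplus$SB would force $y \in \min(\combi, S) = R$, contradicting $y \notin R$. Hence $z \prec_1 y$, and the symmetric choice $z \in \min(\preceq_2, S)$ handles $i = 2$, completing $\oplus$PAR.
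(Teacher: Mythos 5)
Your proof is correct. The direction from $\oplus$PAR to $\oplus$SB is essentially the paper's own argument: the witness $z$ supplied by $\oplus$PAR cannot lie in $S^c$, because the antecedent of $\oplus$SB puts every $S^c$-world strictly $\preceq_{1\oplus 2}$-below every $S$-world and this is incompatible with $z\sim_{1\oplus 2}u$ for $u\in S$; hence $z\in S$ and $z\prec_1 w$ contradicts the $\preceq_1$-minimality of $w$ in $S$. The converse is where you genuinely depart from the paper. Both proofs use the same witness set $S=\{w\mid x\preceq_{1\oplus 2}w\}$, but the paper argues by contraposition: assuming $\oplus$PAR fails at $(x,y)$, it tries to conclude $y\in\min(\preceq_1,S)$, and in doing so asserts that a $z\in S$ with $z\prec_1 y$ ``may be assumed'' to satisfy $x\sim_{1\oplus 2}z$ --- a step that does not follow as written, since $z\in S$ only yields $x\preceq_{1\oplus 2}z$. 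Your direct argument sidesteps this: you apply $\oplus$SB positively to place any $z\in\min(\preceq_1,S)$ in the bottom $\preceq_{1\oplus 2}$-rank $R$ of $S$, which gives $z\sim_{1\oplus 2}x$ outright, and you then discharge the one extra obligation the direct route creates --- upgrading $z\preceq_1 y$ to $z\prec_1 y$ --- by observing that $z\sim_1 y$ would force $y\in\min(\preceq_1,S)\subseteq R$, contradicting $x\prec_{1\oplus 2}y$. The net effect is a converse direction that is both more transparent and more watertight than the published one, at the modest cost of invoking $\oplus$SB twice.
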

\vspace{-0.5em}
\begin{proof} 
From $\oplus$PAR to $\oplus$SB: Assume that $x \prec_{1\oplus 2} y$ for every $x \in S^c$, $y \in S$. It suffices to show that $\min( \preceq_{1}, S )\cup\min(\preceq_{2}, S )\subseteq \min( \preceq_{1\oplus 2}, S )$. So assume $x\in \min( \preceq_{1}, S )\cup\min(\preceq_{2}, S )$ but, for contradiction, $x\notin  \min(\preceq_{1\oplus 2},  S )$. Then $y\prec_{1\oplus 2} x$ for some $y\in S$. From the latter, by $\oplus$PAR, we know that $z_1\prec_{1}x$ for some $z_1$ such that $y\preceq_{1\oplus 2} z_1$ and $z_2\prec_{2}x$ for some $z_2$ such that $y\preceq_{1\oplus 2} z_2$. Given our initial assumption, we can deduce from $y\preceq_{1\oplus 2} z_1$, $y\preceq_{1\oplus 2} z_2$ and $y\in S$ that $z_1, z_2\in  S $. But this, together with $z_1\prec_{1}x$ and $z_2\prec_{2}x$ contradicts $x\in \min( \preceq_{1},  S )\cup\min(\preceq_{2},  S )$. Hence $x\in  \min(\preceq_{1\oplus 2},  S )$, as required.

From $\oplus$SB 
to $\oplus$PAR: Suppose $\oplus$PAR does not hold, i.e.~$\exists x,y$, such that $x\prec_{1\oplus 2}y$ and for no $z$ do we have $x\sim_{1\oplus 2}z$ and $z\prec_{1}y$ (similar reasoning will apply if we replace $\prec_{1}$ by $\prec_{2}$ here). We will show that $\oplus$SB fails, i.e.~ that $\exists S\subseteq W$, such that $x \prec_{\Psi_{1\oplus 2}} y$ for every $x \in S^c$, $y \in S$ and $\min(\preceq_{1},  S )\cup \min(\preceq_{2},  S )\nsubseteq\min(\preceq_{1\oplus 2},  S )$.


Let $S=\{w\mid x\preceq_{1\oplus 2} w\}$ (so that $S^c=\{w\mid w\prec_{1\oplus 2} x\}$). 
Clearly $x\in  S $ and, from $x\prec_{1\oplus 2} y$, we know that $y\in S $ but $y\notin\min(\preceq_{1\oplus 2}, S )$. Hence, to show $\min(\preceq_{1},  S )\cup \min(\preceq_{2},  S )\nsubseteq\min(\preceq_{1\oplus 2},  S )$ and therefore that $\oplus$SB fails, it suffices to show $y\in\min(\preceq_{1}, S )$. But if $y\notin\min(\preceq_{1}, S )$, then $z\prec_{1}y$ for some $z\in S $, i.e. some $z$, such that $x\preceq_{1\oplus 2} z$. Since $\preceq_{1 \oplus 2}$ is a tpo we may assume $x\sim_{1\oplus 2} z$. This contradicts our initial assumption that for no $z$ do we have $x\sim_{1\oplus 2}z$ and $z\prec_{1}y$ . Hence $y\in\min(\preceq_{1}, S )$, as required.
\end{proof}

\vspace{1em}

\begin{proposition}
Let $\ast$ be any revision operator--such as the natural or restrained revision operator--satisfying the following property:
\begin{tabbing}
BLAHBLI \= BLAB\=\kill
  \>  If $x,y\notin\min(\preceq_{\Psi},\mods{A})$ and $x\prec_{\Psi} y$, then $x\prec_{\Psi \ast A} y$\\[-0.25em]
\end{tabbing} 
\vspace{-0.5em}
Let $\contract$ be the contraction operator defined from $\ast$ using $\STQ$. Then $\contract$ is the natural contraction operator.
\end{proposition}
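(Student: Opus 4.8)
The plan is to compute the ordered partition $\langle T_1, T_2, \ldots, T_m\rangle$ of $\preceq_{\Psi \contract A} = \preceq_{\Psi} \STQ \preceq_{\Psi \ast \neg A}$ directly from the $\STQ$ construction and to check that, rank for rank, it coincides with the ordering prescribed by natural contraction. Write $\preceq := \preceq_{\Psi}$ and $\preceq' := \preceq_{\Psi \ast \neg A}$, and recall that since $\STQ$ takes $a_{\preceq_1,\preceq_2}(i) = \{1,2\}$ throughout, we have $T_i = \min(\bigcap_{k<i} T_k^c, \preceq) \cup \min(\bigcap_{k<i} T_k^c, \preceq')$ for every $i$.

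First I would settle the bottom rank. By the AGM representation result, $\min(\preceq', W) = \mods{[\Psi \ast \neg A]} = \min(\preceq, \mods{\neg A})$, so
\[
T_1 = \min(\preceq, W) \cup \min(\preceq', W) = \min(\preceq, W) \cup \min(\preceq, \mods{\neg A}),
\]
which is precisely the set that natural contraction places at the lowest rank. (This is just the semantic form of HI, guaranteed because every $\STQ$ combinator is basic.)

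The crux is to show that \emph{above} $T_1$ the combination reproduces $\preceq$ unchanged. The key observation is that every world surviving the first round lies outside $\min(\preceq, \mods{\neg A})$: since $T_1 \supseteq \min(\preceq, \mods{\neg A})$, we have $\bigcap_{k<i} T_k^c \subseteq T_1^c \subseteq (\min(\preceq, \mods{\neg A}))^c$ for all $i \geq 2$. Hence the displayed property of $\ast$, instantiated with the revision input $\neg A$ (so that $\mods{A}$ there becomes $\mods{\neg A}$), applies to every pair $x, y$ drawn from $\bigcap_{k<i} T_k^c$: whenever $x \prec y$ we obtain $x \prec' y$. From this I can read off that, for each $i \geq 2$, $\min(\bigcap_{k<i} T_k^c, \preceq') \subseteq \min(\bigcap_{k<i} T_k^c, \preceq)$ --- for if some surviving $x$ were $\preceq'$-minimal but not $\preceq$-minimal, then a surviving $y$ with $y \prec x$ would, by the property, satisfy $y \prec' x$, contradicting $\preceq'$-minimality. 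Consequently $T_i = \min(\bigcap_{k<i} T_k^c, \preceq)$ for all $i \geq 2$, so $\langle T_2, \ldots, T_m\rangle$ is exactly the ordered partition that $\preceq$ induces on $T_1^c$; that is, $\preceq_{\Psi \contract A}$ agrees with $\preceq_{\Psi}$ off the bottom rank. Combined with the computation of $T_1$, this is the definition of natural contraction.

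I expect the main obstacle to be expository rather than mathematical. One must keep track of the fact that the additional property constrains only those pairs that \emph{both} avoid $\min(\preceq, \mods{\neg A})$, so the entire argument hinges on $T_1$ absorbing all of $\min(\preceq, \mods{\neg A})$ into the bottom rank, which is exactly what guarantees that every later comparison falls within the property's scope. The one point requiring a little care with the induction is verifying that the surviving partition genuinely preserves the relative $\preceq$-order of the non-promoted worlds (neither refining nor coarsening it), which follows because, for $x, y \in T_1^c$, $x \preceq y$ holds in $\preceq$ iff it holds in the restriction $\preceq|_{T_1^c}$ whose ranks the sets $\min(\bigcap_{k<i} T_k^c, \preceq)$ peel off.
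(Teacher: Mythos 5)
Your proof is correct, but it takes a genuinely different route from the paper's. The paper verifies the identity of the two tpos pairwise: it fixes $x,y$, splits into cases according to membership in $\min(\preceq_{\Psi},\mods{\neg A})\cup\min(\preceq_{\Psi},W)$, and argues by reductio using the abstract properties $\oplus$PAR, $\oplus$SPU and $\oplus$WPU of $\STQ$ together with CR$\ast$1, CR$\ast$2 and CR$\ast$4. You instead compute the ordered partition of $\preceq_{\Psi}\STQ\preceq_{\Psi\ast\neg A}$ layer by layer: faithfulness gives $\min(\preceq_{\Psi\ast\neg A},W)=\mods{[\Psi\ast\neg A]}=\min(\preceq_{\Psi},\mods{\neg A})$, hence the correct bottom rank $T_1$; and since every world surviving past $T_1$ avoids $\min(\preceq_{\Psi},\mods{\neg A})$, the hypothesis on $\ast$ (instantiated at $\neg A$) makes $\preceq_{\Psi\ast\neg A}$ refine $\preceq_{\Psi}$ on the remainder, so each later $T_i$ collapses to the next $\preceq_{\Psi}$-layer of $T_1^c$. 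This is more direct and more economical in hypotheses---it never invokes the DP postulates, which the proposition does not explicitly list among its assumptions, whereas the paper's argument does---and it makes transparent exactly what the displayed property buys: it is precisely the condition under which the union of the two minima degenerates to the $\preceq_{\Psi}$-minimum above the bottom rank. The paper's argument, for its part, stays at the level of the postulates characterising $\STQ$ rather than its construction. When writing yours up, do spell out (i) the appeal to faithfulness, which is a standing assumption of the framework rather than a stated hypothesis of the proposition, and (ii) the closing induction showing that peeling off successive $\preceq_{\Psi}$-minimal layers of $T_1^c$ reproduces $\preceq_{\Psi}$ restricted to $T_1^c$, which you rightly flag but should not leave implicit.
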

\begin{proof}
Recall the definition of natural contraction:
\begin{tabbing}
BLAHBLI: \= BLAB\=\kill
($\contract$NAT)   \> $x\preceq_{\Psi\contract A} y$ iff\\
\> (a) \> $x\in \min(\preceq_{\Psi}, \mods{\neg A})\cup\min(\preceq_{\Psi}, W)$, or\\
\> (b) \> $x,y\notin \min(\preceq_{\Psi}, \mods{\neg A})\cup\min(\preceq_{\Psi}, W)$ and $x\preceq_{\Psi} y$ \\[-0.25em]
\end{tabbing} 
\vspace{-0.5em} 
We must show that for any $x, y\in W$ and $A\in L$, $x\preceq_{\Psi\contract A} y$ iff $x\preceq_{\Psi\contract_N A} y$. We split into two cases.

Case 1: $x\in \min(\preceq_{\Psi}, \mods{\neg A})\cup\min(\preceq_{\Psi}, W)$. Then, by the definitions of $\contract_N$ and $\contract$, we have both $x\preceq_{\Psi\contract A} y$ and $x\preceq_{\Psi\contract_N A} y$, so the desired result holds.

Case 2: $x\notin \min(\preceq_{\Psi}, \mods{\neg A})\cup\min(\preceq_{\Psi}, W)$. Then by definition of $\contract_N$, $x\preceq_{\Psi\contract_N A} y$ iff both $y\notin \min(\preceq_{\Psi}, \mods{\neg A})\cup\min(\preceq_{\Psi}, W)$ and $x\preceq_{\Psi} y$. We now consider each direction of the equivalence to be demonstrated separately.
\begin{itemize}
 
\item[-]  From $x\preceq_{\Psi\contract_N A} y$ to $x\preceq_{\Psi\contract A} y$: Suppose $x\preceq_{\Psi\contract_N A} y$, and hence that both $y\notin \min(\preceq_{\Psi}, \mods{\neg A})\cup\min(\preceq_{\Psi}, W)$ and $x\preceq_{\Psi} y$. Assume for reductio that $y\prec_{\Psi\contract A} x$. By $\oplus$PAR: if $y\prec_{\Psi\contract A} x$, then there exists $z$ such that $z\sim_{\Psi\contract A} y$ and $z\prec_{\Psi} x$. Hence there exists $z$ such that $z\sim_{\Psi\contract A} y$ and $z\prec_{\Psi} x$. Since $x\preceq_{\Psi} y$, we therefore also have $z\prec_{\Psi} y$. If $z\notin\min(\preceq_{\Psi}, \mods{\neg A})$, then from the postulate mentioned in the proposition, we get $z\prec_{\Psi\ast\neg A} y$ and then $z\prec_{\Psi\contract A} y$. Contradiction. Hence we can assume $z\in\min(\preceq_{\Psi}, \mods{\neg A})$. From $x\preceq_{\Psi} y$, $y\prec_{\Psi\contract A} x$ and $\oplus$WPU, we know that $y\prec_{\Psi\ast\neg A}x$. From this, CR$\ast$2,  CR$\ast$4 and $x\preceq_{\Psi} y$, we get $y\in\mods{\neg A}$. Hence, from $z\prec_{\Psi}y$ and CR$\ast$1, we recover $z\prec_{\Psi\ast\neg A} y$ and then $z\prec_{\Psi\contract A} y$ by $\oplus$SPU. Contradiction again. Hence $x\preceq_{\Psi\contract A} y$, as required.

\item[-]  From $x\preceq_{\Psi\contract A} y$ to $x\preceq_{\Psi\contract_N A} y$: Assume that $x\preceq_{\Psi\contract A} y$ and, for reductio, that either  $y\prec_{\Psi} x$ or $y\in \min(\preceq_{\Psi}, \mods{\neg A})\cup\min(\preceq_{\Psi}, W)$. If the latter holds, then we know that $y\in\min(\preceq_{\Psi\contract A}, W)$, by definition of $\contract$. Hence, from this and $x\preceq_{\Psi\contract A} y$, we also deduce that $x\in \min(\preceq_{\Psi}, \mods{\neg A})\cup\min(\preceq_{\Psi}, W)$, contradicting the assumption that $x\notin \min(\preceq_{\Psi}, \mods{\neg A})\cup\min(\preceq_{\Psi}, W)$. So assume that $y\notin \min(\preceq_{\Psi}, \mods{\neg A})\cup\min(\preceq_{\Psi}, W)$ and $y\prec_{\Psi} x$. From the latter and our assumption that $x\preceq_{\Psi\contract A} y$, it follows by $\oplus$SPU that $x\preceq_{\Psi\neg  A} y$. But it also follows from $y\notin \min(\preceq_{\Psi}, \mods{\neg A})\cup\min(\preceq_{\Psi}, W)$ and $y\prec_{\Psi} x$ that $x, y\notin \min(\preceq_{\Psi}, \mods{\neg A})$. We then recover, from the property mentioned in the proposition, the result that $x\preceq_{\Psi} y$, contradicting our assumption that $y\prec_{\Psi} x$. Hence, $x\preceq_{\Psi\contract_N A} y$, as required.

\end{itemize}

\end{proof}

\vspace{1em}

\bibliographystyle{named}
\bibliography{EHIarXiv}

\end{document}